\def\eqref#1{equation~\ref{#1}}
\def\1{\bm{1}}
\def\vmu{{\bm{\mu}}}
\def\vsigma{{\bm{\sigma}}}
\def\vtheta{{\bm{\theta}}}
\def\vphi{{\bm{\phi}}}
\def\vh{{\bm{h}}}
\def\vr{{\bm{r}}}
\def\vt{{\bm{t}}}
\def\vx{{\bm{x}}}
\def\vy{{\bm{y}}}
\def\vz{{\bm{z}}}
\def\mX{{\bm{X}}}
\DeclareMathAlphabet{\mathsfit}{\encodingdefault}{\sfdefault}{m}{sl}
\SetMathAlphabet{\mathsfit}{bold}{\encodingdefault}{\sfdefault}{bx}{n}
\def\gD{{\mathcal{D}}}
\def\gL{{\mathcal{L}}}
\def\gN{{\mathcal{N}}}
\def\gP{{\mathcal{P}}}
\def\gS{{\mathcal{S}}}
\def\gU{{\mathcal{U}}}
\def\gX{{\mathcal{X}}}
\def\gY{{\mathcal{Y}}}
\newcommand{\E}{\mathbb{E}}
\newcommand{\R}{\mathbb{R}}
\newcommand{\Cov}{\mathrm{Cov}}
  \newlist{inlinelist}{enumerate*}{1}
  \setlist*[inlinelist,1]{%
          label=(\roman*),
      }
\pgfplotsset{compat=1.13}
\theoremstyle:=theorem,corollary,lemma,model,definition,remark,plain\do{%
        \expandafter\g@addto@macro\csname th@\theoremstyle\endcsname{%
            \setlength\thm@preskip\parskip
            \setlength\thm@postskip\parskip
            \addtolength\thm@preskip\parskip
            }%
        }
\theoremstyle{definition}
\newtheorem*{theorem*}{Thm}
\newtheorem{lemma}{Lem}
\newtheorem*{lemma*}{Lem}
\newtheorem*{remark*}{Rem}
\newtheorem{definition}{Def}
\newtheorem*{definition*}{Def}
\newtheorem{proposition}{Prop}
\newtheorem*{proposition*}{Prop}
\newtheorem*{property*}{Prop}
\crefname{thm}{Thm}{Thms}
\Crefname{thm}{Thm}{Thms}
\crefname{lem}{Lem}{Lems}
\Crefname{lem}{Lem}{Lems}
\crefname{proposition}{Prop}{Props}
\Crefname{proposition}{Prop}{Props}
\crefname{definition}{Def}{Defs}
\Crefname{definition}{Def}{Defs}
\crefname{property}{Prop}{Props}
\Crefname{property}{Prop}{Props}
\crefname{equation}{Eq}{Eqs}
\Crefname{equation}{Eq}{Eqs}
\crefname{section}{Sec}{Secs}
\Crefname{section}{Sec}{Secs}
\crefname{figure}{Fig}{Figs}
\Crefname{figure}{Fig}{Figs}
\crefname{table}{Tab}{Tabs}
\Crefname{table}{Tab}{Tabs}
\crefname{appendix}{App}{Apps}
\Crefname{appendix}{App}{Apps}
\newcolumntype{L}{>{\RaggedRight\arraybackslash}X}
\renewcommand{\max}{\operatorname{max}}
\definecolor{tabgreen}{HTML}{59a14f}
\newcommandx{\wpbnote}[2][1=]{\todo[linecolor=red,backgroundcolor=red!25,bordercolor=red,#1]{WPB: #2}}
\newcommandx{\jrrnote}[2][1=]{\todo[linecolor=blue,backgroundcolor=blue!25,bordercolor=blue,#1]{JRR: #2}}
\newcommandx{\aykfnote}[2][1=]{\todo[linecolor=purple,backgroundcolor=purple!25,bordercolor=purple,#1]{AYKF: #2}}
\newcommandx{\jgnote}[2][1=]{\todo[linecolor=green,backgroundcolor=green!25,bordercolor=green,#1]{JG: #2}}
\newcommandx{\retnote}[2][1=]{\todo[linecolor=yellow,backgroundcolor=yellow!25,bordercolor=yellow,#1]{RET: #2}}
\newcommandx{\ydnote}[2][1=]{\todo[linecolor=orange,backgroundcolor=orange!25,bordercolor=orange,#1]{YD: #2}}
\newcommandx{\thiswillnotshow}[2][1=]{\todo[disable,#1]{#2}}
\setlist[enumerate]{topsep=0pt,itemsep=-1ex,partopsep=1ex,parsep=1ex,leftmargin=15pt}
\setlist[itemize]{topsep=0pt,itemsep=-0.3ex,partopsep=1ex,parsep=1ex,leftmargin=15pt}
\titlespacing{\section}{0pt}{2ex}{1ex}
\titlespacing{\subsection}{0pt}{1ex}{0ex}
\titlespacing{\subsubsection}{0pt}{0.5ex}{0ex}
\renewcommand{\paragraph}[1]{{\textbf{#1.}}}
\title{Meta-Learning Stationary Stochastic Process Prediction with Convolutional Neural Processes}
\author{%
  Andrew Y.~K.~Foong\thanks{Authors contributed equally.}\\
  University of Cambridge \\
  \texttt{ykf21@cam.ac.uk} \\
  \And
  Wessel P.~Bruinsma\footnotemark[1] \\
  University of Cambridge \\
  Invenia Labs \\
  \texttt{wpb23@cam.ac.uk} \\
  \And  
  Jonathan Gordon\footnotemark[1]\\
  University of Cambridge \\
  \texttt{jg801@cam.ac.uk} \\
  \And
  Yann Dubois \\
  University of Cambridge \\
  \texttt{yanndubois96@gmail.com} \\
  %
  \And
  James Requeima \\
  University of Cambridge \\
  Invenia Labs \\
  \texttt{jrr41@cam.ac.uk} \\
  \And
  Richard E.~Turner \\
  University of Cambridge \\
  Microsoft Research \\
  \texttt{ret26@cam.ac.uk} \\
}
\begin{document}

\maketitle

\begin{abstract}
    Stationary stochastic processes (SPs) are a key component of many probabilistic models, such as those for off-the-grid spatio-temporal data. They enable the statistical symmetry of underlying physical phenomena to be leveraged, thereby aiding generalization.
    Prediction in such models can be viewed as a \emph{translation equivariant} map from observed data sets to predictive SPs, emphasizing the intimate relationship between stationarity and equivariance.
    Building on this, we propose the Convolutional Neural Process (ConvNP), which endows Neural Processes (NPs) with translation equivariance and extends convolutional conditional NPs to allow for dependencies in the predictive distribution. 
    The latter enables ConvNPs to be deployed in settings which require coherent samples, such as Thompson sampling or conditional image completion. 
    Moreover, we propose a new maximum-likelihood objective to replace the standard ELBO objective in NPs, which conceptually simplifies the framework and empirically improves performance. 
    We demonstrate the strong performance and generalization capabilities of ConvNPs on 1D regression, image completion, and various tasks with real-world spatio-temporal data.
\end{abstract}

\section{Introduction}
\label{sec:introduction}
Incorporating appropriate inductive biases into machine learning models is key to achieving good generalization performance.
Consider, for example, predicting rainfall at an unseen test location from rainfall measurements nearby.
A powerful inductive bias for this task is \textit{stationarity}: the assumption that the generative process governing rainfall is spatially homogeneous. 
Given only observations in a limited part of the space, stationarity allows the model to extrapolate to yet unobserved regions. 
Closely related to stationarity is \textit{translation equivariance} (TE). TE formalizes the intuitive idea that if observations are shifted in time or space, then the resulting predictions should be shifted by the same amount.
When stationarity or TE is appropriate, e.g.\ in time-series \citep{roberts2013gaussian}, images \citep{lecun1998gradient}, and spatio-temporal modelling \citep{delhomme1978kriging, cressie1990origins}, incorporating them into our models yields significant benefits. 

A general framework for these tasks is to view them as prediction of a \emph{stochastic process} (SP; \citep{ross1996stochastic}).
This principled approach has inspired a new set of deep learning architectures that bring the expressivity and fast test-time inference of deep learning to SP modelling.
\textit{Conditional Neural Processes} (CNPs; \citep{garnelo2018conditional}) use neural networks to directly parameterize a map from data sets to predictive SPs, which is trained via meta-learning \citep{schmidhuber1987evolutionary, thrun2012learning}.
However, CNPs suffer from several drawbacks that inhibit their use in scenarios where other SP models, e.g. Gaussian processes (GPs; \citep{rasmussen2003gaussian}), often succeed.
First, vanilla CNPs cannot account for TE as an inductive bias.
This was recently addressed with the introduction of ConvCNPs \citep{gordon2020convolutional}.
Second, both CNPs and ConvCNPs are limited to factorized, parametric predictive distributions. 
This makes them unsuitable for producing coherent predictive function samples or modelling complicated likelihoods.
\emph{Neural Processes} (NPs; \citep{garnelo2018neural}), a latent variable extension of CNPs, were introduced to enable richer joint predictive distributions.
However, the NP training procedure uses variational inference (VI) and amortization, which are known to suffer from certain drawbacks \citep{turner+sahani:2011a,cremer2018inference}.
Moreover, existing NPs do not incorporate TE. 

This paper builds on ConvCNPs and NPs \citep{garnelo2018neural, gordon2020convolutional} to develop \textit{Convolutional Neural Processes} (ConvNPs).
ConvNPs are a map from data sets to predictive SPs that is both TE \emph{and} capable of expressing complex joint distributions.
As training ConvNPs with VI poses technical and practical issues, we instead propose a simplified maximum-likelihood objective, which directly targets the predictive SP.
We show that ConvNPs produce compelling samples and generalize effectively, making them suitable for a broad range of spatio-temporal prediction tasks.
Our key contributions are:
\begin{enumerate}
    \item We introduce ConvNPs, extending ConvCNPs to model rich joint predictive distributions.
    \item We propose a simplified training procedure, discarding VI in favor of an approximate maximum-likelihood procedure, which improves performance for ConvNPs.
    \item We demonstrate the usefulness of ConvNPs on toy time-series experiments, image-based sampling and extrapolation, and real-world environmental data sets.
\end{enumerate}

\section{Problem Set-up and Background}
\label{sec:problem_statement}

\paragraph{Notation}
The main paper provides an informal treatment of ConvNPs. We refer the reader to the supplement for precise definitions and statements.
Let $\gX= \mathbb{R}^{d_{\mathrm{in}}},\gY= \mathbb{R}$ denote the input and output spaces, and let $(\vx, y)$ be an input-output pair.
Let $\gS$ be the collection of all finite data sets, with $D_c, D_t \in \gS$ a \textit{context} and \textit{target} set respectively.
We will later consider predicting the target set from the context set as in \citep{garnelo2018conditional,garnelo2018neural}.
Let $\mX_c, \vy_c$ be the inputs and corresponding outputs of $D_c$, with $\mX_t, \vy_t$ defined analogously.
We denote a single \textit{task} as $\xi = (D_c, D_t) = ((\mX_c, \vy_c), (\mX_t, \vy_t))$.
Let $\gP(\gX)$ denote the collection of stochastic processes on $\gX$, and let $C_b(\gX)$ denote the collection of continuous, bounded functions on $\gX$.

\subsection{Meta-Learning Stochastic Process Prediction}
\label{sec:stochastic_processes}
Consider rainfall $y$ as a function of position $\vx$. 
To model rainfall, we can view it as a \emph{random} function from $\gX$ to $\gY$. Mathematically, this corresponds to a SP on $\gX$---a probability distribution over functions from $\gX$ to $\gY$---which we denote by $P$.
Given perfect knowledge of $P$, we could predict rainfall at any location of interest by conditioning $P$ on observations $D_c$, yielding a \emph{predictive} SP.
However, in practice we will only have access to a large collection of sample functions from $P$.
Each function is known only at a finite set of inputs, $D = (\vx_n, y_n)_{n=1}^N$, which we divide into $D_c, D_t$ for meta-training. 
Given sufficient data, we can \emph{meta-learn} the map from context sets $D_c$ to the ground-truth predictive distribution: $D_c \mapsto p(\vy_t | \mX_t, D_c) = p(\vy_t, \vy_c| \mX_t, \mX_c) / p(\vy_c|\mX_c)$.
As long as the predictives for varying $\mX_t$ are Kolmogorov-consistent \citep[Section 2.4]{tao2011introduction}, this corresponds to learning a map from data sets directly to \emph{predictive} SPs.
We refer to the map that takes a context set $D_c$ to the \emph{exact} ground truth SP conditioned on $D_c$ as the \emph{prediction map} $\pi_P\colon \gS \to \gP(\gX)$ (details in \cref{app:bayes_map}).
The general prediction problem may then be viewed as learning to approximate $\pi_P$.

\subsection{Translation Equivariance and Stationarity} \label{sec:TE_maps_to_processes}
The prediction map $\pi_P$ possesses two important symmetries.
First, $\pi_P$ is \textit{invariant} to permutations of $D_c$ \citep{zaheer2017deep, gordon2020convolutional}.
Second, if the ground truth process $P$ is \textit{stationary}, then $\pi_P$ is \textit{translation equivariant}: whenever an input to the map is translated, its output is translated by the same amount (see \cref{app:stationary_implies_equivariant} for formal definitions and proofs).
This simple statement highlights the intimate relationship between stationarity and TE.
Moreover, it suggests that models for the prediction map should also be TE and permutation invariant.
As such models are a small subset of the space of \emph{all} models, building in these properties can greatly improve data efficiency and generalization for stationary SP prediction.
In \cref{sec:convnps}, we extend the TE maps of \citet{gordon2020convolutional} (reviewed next) to construct a rich class of models which incorporate these inductive biases.

\subsection{Convolutional Conditional Neural Processes}
\label{sec:convcnps}
We review ConvCNPs \citep{gordon2020convolutional}, which are an important building block in our proposed model.
ConvCNPs can be viewed from the perspective of SP prediction, revealing their key limitations.
Given a context set $D_c$, the ConvCNP models the predictive distribution over target outputs as:
\begin{align}
\label{eqn:convcnp_likelihood}
    p_{\bm{\phi}}(\vy_t| \mX_t, D_c) = {\textstyle\prod_{(\vx, y) \in D_t}} \mathcal{N}(y; \mu(\vx, D_c), \sigma^2(\vx, D_c)).
\end{align}
The mean $\mu(\,\mathord{\cdot}\,, D_c)$ and variance $\sigma^2(\,\cdot\,, D_c)$ are parametrized by \emph{convolutional deep sets} (ConvDeepSets; \citep{gordon2020convolutional}): a flexible parametrization for TE maps from $\gS$ to $C_b(\gX)$.
ConvDeepSets introduce the idea of \emph{functional representations}: whereas the standard DeepSets framework embeds data sets into a finite-dimensional vector space \citep{zaheer2017deep}, a ConvDeepSet embeds data sets in an infinite-dimensional function space. 
ConvDeepSets are a composition of two stages.
The first stage maps a data set $D$ to its functional representation via $D \mapsto \sum_{(\vx, y) \in D} \phi(y) \psi(\,\mathord{\cdot} - \vx)$.
Here $\phi(y) = (1, y) \in \mathbb{R}^2$ and $\psi$ is the Gaussian radial basis function. 
This functional representation is then passed to the second stage, a TE map between function spaces, implemented by a convolutional neural network (CNN).
See \cref{app:pseudocode} for a full description of the ConvCNP including pseudocode.

We observe that \cref{eqn:convcnp_likelihood} defines a map from context sets $D_c$ to predictive SPs.
Specifically, let $\gP_{\mathrm{N}}(\gX) \subset \gP(\gX)$ denote the set of \emph{noise GPs}: GPs on $\gX$ whose covariance is given by $\Cov(\vx, \vx') = \sigma^2(\vx)\delta[\vx - \vx']$, where $\sigma^2 \in C_b(\gX)$ and $\delta[0]=1$ with $\delta[\,\cdot\,]=0$ otherwise. 
Then the ConvCNP is a map $\mathrm{ConvCNP}: \gS \to \gP_{\mathrm{N}}(\gX)$ with \cref{eqn:convcnp_likelihood} defining its finite-dimensional distributions.
Since ConvDeepSets are TE, and the means and variances of ConvCNPs are ConvDeepSets, it follows that ConvCNPs are also TE as maps from $\gS \to \gP_{\mathrm{N}}(\gX)$ (see \cref{app:equivariance_proof} for a more formal derivation).
Unfortunately, processes in $\gP_{\mathrm{N}}(\gX)$ possess two key limitations.
First, it is impossible to obtain coherent function samples as each point of the function is generated independently. 
Second, Gaussian distributions cannot model multi-modality, heavy-tailedness, or asymmetry.

\section{The Convolutional Neural Process}
\label{sec:convnps}

We now present the ConvNP, which addresses the weaknesses of ConvCNPs.
We introduce their parametrization (\cref{sec:convnp_parameterization}) and a maximum-likelihood meta-training procedure (\cref{sec:ml_training}).

\subsection{Parametrizing Translation Equivariant Maps to Stochastic Processes Using ConvNPs}
\label{sec:convnp_parameterization}
%
\newcommand*\circled[1]{\tikz[baseline=(char.base)]{\node[shape=circle,color=white,fill=black,draw,inner sep=1pt] (char) {#1};}}

\begin{figure}[t]
    \centering
    \includegraphics[width=\linewidth]{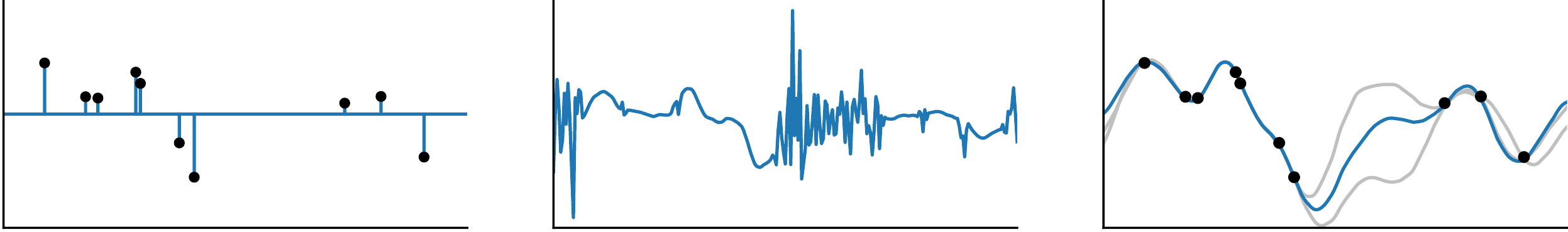}%
    \begin{tikzpicture}[overlay, remember picture]
        \node [anchor=south] () at (-11.8cm, 2cm) {\scriptsize \circled{\textbf{1}}\hspace{5 pt}Context set $D_c$};
        \node [anchor=south] () at (-7cm, 2cm) {\scriptsize \circled{\textbf{2}}\hspace{5 pt}Encoder: $z \sim \mathrm{ConvCNP}(D_c)$};
        \node [anchor=south] () at (-2cm, 2cm) {\scriptsize \circled{\textbf{3}}\hspace{5 pt}Decoder: $f = d(z)$};
        \draw [
            line width=1mm,
            ->,
            > = {
                Triangle[length=1mm, width=2.0mm]
            }
        ] (-9.6cm, 0.9cm) -- node [pos=0.5, anchor=south] {
            \footnotesize
            $\mathrm{E}_\vphi$
        } (-9.25cm, 0.9cm);
        \draw [
            line width=1mm,
            ->,
            > = {
                Triangle[length=1mm, width=2.0mm]
            }
        ] (-4.7cm, 0.9cm) -- node [pos=0.5, anchor=south] {
            \footnotesize
            $\mathrm{D}_\vtheta$
        } (-4.35cm, 0.9cm);
    \end{tikzpicture}%
    \caption{
        ConvNP encoder-decoder architecture. The encoder is a ConvCNP which takes the context set as input (left panel) and outputs a single sample of $z$ (center panel). The decoder takes this as input and outputs a predictive sample (right panel blue; two other samples shown in grey).
    }
    \label{fig:forward_pass}
\end{figure}
\begin{figure}
    \centering
    \includegraphics[width=\textwidth]{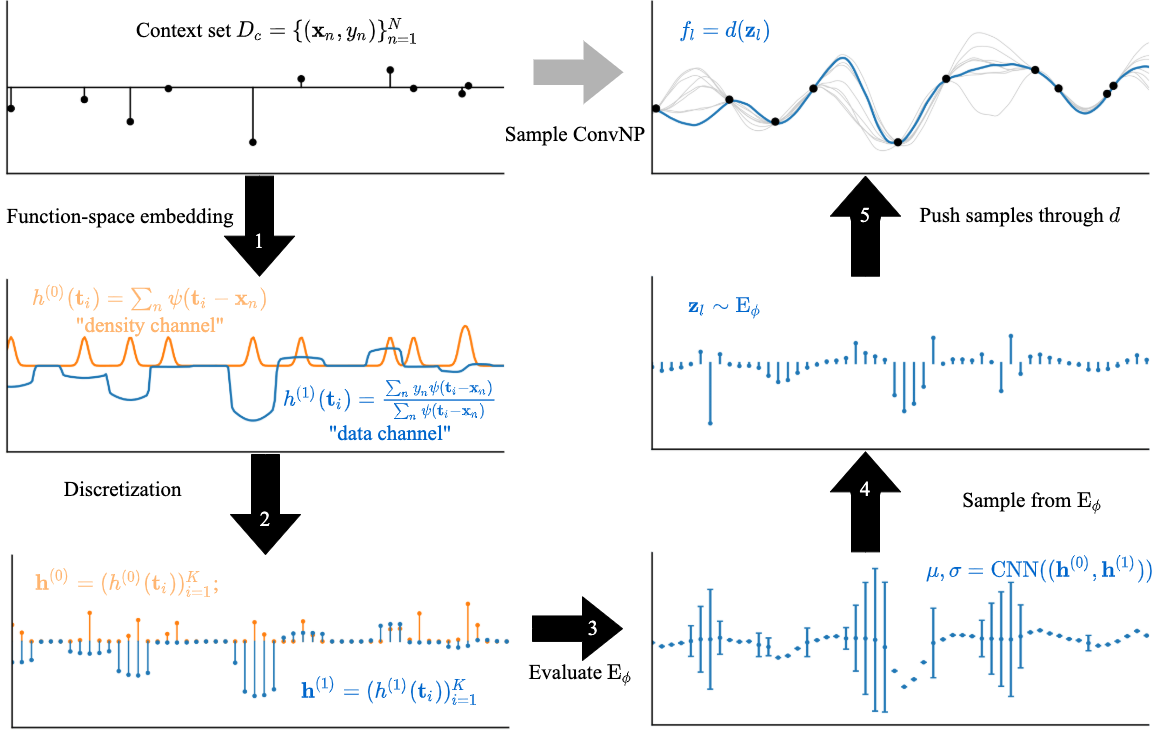}
    \caption{Forward pass of a ConvNP. Steps (1)-(4) depict sampling from the encoder $\mathrm{E}_{\bm{\phi}}$, which is a ConvCNP. This involves: (1) computing a functional representation of the context set, with separate `density' and `data' channels (described in detail in \citet{gordon2020convolutional} and \cref{app:pseudocode}), (2) discretizing the representation, (3) passing the representation through a CNN, which outputs the parameters of independent Gaussian distributions spaced on a grid, and (4) sampling from these distributions. However, the samples at each grid point are independent of each other, hence in (5) the samples are passed through \emph{another} CNN, the decoder, to induce dependencies, and then are smoothed out.}
    \label{fig:convnp_full_diagram}
\end{figure}
The ConvNP extends the ConvCNP by parametrizing a map to predictive SPs more expressive than $\gP_{\mathrm{N}}(\gX)$, allowing for coherent sampling and non-Gaussian predictives. It achieves this by passing the output of a ConvCNP through a non-linear, TE map between function spaces. 
Specifically, the ConvNP uses an encoder--decoder architecture, where the encoder $\mathrm{E}\colon \gS \to \gP_{\mathrm{N}}(\gX)$ is a ConvCNP and the decoder $d\colon \mathbb{R}^\gX \to \mathbb{R}^\gX$ is TE (here $\mathbb{R}^\gX$ denotes the set of all functions from $\gX$ to $\mathbb{R}$). 
Conditioned on $D_c$, ConvNP samples can be obtained by sampling a function $z \sim \mathrm{ConvCNP}(D_c)$ and then computing $f = d(z)$.
This is illustrated in \cref{fig:forward_pass}.
Importantly, $d$ takes functions to functions and does not necessarily act point-wise: letting $f(\vx)$ depend on the value of $z$ at multiple locations is crucial for inducing dependencies in the predictive.
This sampling procedure induces a map between SPs, $\mathrm{D}\colon \gP_{\mathrm{N}}(\gX) \to \gP(\gX)$ (see \cref{app:equivariance_proof}). Putting these together, with explicit parameter dependence in $\mathrm{E}$ and $\mathrm{D}$, the ConvNP is constructed as
\[
\mathrm{ConvNP}_{\bm{\theta}, \bm{\phi}} = \mathrm{D}_{\bm{\theta}} \circ \mathrm{E}_{\bm{\phi}}, \quad 
    \mathrm{E}_{\bm{\phi}} = \mathrm{ConvCNP}_{\bm{\phi}}, \quad 
    \mathrm{D}_{\bm{\theta}} = (d_{\bm{\theta}})_*,
\]
where $(d_{\mathrm{\bm{\theta}}})_*$ is the pushforward\footnote{i.e., $(d_{\mathrm{\bm{\theta}}})_*(\mathrm{E}_{\bm{\phi}})$ is the measure induced on $\mathbb{R}^\gX$ by sampling a function from $\mathrm{E}_{\bm{\phi}}$ and passing it through $d_{\mathrm{\bm{\theta}}}$.} under $d_{\mathrm{\bm{\theta}}}$.
In \cref{app:equivariance_proof}, we prove that $\mathrm{ConvNP}_{\bm{\theta}, \bm{\phi}}$ is indeed TE.

In practice, we cannot compute samples of noise GPs ($\mathcal{P}_{\mathrm{N}}$) because they comprise uncountably many independent random variables. Instead, we consider a discrete version of the model, which enables computation.
Following \citet{gordon2020convolutional}, we discretize the domain of $z$ on a grid $(\vx_i)_{i=1}^{K}$, with $\vz \coloneqq (z(\vx_i))_{i=1}^{K}$.
As a consequence, the model can only be equivariant up to shifts on this discrete grid.
With this discretization, sampling $\vz \sim \mathrm{ConvCNP}_{\bm{\phi}}(D_c)$ amounts to sampling independent Gaussian random variables, and $d_{\bm{\theta}}$ is implemented by passing $\vz$ through a CNN. The forward pass of a trained ConvNP is illustrated in \cref{fig:convnp_full_diagram}.
Note that CNNs are not always entirely TE due to the zero padding that occurs at each layer. 
In practice, we find that this is not an issue.\footnote{See \citet[Appendix D.6]{gordon2020convolutional} for a discussion.} 
Following \citet{kim2018attentive}, we define the model likelihood by adding heteroskedastic Gaussian observation noise $\sigma^2_y(\vx, \vz)$ to the predictive function draws $f = d_\vtheta(\vz) \in \mathbb{R}^{\gX}$:
\begin{equation}
    \textstyle
    p_{\vphi,\vtheta}(\vy_t | \mX_t, D_c) = \mathop\mathbb{E}_{\vz \sim \mathrm{E}_{\bm{\phi}}(D_c)} \Big[\prod_{(\vx, y) \in D_t} \gN \left(y; d_{\bm{\theta}}(\vz) (\vx), \sigma_y^2(\vx, \vz) \right) \Big].\label{eqn:convnp_likelihod_fn}
\end{equation}
Although the product in the expectation factorizes, $p_{\vphi,\vtheta}(\vy_t | \mX_t, D_c)$ does not: $\vz$ induces dependencies in the predictive, in contrast to \cref{eqn:convcnp_likelihood}.
See \cref{app:pseudocode} for full implementation details for the ConvNP.

\subsection{Maximum Likelihood Learning of ConvNPs}
\label{sec:ml_training}
We now propose a maximum-likelihood training procedure for ConvNPs. 
Let the ground truth task distribution be $p(\xi) = p(D_c, D_t)$.
Let $\gL_{\mathrm{ML}}(\bm{\theta}, \bm{\phi}; \xi) \coloneqq \log p_{\vphi,\vtheta}(\vy_t | \mX_t, D_c)$ be the single-task likelihood, and let $\gL_{\mathrm{ML}}(\bm{\theta}, \bm{\phi}) \coloneqq \mathbb{E}_{p(\xi)} [\log p_{\vphi,\vtheta}(\vy_t | \mX_t, D_c)]$ be the task-averaged likelihood.
The following proposition shows that maximizing $\gL_{\mathrm{ML}}$ recovers the prediction map $\pi_P$ in a suitable limit:

\begin{proposition} \label{rem:recover_bayes_map}
    Let $\Psi\colon \gS \to \gP(\gX)$ be a map from data sets to SPs, and let $\gL_{\mathrm{ML}}(\Psi) \coloneqq \mathbb{E}_{p(\xi)}[ \log p_{\Psi}(\vy_t| \mX_t, D_c)]$ where $p_{\Psi}$ is the density of $\Psi(D_c)$ at $\mX_t$.
    Then $\Psi$ globally maximizes $\gL_{\mathrm{ML}}(\Psi)$ if and only if $\Psi = \pi_P$.
    See \cref{app:log_el_convergence} for more details and conditions. 
\end{proposition}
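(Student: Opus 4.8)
The plan is to reduce the statement to the standard fact that the expected conditional log-likelihood is maximized by the true conditional density — applied \emph{pointwise} in the conditioning variables $(\mX_t, D_c)$ — and then to lift the resulting equality of all finite-dimensional densities back to equality of stochastic processes via Kolmogorov consistency.

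First I would factor the task distribution as $p(\xi) = p(D_c, \mX_t)\, p(\vy_t \mid \mX_t, D_c)$, where by construction of the prediction map (\cref{app:bayes_map}) the conditional $p(\vy_t \mid \mX_t, D_c)$ is exactly the density at $\mX_t$ of $\pi_P(D_c)$; equivalently $p_{\pi_P}(\vy_t\mid\mX_t,D_c) = p(\vy_t\mid\mX_t,D_c)$. Writing $q \coloneqq p_{\Psi}$, I would then consider the gap
\[
\gL_{\mathrm{ML}}(\pi_P) - \gL_{\mathrm{ML}}(\Psi) = \E_{p(D_c,\mX_t)}\Big[\, \E_{\vy_t \sim p(\cdot\,\mid\, \mX_t, D_c)}\big[\log p(\vy_t\mid \mX_t, D_c) - \log q(\vy_t\mid \mX_t, D_c)\big]\Big].
\]
The inner expectation is $\KL\big(p(\cdot\mid \mX_t, D_c)\,\|\,q(\cdot\mid \mX_t, D_c)\big) \ge 0$ by Gibbs' inequality, with equality iff the two densities agree Lebesgue-almost everywhere. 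Hence $\gL_{\mathrm{ML}}(\pi_P) \ge \gL_{\mathrm{ML}}(\Psi)$ for every $\Psi$, proving that $\pi_P$ is a global maximizer. This step needs only mild regularity (existence of the relevant densities, guaranteed under the assumptions of \cref{app:bayes_map}, and e.g.\ $\gL_{\mathrm{ML}}(\pi_P) > -\infty$ so the subtraction is meaningful; otherwise the value $-\infty$ is attained by every $\Psi$ and the claim is read modulo null sets).

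For the converse, suppose $\Psi$ globally maximizes $\gL_{\mathrm{ML}}$. Then the gap above vanishes, and being the integral of a nonnegative integrand it must vanish $p(D_c,\mX_t)$-almost surely, i.e.\ $q(\cdot\mid\mX_t,D_c) = p(\cdot\mid\mX_t,D_c)$ for $p$-almost every $(\mX_t, D_c)$. Thus for almost every $D_c$, the finite-dimensional distributions of $\Psi(D_c)$ and $\pi_P(D_c)$ coincide on almost every finite collection of target inputs $\mX_t$. To upgrade this to $\Psi(D_c) = \pi_P(D_c)$ as stochastic processes, I would use that both are Kolmogorov-consistent families (for $\pi_P$ this was noted in \cref{sec:stochastic_processes}; for $\Psi$ it is built into the hypothesis $\Psi\colon\gS\to\gP(\gX)$), so each is determined by its finite-dimensional distributions, and then invoke a full-support assumption on the sampling of target locations together with continuity of $\mX_t \mapsto (\text{the relevant density})$ to pass from ``almost every $\mX_t$'' to ``every $\mX_t$'' along a countable determining subfamily.

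I expect the real work — and the place the precise conditions of \cref{app:log_el_convergence} are needed — to be this final lifting step rather than the Gibbs/KL core, which is routine. Specifically one must (i) assume the context/target locations (and their counts) are drawn with full support so that ``$p$-a.e.\ $(\mX_t, D_c)$'' is informative enough, (ii) ensure enough continuity to extend equalities of densities from a dense set of index configurations, and (iii) state carefully what ``$\Psi = \pi_P$'' means (everywhere on the support of the context-set distribution versus $p(D_c)$-a.e.). The measure-theoretic bookkeeping around existence of densities, dominating $\sigma$-finite measures, and the countable determining class is where care is required; the inequality itself is immediate from nonnegativity of $\KL$.
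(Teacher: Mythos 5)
Your proof is correct and follows essentially the same route as the paper: factor $p(\xi)$ into $p(D_c,\mX_t)\,p(\vy_t\mid\mX_t,D_c)$, rewrite the objective (up to a $\Psi$-independent constant) as minus an expected KL divergence between the ground-truth conditional and $p_\Psi$, and conclude via nonnegativity of KL. Your extra care about the ``almost every $(\mX_t,D_c)$'' versus ``every'' distinction in the converse direction matches the caveat the paper itself states in the remark immediately following its proof.
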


In practice, we do not have infinite flexibility in our model or infinite data to compute expectations over $p(\xi)$, but \cref{rem:recover_bayes_map} shows that maximum-likelihood training is sensible with an expressive model and sufficient data.
Letting $\gD = \{\xi_n\}_{n=1}^{N_{\mathrm{tasks}}}$ be a \textit{meta-training} set, we can train a ConvNP by stochastic gradient maximization of $\gL_{\mathrm{ML}}$ with tasks sampled from $\gD$.
Unfortunately, for non-linear decoders, $\log p_{\vphi,\vtheta}(\vy_t | \mX_t, D_c)$ is intractable due to the expectation over $\vz$ (\cref{eqn:convnp_likelihod_fn}).
For a given task $\xi$, we instead optimize the following Monte Carlo estimate of $\gL_{\mathrm{ML}}(\bm{\theta}, \bm{\phi}; \xi)$, which is conservatively biased, consistent, and monotonically increasing in $L$ (in expectation) \citep{burda2015importance}:
\begin{equation}
\label{eqn:log_el_estimator}
    \textstyle
    \hat{\gL}_{\mathrm{ML}}(\bm{\theta}, \bm{\phi}; \xi) \coloneqq \log \left[ \frac{1}{L} \sum_{l=1}^L \exp \left( \sum_{(\vx, y) \in D_t} \log p_\vtheta(y | \vx, \vz_l) \right) \right];\,\,\,\, \vz_l \sim \mathrm{E}_{\bm{\phi}}(D_c).
\end{equation}
One drawback of this objective is that single sample estimators are not useful, as they drive $\vz$ to be deterministic. 
In our experiments, we set $L$ between 16 and 32. For further discussion of the effect of $L$ see \cref{app:effect_of_L}.
\Cref{eqn:log_el_estimator} can be viewed as importance sampling in which the prior is the proposal distribution. Prior sampling is typically ineffective as it is unlikely to propose functions that pass near observed data.
Here, however, $\mathrm{E}_{\bm{\phi}}$ depends on context sets $D_c$, which often is sufficient to constrain prior function samples to be close to $D_t$.
In \cref{sec:experiments}, we demonstrate that, perhaps surprisingly, this estimator often significantly outperforms VI-inspired estimators (discussed next).

\section{The Latent Variable Interpretation of ConvNPs}
\label{sec:latent_var_convnps}

We now describe an alternative approach to training the ConvNP via variational lower bound maximization. 
This serves the dual purpose of relating ConvNPs to the NP family, and contrasting the existing NP framework with our simplified, maximum-likelihood approach from  \cref{sec:ml_training}.

\subsection{A Variational Lower Bound Approach to ConvNPs}
\label{sec:vi_interpretation_convnp}

\citet{garnelo2018neural} propose viewing Neural Processes as performing approximate Bayesian inference and learning in the following latent variable model:
\begin{align}\textstyle
\label{eqn:np_model}
    \vz \sim p_\vtheta(\vz); \quad y(\vx) = f_\vtheta(\vx; \vz); \quad p_\vtheta(\vy_t | \mX_t, \vz) = \prod_{(\vx, y) \in D_t} \gN \left(y ; f_\vtheta(\vx ; \vz), \sigma_y^2 \right).
\end{align}
To train the model, they propose using \textit{amortized} VI \citep{kingma2013auto, rezende2015variational}.
This approach involves introducing a variational approximation $q_{\bm{\phi}}$ which maps data sets $S \in \gS$ to distributions over $\vz$, and maximizing a lower bound (ELBO) on $\log p_{\bm{\theta}}(\vy_t | \mX_t, D_c)$.
We can define a similar procedure for ConvNPs. 
For ConvNPs, $\vz$ is a latent \textit{function}, $q_{\bm{\phi}}$ is a map from data sets to SPs, and $f_\vtheta$ is a map between function spaces.
A natural choice is to use a ConvCNP and CNN for $q_{\bm{\phi}}$ and $f_\vtheta$, respectively.
This results in the same parameterization as in \cref{sec:convnps}, but a different modelling interpretation and meta-training objective.
Given a task $\xi = (D_c, D_t)$, the ELBO for this model is: 
\[
\E_{\vz \sim q_{\bm{\phi}}(\vz | D_c \cup D_t)} \left[\log  p_\vtheta(\vy_t |\mX_t, \vz) \right] - \mathrm{KL}(q_{\bm{\phi}}(\vz | D_c \cup D_t) \| \textcolor{orange}{p(\vz | D_c)}).
\]
As $\textcolor{orange}{p(\vz | D_c)}$ is intractable to compute, \citet{garnelo2018neural} instead propose the following objective:
%
%
\begin{align} \textstyle
\label{eqn:np_objective}
    \gL_{\mathrm{NP}}(\vtheta, \vphi; \xi) \coloneqq \E_{\vz \sim q_{\bm{\phi}}(\vz | D_c \cup D_t)} \left[\log  p_\vtheta(\vy_t |\mX_t, \vz) \right] - \mathrm{KL}(q_{\bm{\phi}}(\vz | D_c \cup D_t) \| \textcolor{blue}{q_{\bm{\phi}}(\vz | D_c)}),
\end{align}
where the intractable term $\textcolor{orange}{p(\vz | D_c)}$ has been substituted with our variational approximation $\textcolor{blue}{q_{\bm{\phi}}(\vz | D_c)}$.
Due to this substitution, $\gL_{\mathrm{NP}}$ is no longer a valid ELBO for the original model (\cref{eqn:np_model}). 
Rather, if we define \emph{separate} models for each context set $D_c$, and \emph{define} the conditional prior for each model as $p(\vz | D_c) \coloneqq q_{\bm{\phi}}(\vz | D_c)$, then $\gL_{\mathrm{NP}}$ may be thought of as performing VI in this \emph{collection} of models. However, there is no guarantee that these conditional priors are consistent in the sense that they correspond to a single Bayesian model as in \cref{eqn:np_model}.
%

For the non-discretized ConvNP, \cref{eqn:np_objective} involves KL divergences between SPs which cannot be computed directly and must be treated carefully \citep{matthews2016sparse, sun2018functional}.
On the other hand, for the discretized ConvNP, the KL divergences can be computed, but grow in magnitude as the discretization becomes finer, and it is not clear that the KL divergence between SPs is recovered in the limit. 
This raises practical issues for the use of \cref{eqn:np_objective} with the ConvNP, as the balance between the two terms depends on the choice of discretization.

\subsection{Maximum-Likelihood vs Variational Lower Bound Maximization for Training NPs}
\label{sec:convnp_objective_comparison}
We argue that the VI interpretation is unnecessary when focusing on predictive performance, and particularly detrimental for ConvNPs, where $\vz$ has many elements.
Noting the equivalence
\begin{equation}
\label{eqn:np_objective2}
\begin{split}
     \gL_{\mathrm{NP}}(\vtheta, \vphi; \xi)  =
      \gL_{\mathrm{ML}}(\vtheta, \vphi; \xi)
    - \mathrm{KL}\left( q_\vphi ( \vz | D_c \cup D_t ) \middle\| p_\vtheta( D_t | \vz ) q_\vphi ( \vz | D_c ) / Z \right),
\end{split}
\end{equation}
where $Z$ is a normalizing constant (see \cref{app:neural_process_models} for a full derivation),
we see that $\gL_{\mathrm{NP}}$ is equal to $\gL_{\mathrm{ML}}$ up to an additional KL term.
This KL term encourages consistency among the $q_{\bm{\phi}}(\vz|D)$ in the sense that Bayes' theorem is respected if the target set is subsumed into the context set.
In the infinite capacity/data limit, $\gL_{\mathrm{NP}}$ is globally maximized if the ConvNP recovers
\begin{inlinelist}
    \item the prediction map $\pi_P$ for $\vy_t$ and
    \item exact inference for $\vz$. 
\end{inlinelist}
This follows from
\begin{inlinelist}
    \item \cref{rem:recover_bayes_map}, since $\pi_P$ globally optimizes $\gL_{\mathrm{ML}}$; and 
    \item that exact inference for $\vz$ is Bayes-consistent, sending the KL term to zero.
\end{inlinelist}
In most applications, only the distribution over $\vy_t$ is of interest.
Given only finite capacity/data, it can be advantageous to not expend capacity in enforcing Bayes-consistency for $\vz$, which suggests it could be beneficial to use $\gL_{\mathrm{ML}}$ over $\gL_{\mathrm{NP}}$.
Further, $\gL_{\mathrm{ML}}$ has the advantage of being easy to specify for any map parameterizing a predictive process, posing no conceptual issues for the ConvNP.
In \cref{sec:experiments} we find that $\gL_{\mathrm{ML}}$ significantly outperforms $\gL_{\mathrm{NP}}$ for ConvNPs, and often also for ANPs. 

\section{Experiments}
\label{sec:experiments}
We evaluate ConvNPs on a broad range of tasks.
Our main questions are:
\begin{inlinelist}
    \item Does the ConvNP produce coherent, meaningful predictive samples?
    \item Can it leverage translation equivariance to outperform baseline methods within and beyond the training range (generalization)?
    \item Does it learn expressive non-Gaussian predictive distributions?
\end{inlinelist}

\paragraph{Evaluation and baselines}
We use several approaches for evaluating NPs. 
First, as in \citep{garnelo2018neural,kim2018attentive}, we provide qualitative comparisons of samples.
These allow us to see if the models display meaningful structure, quantify uncertainty, and are able to generalize spatially. 
Second, NPs lack closed-form likelihoods, so we evaluate \emph{lower bounds} on their predictive log-likelihoods via importance sampling \citep{le2018empirical}. 
As these bounds can be quite loose (\cref{app:effect_num_samples_eval}), they are primarily useful to show when NPs outperform baselines with \emph{exact} likelihoods, such as GPs and ConvCNPs. 
Finally, in \cref{sec:climate_prediction} we consider Bayesian optimization to evaluate the usefulness of ConvNPs for downstream tasks. 
In \cref{sec:1d_regression,sec:image_completion}, we compare against the Attentive NP (ANP; \citep{kim2018attentive}), which in prior work is trained with $\gL_{\mathrm{NP}}$.
The ANP architectures used here are comparable to those in \citet{kim2018attentive}, and have a parameter count comparable to or greater than the ConvNP. Full details provided in the supplement.\footnote{Code to reproduce the 1D regression experiments can be found at \url{https://github.com/wesselb/NeuralProcesses.jl}, and code to implement the image-completion experiments can be found at \url{https://github.com/YannDubs/Neural-Process-Family}.}

\begin{figure}
    \centering
    \makebox[0.49\linewidth][c]{\textsc{ConvNP}}
    \hfill \makebox[0.49\linewidth][c]{\textsc{ANP}} \\[2pt]
    \begin{tikzpicture}[overlay, remember picture]
        \node [rotate=90, anchor=south] () at (0, 0) {\tiny\textsc{Mat\'ern--$\frac52$}};
    \end{tikzpicture}%
    \includegraphics[width=0.49\linewidth]{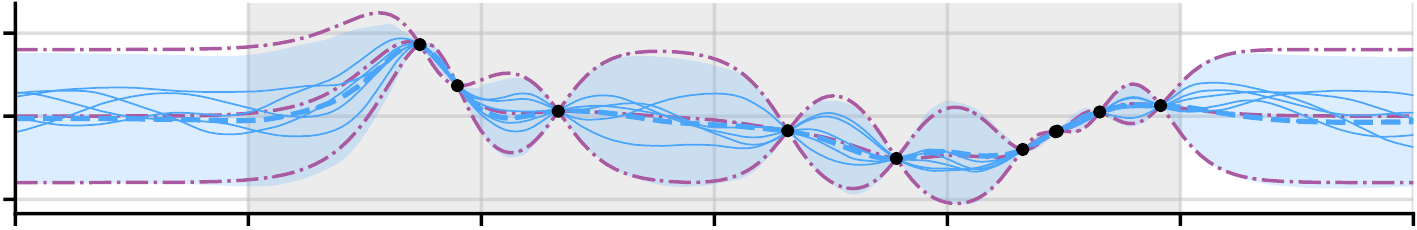}
    \hfill \includegraphics[width=0.49\linewidth]{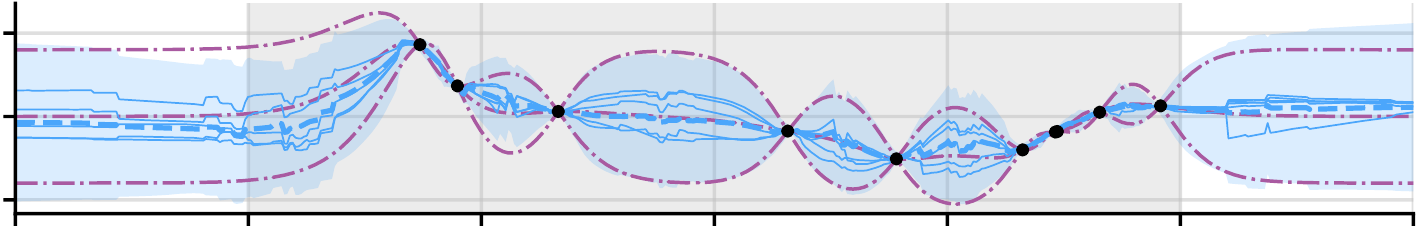}%
    \begin{tikzpicture}[overlay, remember picture]
        \node [rotate=90, anchor=north] () at (0, 15pt) {\tiny $\gL_{\mathrm{ML}}$};
    \end{tikzpicture}\\
    \includegraphics[width=0.49\linewidth]{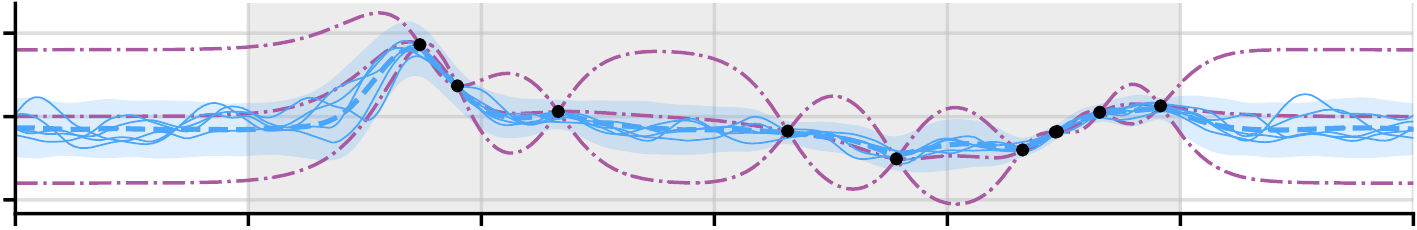}
    \hfill \includegraphics[width=0.49\linewidth]{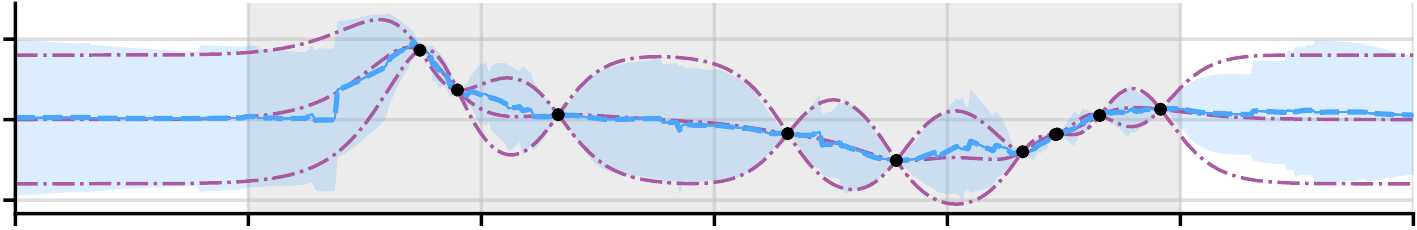}%
    \begin{tikzpicture}[overlay, remember picture]
        \node [rotate=90, anchor=north] () at (0, 15pt) {\tiny $\gL_{\mathrm{NP}}$};
    \end{tikzpicture}\\
    \begin{tikzpicture}[overlay, remember picture]
        \node [rotate=90, anchor=south] () at (0, 0) {\tiny\textsc{Mat\'ern--$\frac52$}};
    \end{tikzpicture}%
    \includegraphics[width=0.49\linewidth]{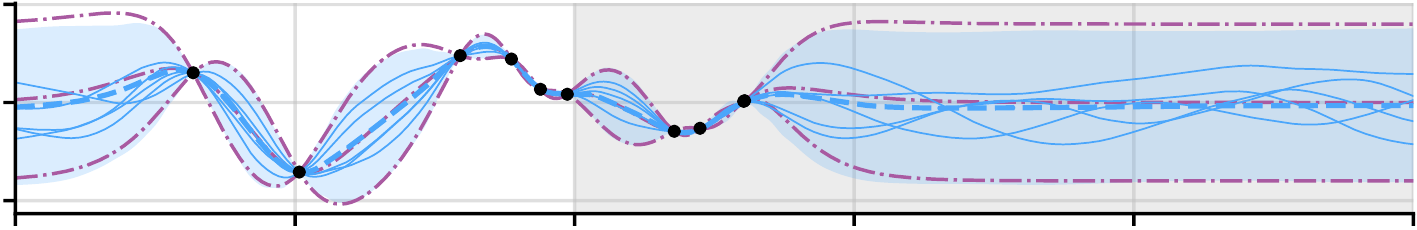}
    \hfill \includegraphics[width=0.49\linewidth]{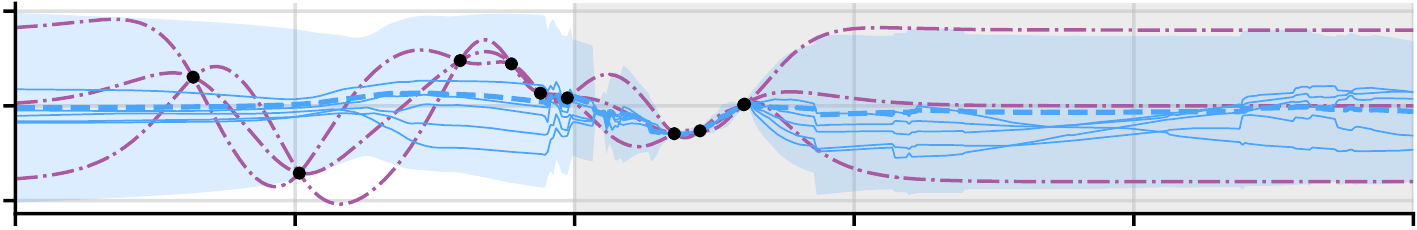}%
    \begin{tikzpicture}[overlay, remember picture]
        \node [rotate=90, anchor=north] () at (0, 15pt) {\tiny $\gL_{\mathrm{ML}}$};
    \end{tikzpicture}\\
    \includegraphics[width=0.49\linewidth]{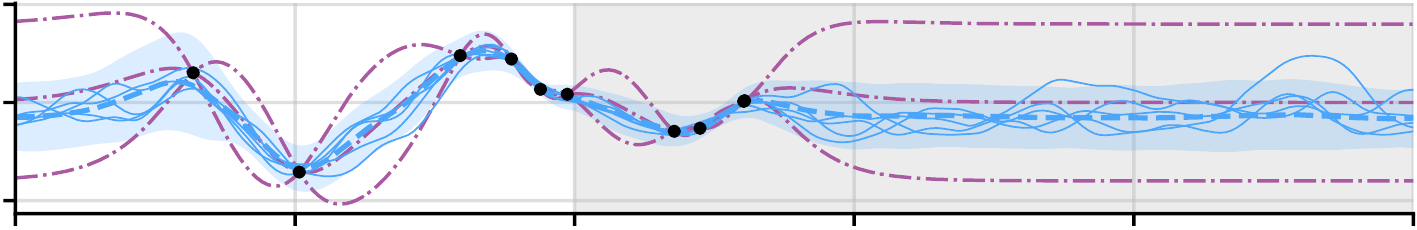}
    \hfill \includegraphics[width=0.49\linewidth]{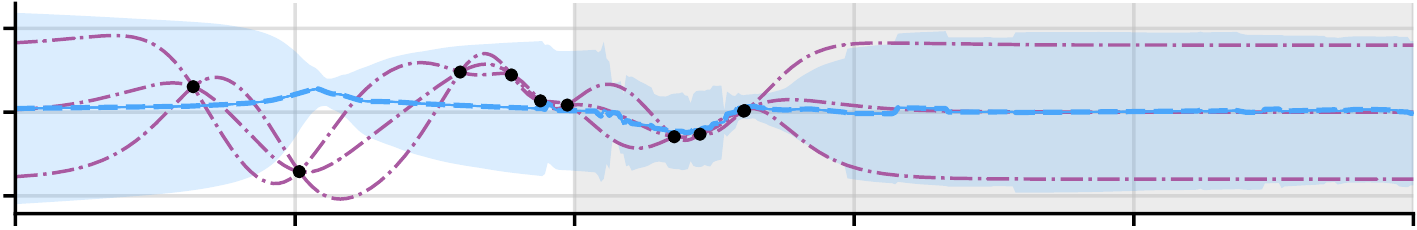}%
    \begin{tikzpicture}[overlay, remember picture]
        \node [rotate=90, anchor=north] () at (0, 15pt) {\tiny $\gL_{\mathrm{NP}}$};
    \end{tikzpicture}\\
    \begin{tikzpicture}[overlay, remember picture]
        \node [rotate=90, anchor=south] () at (0, 0) {\tiny\textsc{Weakly Periodic}};
    \end{tikzpicture}%
    \includegraphics[width=0.49\linewidth]{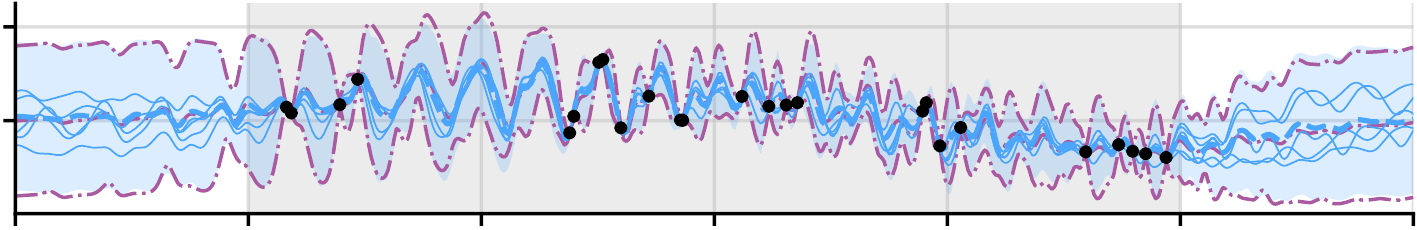}
    \hfill \includegraphics[width=0.49\linewidth]{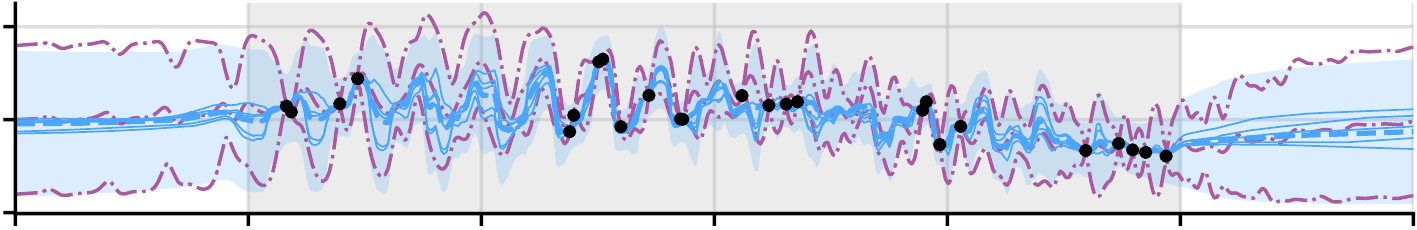}%
    \begin{tikzpicture}[overlay, remember picture]
        \node [rotate=90, anchor=north] () at (0, 15pt) {\tiny $\gL_{\mathrm{ML}}$};
    \end{tikzpicture}\\
    \includegraphics[width=0.49\linewidth]{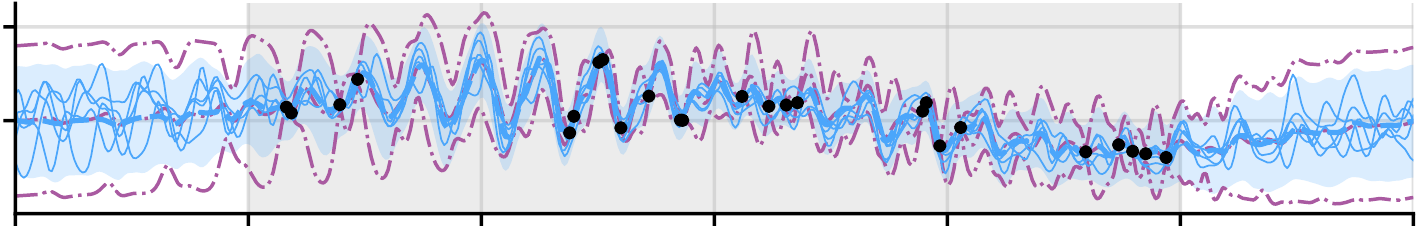}
    \hfill \includegraphics[width=0.49\linewidth]{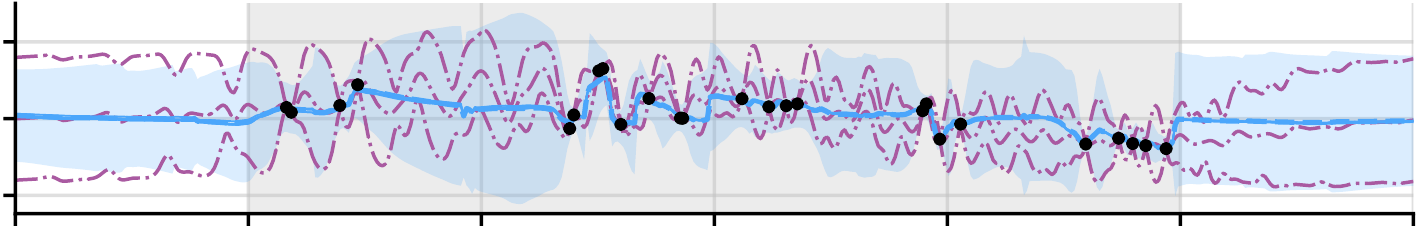}%
    \begin{tikzpicture}[overlay, remember picture]
        \node [rotate=90, anchor=north] () at (0, 15pt) {\tiny $\gL_{\mathrm{NP}}$};
    \end{tikzpicture}\\
    \begin{tikzpicture}[overlay, remember picture]
        \node [rotate=90, anchor=south] () at (0, 0) {\tiny\textsc{Sawtooth}};
    \end{tikzpicture}%
    \includegraphics[width=0.49\linewidth]{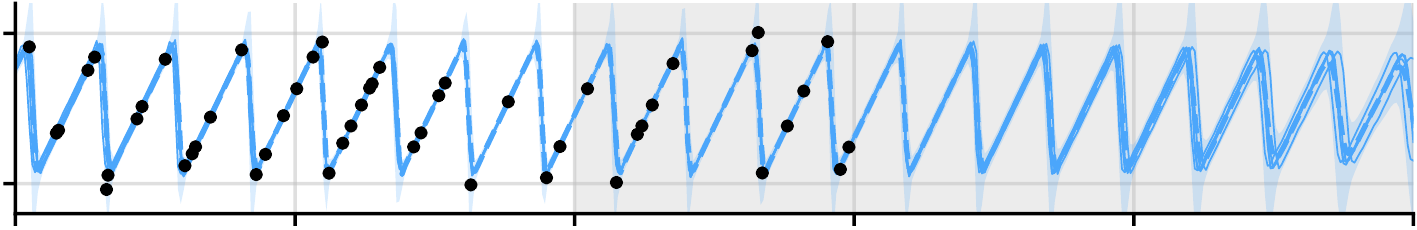}
    \hfill \includegraphics[width=0.49\linewidth]{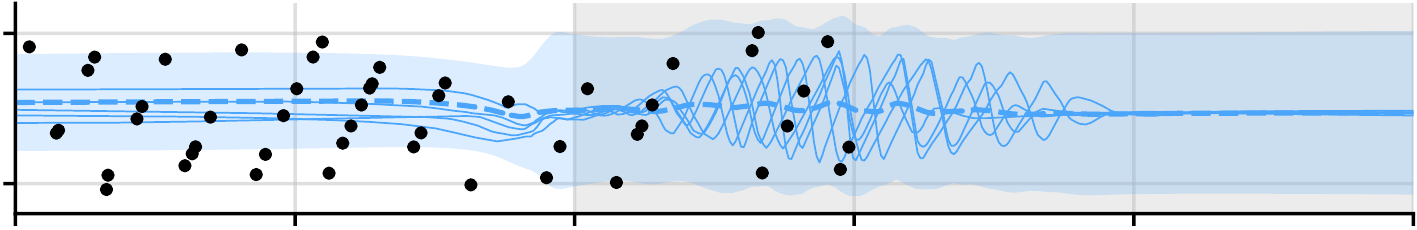}%
    \begin{tikzpicture}[overlay, remember picture]
        \node [rotate=90, anchor=north] () at (0, 15pt) {\tiny $\gL_{\mathrm{ML}}$};
    \end{tikzpicture}\\
    \includegraphics[width=0.49\linewidth]{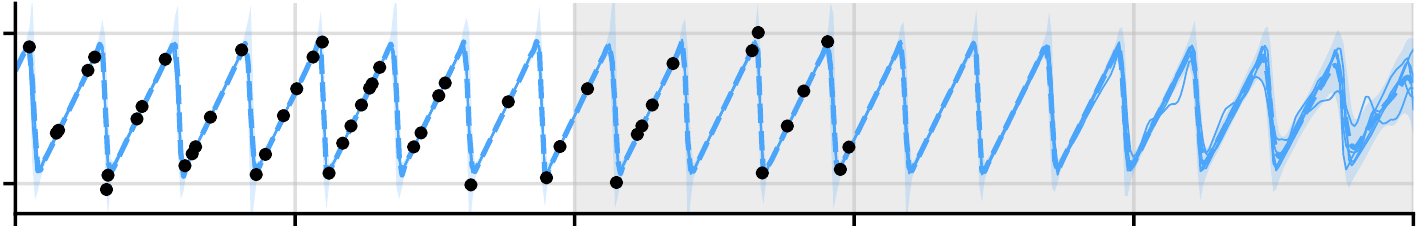}
    \hfill \includegraphics[width=0.49\linewidth]{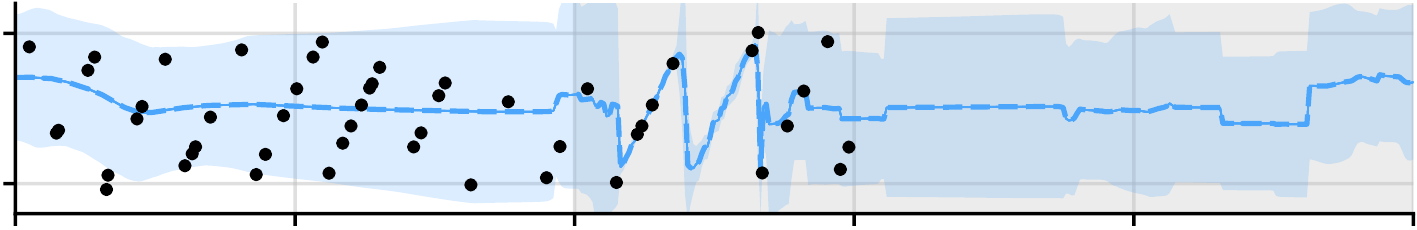}%
    \begin{tikzpicture}[overlay, remember picture]
        \node [rotate=90, anchor=north] () at (0, 15pt) {\tiny $\gL_{\mathrm{NP}}$};
    \end{tikzpicture}
    \caption{
        Predictions of ConvNPs and ANPs trained with $\gL_{\mathrm{ML}}$ and $\gL_{\mathrm{NP}}$, showing interpolation and extrapolation within (grey background) and outside (white background) the training range. Solid blue lines are samples, dashed blue lines are means, and the shaded blue area is $\mu \pm 2\sigma$. Purple dash--dot lines are the ground-truth GP mean and $\mu\pm2\sigma$. ConvNP handles points outside the training range naturally, whereas this leads to catastrophic failure for the ANP. Note ANP with $\gL_{\mathrm{NP}}$ tends to collapse to deterministic samples, with all uncertainty explained with the heteroskedastic noise. In contrast, models trained with $\gL_{\mathrm{ML}}$ show diverse samples that account for much of the uncertainty. 
    }
    \label{fig:1D_predictions}
    \vspace*{-1.5em}
\end{figure}
\begin{table}[t]
\footnotesize
\caption{Log-likelihoods on 1D regression tasks. Lower bounds marked with asterisk. Highest non-GP values in bold.}
\label{table:1d_bakeoff_main_text}
\begin{center}
\begin{tabular}{@{}l@{\hspace{4pt}}
                c@{\hspace{4pt}}
                r@{\hspace{4pt}}
                r@{\hspace{4pt}}
                r@{\hspace{4pt}}
                r@{\hspace{4pt}}
                r@{\hspace{4pt}}
                r@{}}

\toprule 
 & &
 \multicolumn{3}{c}{\textsc{within training range}} & 
 \multicolumn{3}{c}{\textsc{beyond training range}}\\
 \midrule
 &  &
 \multicolumn{1}{c}{Mat\'ern-$\frac52$}  & 
 \multicolumn{1}{c}{Weakly Per.}         & 
 \multicolumn{1}{c}{Sawtooth}            &
 \multicolumn{1}{c}{Mat\'ern-$\frac52$}  &
 \multicolumn{1}{c}{Weakly Per.}         &
 \multicolumn{1}{c}{Sawtooth}\\[0.2em]
 GP  & (full) &
 $1.22 { \scriptstyle \,\pm\, 6\text{\textsc{e}}{\,\text{--}3} }$             &         
 $\text{--}{0.06} { \scriptstyle \,\pm\, 5\text{\textsc{e}}{\,\text{--}3} }$  &         
 \multicolumn{1}{c}{N/A}                                                      &         
 $1.22 { \scriptstyle \,\pm\, 6\text{\textsc{e}}{\,\text{--}3} }$             &         
 $\text{--}{0.06} { \scriptstyle \,\pm\, 5\text{\textsc{e}}{\,\text{--}3} }$  &         
 \multicolumn{1}{c}{N/A}      \\                                                        
 ConvNP$^\ast$ & ($\gL_{\mathrm{ML}}$) & 
 $\text{--}\textbf{0.58} { \scriptstyle \,\pm\, 0.01 }$ &                                     
 $\text{--}\textbf{1.02} { \scriptstyle \,\pm\, 6\text{\textsc{e}}{\,\text{--}3} }$ &         
 $\textbf{2.30} { \scriptstyle \,\pm\, 0.01 }$                                      &         
 $\text{--}\textbf{0.58} { \scriptstyle \,\pm\, 0.01 }$ &                                     
 $\text{--}\textbf{1.03} { \scriptstyle \,\pm\, 6\text{\textsc{e}}{\,\text{--}3} }$ &         
 $\textbf{2.29} { \scriptstyle \,\pm\, 0.02 }$ \\                                             
 ANP$^\ast$ & ($\gL_{\mathrm{ML}}$) & 
 $\text{--}0.73 { \scriptstyle \,\pm\, 0.01 }$ &                                    
 $\text{--}1.14 { \scriptstyle \,\pm\, 6\text{\textsc{e}}{\,\text{--}3} }$ &        
 $0.09 { \scriptstyle \,\pm\, 3\text{\textsc{e}}{\,\text{--}3} }$ &                 
 $\text{--}1.39 { \scriptstyle \,\pm\, 7\text{\textsc{e}}{\,\text{--}3} }$ &        
 $\text{--}1.35 { \scriptstyle \,\pm\, 4\text{\textsc{e}}{\,\text{--}3} }$                             &        
 $\text{--}0.17 { \scriptstyle \,\pm\, 1\text{\textsc{e}}{\,\text{--}3} }$ \\       
 ANP$^\ast$ & ($\gL_{\mathrm{NP}}$) & 
 $\text{--}0.96 { \scriptstyle \,\pm\, 0.01 }$ &                                    
 $\text{--}1.37 { \scriptstyle \,\pm\, 6\text{\textsc{e}}{\,\text{--}3} }$ &        
 $0.20 { \scriptstyle \,\pm\, 9\text{\textsc{e}}{\,\text{--}3} }$ &                 
 $\text{--}1.48 { \scriptstyle \,\pm\, 4\text{\textsc{e}}{\,\text{--}3} }$ &        
 $\text{--}1.66 { \scriptstyle \,\pm\, 0.01 }$                             &        
 $\text{--}0.30 { \scriptstyle \,\pm\, 4\text{\textsc{e}}{\,\text{--}3} }$ \\       
 GP  & (diag) &
 $\text{--}{0.84} { \scriptstyle \,\pm\, 9\text{\textsc{e}}{\,\text{--}3} }$        &         
 $\text{--}{1.17} { \scriptstyle \,\pm\, 5\text{\textsc{e}}{\,\text{--}3} }$        &         
 \multicolumn{1}{c}{N/A}                                                            &         
 $\text{--}{0.84} { \scriptstyle \,\pm\, 9\text{\textsc{e}}{\,\text{--}3} }$        &         
 $\text{--}{1.17} { \scriptstyle \,\pm\, 5\text{\textsc{e}}{\,\text{--}3} }$        &         
 \multicolumn{1}{c}{N/A}   \\                                                                 
 ConvCNP & &
 $\text{--}{0.88} { \scriptstyle \,\pm\, 0.01 }$                                    &         
 $\text{--}{1.19} { \scriptstyle \,\pm\, 7\text{\textsc{e}}{\,\text{--}3} }$        &         
 $1.15 { \scriptstyle \,\pm\, 0.04 }$                                               &         
 $\text{--}{0.87} { \scriptstyle \,\pm\, 0.01 }$                                    &         
 $\text{--}{1.19} { \scriptstyle \,\pm\, 7\text{\textsc{e}}{\,\text{--}3} }$        &         
 $1.11 { \scriptstyle \,\pm\, 0.04 }$ \\                                                      
 \bottomrule

\end{tabular}
\end{center}
\vspace*{-2em}
\end{table}

\subsection{1D Regression}
\label{sec:1d_regression}
We train on samples from 
\begin{inlinelist}
    \item a Mat\'ern-$\frac52$ GP,
    \item a weakly periodic GP, and
    \item a non-Gaussian sawtooth process with random shifts and frequency
\end{inlinelist}
(see \cref{app:details_1D} for details). 
\Cref{fig:1D_predictions} shows predictive samples, where during training the models only observe data within the grey regions (training range).
While samples from the ANP exhibit unnatural ``kinks'' and do not resemble the underlying process, the ConvNP produces smooth samples for Mat\'ern--$\frac52$ and samples exhibiting meaningful structure for the weakly periodic and sawtooth processes.
The ConvNP also generalizes gracefully beyond the training range, whereas ANP fails catastrophically. The ANP with $\gL_{\mathrm{NP}}$ collapses to deterministic samples, with the epistemic uncertainty explained using the heteroskedastic noise $\sigma_y^2(\vx, \vz)$. This was also noted in \citet{le2018empirical}. This behaviour is alleviated when training with $\gL_{\mathrm{ML}}$, with much of the predictive uncertainty due to variations in the sampled functions.

\Cref{table:1d_bakeoff_main_text} compares lower bounds on the log-likelihood for ConvNP with our proposed $\gL_{\mathrm{ML}}$ objective and ANP with both $\gL_{\mathrm{ML}}$ and the standard $\gL_{\mathrm{NP}}$ objective. 
We also show three \emph{exact} log-likelihoods: %
\begin{inlinelist}
    \item the ground-truth GP (full)
    \item the ground-truth GP with diagonalised predictions (diag), and
    \item ConvCNP.
\end{inlinelist}
The ConvCNP performs on par with GP (diag), which is the optimal factorized predictive.
The ConvNP lower bound is consistently higher than the GP (diag) and ConvCNP log-likelihoods, demonstrating that its correlated predictives improve predictive performance.
Further, the ConvNP performs similarly inside and outside its training range, demonstrating that TE helps generalization; this is in contrast to the ANP, which fails catastrophically outside its training range.
In \cref{app:additional_results_1d}, we provide a thorough comparison for multiple models, training objectives, and data sets.

\subsection{Image Completion}
\label{sec:image_completion}

We evaluate ConvNPs on image completion tasks focusing on spatial generalization.
To test this, we consider zero-shot multi MNIST (ZSMM), where we train on single MNIST digits but test on two MNIST digits on a larger canvas. 
We randomly translate the digits during training, so the generative SP is stationary.
The black background on MNIST causes difficulty with heteroskedastic noise, as the models can obtain high likelihood by predicting the background with high confidence whilst ignoring the digits. 
Hence for MNIST and ZSMM we use homoskedastic noise $\sigma_y^2(\vz)$.
\cref{fig:images_zsmm_anp,fig:images_zsmm_convnp} show that the ANP fails to generalize spatially, whereas this is naturally handled by the ConvNP.
\begin{table}[t]
\footnotesize
\caption{Test log-likelihood lower bounds for image completion (5 runs).}
\label{table:images_main_text}
\vspace*{-2pt}
\begin{center}
\begin{adjustbox}{max width=\textwidth}
\begin{tabular}{@{}lcccccccc@{}}
\toprule
\multirow{2}{*}{} & 
\multicolumn{2}{c}{MNIST}                 & 
\multicolumn{2}{c}{CelebA32}              & 
\multicolumn{2}{c}{SVHN}                  & 
\multicolumn{2}{c}{ZSMM}                  \\ 
& 
$\gL_{\mathrm{ML}}$ &  $\gL_{\mathrm{NP}}$ & 
$\gL_{\mathrm{ML}}$ &  $\gL_{\mathrm{NP}}$ & 
$\gL_{\mathrm{ML}}$ &  $\gL_{\mathrm{NP}}$ & 
$\gL_{\mathrm{ML}}$ & $\gL_{\mathrm{NP}}$ \\ 
\midrule
ConvNP    &  
$\textbf{2.11} { \scriptstyle \,\pm\, 0.01 }$  &   
$0.99 { \scriptstyle \,\pm\, 0.42 }$           &   
$\textbf{6.92} { \scriptstyle \,\pm\, 0.10 }$  &   
$-0.27 { \scriptstyle \,\pm\, 0.00 }$           &   
$\textbf{9.89} { \scriptstyle \,\pm\, 0.09}$   &   
$0.17 { \scriptstyle \,\pm\, 0.00 }$           &   
$\textbf{4.58} { \scriptstyle \,\pm\, 0.04 }$  &   
$0.14  { \scriptstyle \,\pm\, 0.00}$           \\  
ANP   &
$1.66 { \scriptstyle \,\pm\, 0.03 }$           &  
$1.64 { \scriptstyle \,\pm\, 0.03 }$           &  
$5.98 { \scriptstyle \,\pm\, 0.08 }$           &  
$6.04 { \scriptstyle \,\pm\, 0.10 }$           &  
$9.18 { \scriptstyle \,\pm\, 0.08 }$           &  
$8.91 { \scriptstyle \,\pm\, 0.06 }$           &  
$-10.8 { \scriptstyle \,\pm\, 1.99 }$           &  
$-6.45 { \scriptstyle \,\pm\, 0.99 }$           \\ 
\bottomrule
\end{tabular}
\end{adjustbox}
\end{center}
\vspace*{-1.0em}
\end{table}

\begin{figure}[t]
    \centering
\begin{subfigure}[b]{0.43\textwidth}
    \begin{subfigure}[b]{0.48\columnwidth}
        \includegraphics[width=\textwidth]{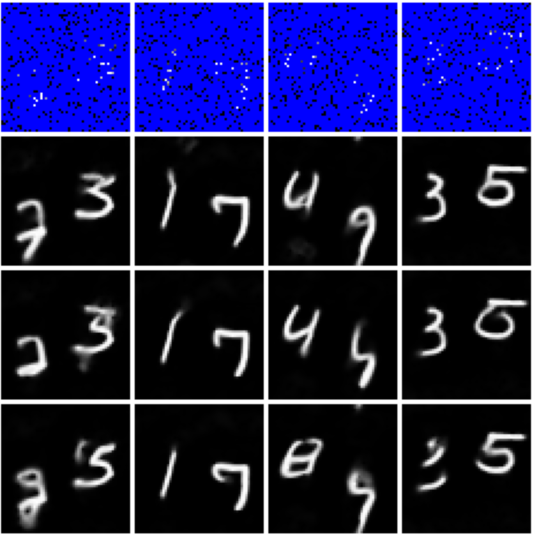}
        \subcaption{ConvNP}
        \label{fig:images_zsmm_convnp}
    \end{subfigure}
    \begin{subfigure}[b]{0.48\columnwidth}
        \includegraphics[width=\textwidth]{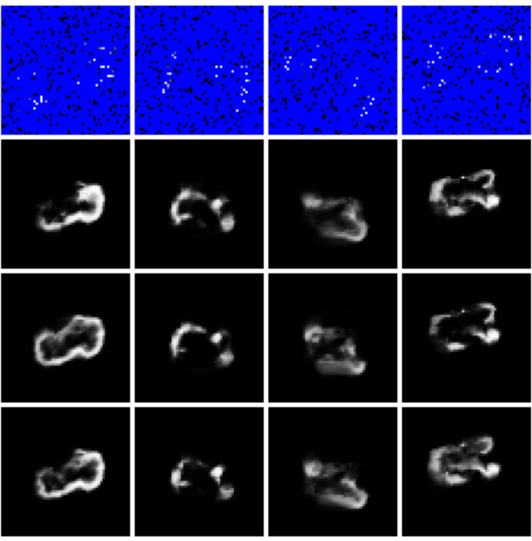}
        \subcaption{ANP}
        \label{fig:images_zsmm_anp}
    \end{subfigure}
\end{subfigure}
\begin{subfigure}[b]{0.56\textwidth}
    \begin{subfigure}[b]{0.48\columnwidth}
        \includegraphics[width=\textwidth]{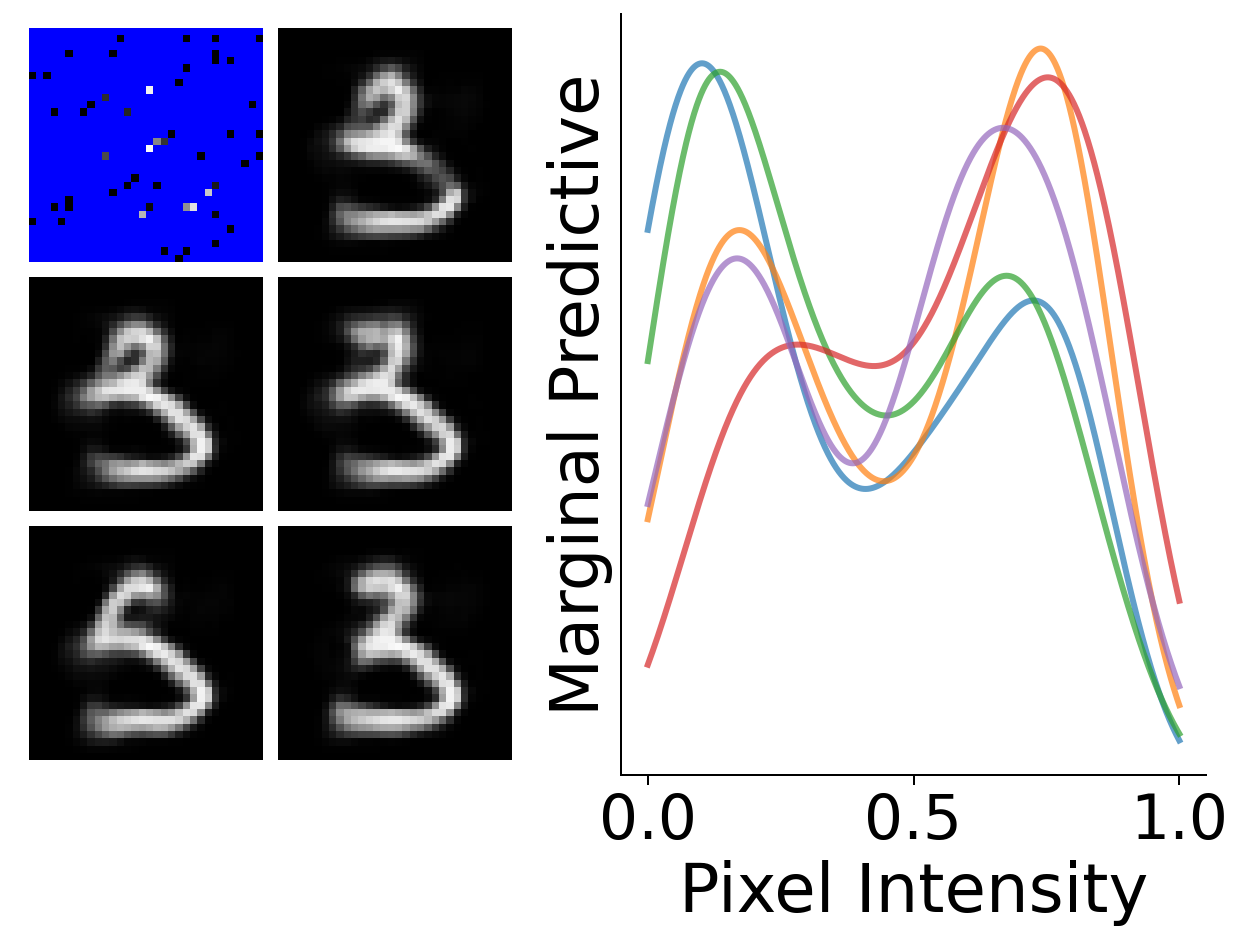}
        \subcaption{ConvNP}
        \label{fig:images_marginal_convnp}
    \end{subfigure}
    \begin{subfigure}[b]{0.48\columnwidth}
        \includegraphics[width=\textwidth]{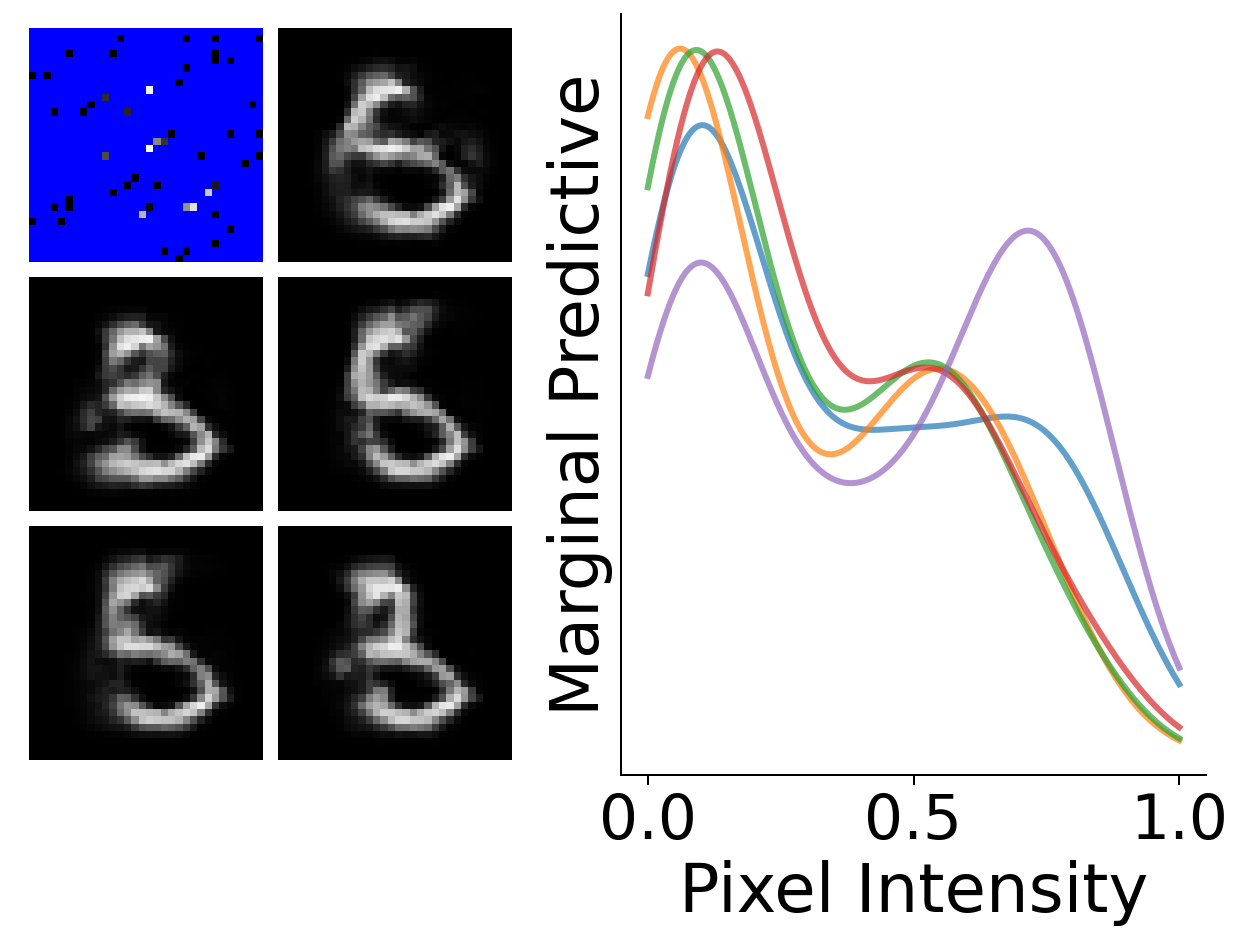}
        \subcaption{ANP}
        \label{fig:images_marginal_anp}
    \end{subfigure}
\end{subfigure}
\caption{Left two plots: predictive samples on zero-shot multi MNIST. Right two plots: samples and marginal predictives on standard MNIST. We plot the density of the five marginals that maximize Sarle's bimodality coefficient \cite{ellison1987effect}. We use $\gL_{\mathrm{ML}}$ for training.
Blue pixels are not in the context set.}
\label{fig:images_zsmm}
\end{figure}
We also test the ConvNP's ability to learn non-Gaussian predictive distributions.
\cref{fig:images_marginal_convnp} shows that the ConvNP can learn highly multimodal predictives, enabling the generation of diverse yet coherent samples. 
A quantitative comparison of models using log-likelihood lower bounds is provided in \cref{table:images_main_text}, where ConvNP trained with $\gL_{\mathrm{ML}}$ consistently achieves the highest values.
\cref{app:details_images} provides details regarding the data, architectures, and protocols used in our image experiments.
In \cref{app:additional_images}, we provide samples and further quantitative comparisons of models trained on SVHN \citep{netzer2011reading}, MNIST \cite{lecun1989backpropagation}, and $32\times 32$ CelebA \cite{netzer2011reading} in a range of scenarios, along with full experimental details. 
%

\subsection{Environmental Data}
\label{sec:climate_prediction}
We next consider a real-world data set, ERA5-Land \citep{era5land}, containing environmental measurements at a $\sim$9 km spacing across the globe. 
We consider predicting daily precipitation $y$ at position $\vx$. 
We also provide the model with orography (elevation) and temperature values.
We choose a large region of central Europe as our train set, and use regions east, west and south as held-out test sets. 
For such tasks, models must be able to make predictions at locations spanning a range different from the training set, inhibiting the deployment of NPs not equipped with TE.
To sample a task at train time, we sample a random date between 1981 and 2020, then sample a sub-region within the train region, which is split into context and target sets.
In this section, we train using $\gL_{\mathrm{ML}}$. See \cref{app:details_environmental} for details.
\begin{table}[t] 
\footnotesize
\caption{
    Joint predictive log-likelihoods (LL) and RMSEs on ERA5-Land, averaged over 1000 tasks.
}
\label{table:era5_loglik_rmse}
\begin{center}
\begin{tabular}{@{}llcccc@{}}
\toprule
&        & 
    Central (train) & 
    West (test)     & 
    East (test)     & 
    South (test)   \\ 
    \midrule
    \multirow{2}{*}{LL}   & 
    ConvNP                & 
    $\textbf{4.47} { \scriptstyle \,\pm\, 0.07 }$  & 
    $\textbf{4.55} { \scriptstyle \,\pm\, 0.08 }$  & 
    $\textbf{5.07} { \scriptstyle \,\pm\, 0.07 }$  & 
    $\textbf{4.65} { \scriptstyle \,\pm\, 0.08 }$   \\
    & GP     & 
    $3.33 { \scriptstyle \,\pm\, 0.06 }$           & 
    $3.65 { \scriptstyle \,\pm\, 0.06 }$           & 
    $4.07 { \scriptstyle \,\pm\, 0.06 }$           & 
    $3.34 { \scriptstyle \,\pm\, 0.06}$ \\
    \multirow{2}{*}{RMSE ($\times 10^{-2}$)}                           & 
    ConvNP                                          & 
    $\textbf{5.72} { \scriptstyle \,\pm\, 0.33 }$ & 
    $\textbf{5.77} { \scriptstyle \,\pm\, 0.37 }$ & 
    $\textbf{3.23} { \scriptstyle \,\pm\, 0.22 }$ & 
    $\textbf{6.92} { \scriptstyle \,\pm\, 0.39 }$          \\
    & GP     & 
    $\textbf{6.26} { \scriptstyle \,\pm\, 0.30 }$          & 
    $\textbf{5.75} { \scriptstyle \,\pm\, 0.29 }$ & 
    $\textbf{3.10} { \scriptstyle \,\pm\, 0.18 }$ & 
    $7.94 { \scriptstyle \,\pm\, 0.44 }$ \\ 
    \bottomrule
\end{tabular}
\end{center}
\end{table}
\begin{figure}[th!]
\vspace*{-0.8em}
\begin{subfigure}{0.24\textwidth}
  \includegraphics[width=\linewidth]{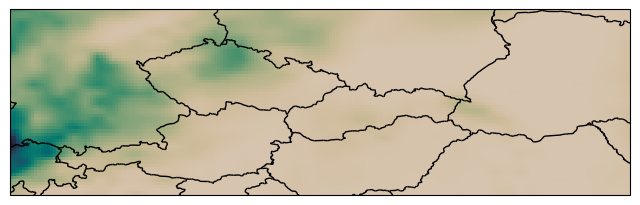}
  \label{fig:ground_truth}
  \vspace{-1.5em}
  \caption{Ground truth data}
\end{subfigure}\hfil
\begin{subfigure}{0.24\textwidth}
  \includegraphics[width=\linewidth]{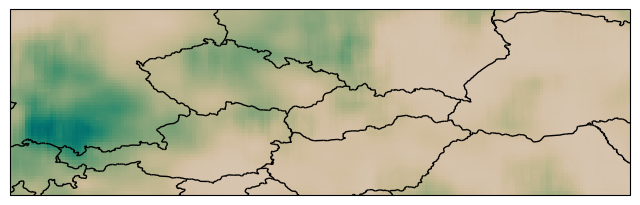}
  \label{fig:np_sample1}
  \vspace{-1.5em}
  \caption{ConvNP sample 1}
\end{subfigure}\hfil 
\begin{subfigure}{0.24\textwidth}
  \includegraphics[width=\linewidth]{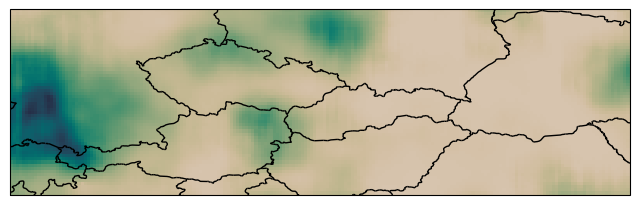}
  \label{fig:np_sample2}
  \vspace{-1.5em}
  \caption{ConvNP sample 2}
\end{subfigure}\hfil 
\begin{subfigure}{0.24\textwidth}
  \includegraphics[width=\linewidth]{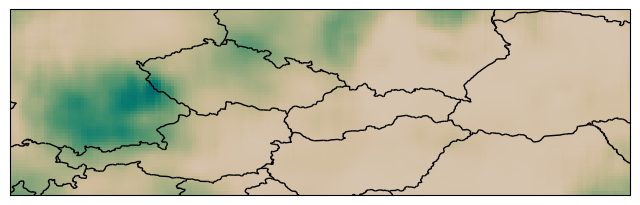}
  \label{fig:np_sample3}
  \vspace{-1.5em}
  \caption{ConvNP sample 3}
\end{subfigure}\\
\begin{subfigure}{0.24\textwidth}
  \includegraphics[width=\linewidth]{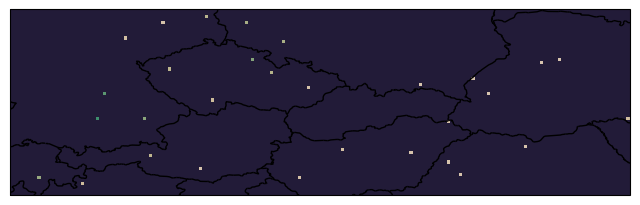}
  \label{fig:context_Set}
  \vspace{-1.5em}
  \caption{Context set}
\end{subfigure}\hfil 
\begin{subfigure}{0.24\textwidth}
  \includegraphics[width=\linewidth]{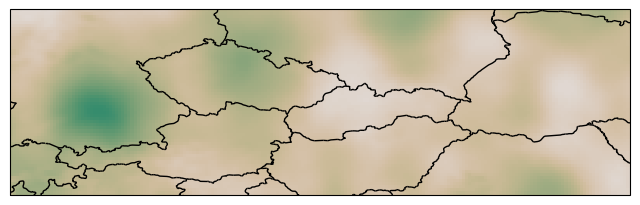}
  \label{fig:gp_sample1}
  \vspace{-1.5em}
  \caption{GP sample 1}
\end{subfigure}\hfil
\begin{subfigure}{0.24\textwidth}
  \includegraphics[width=\linewidth]{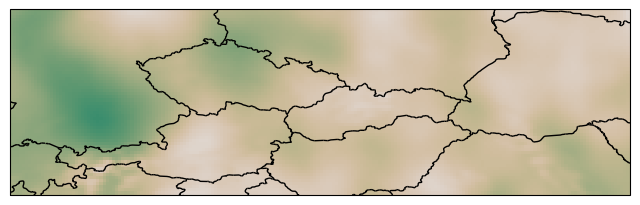}
  \label{fig:gp_sample2}
  \vspace{-1.5em}
  \caption{GP sample 2}
\end{subfigure}\hfil
\begin{subfigure}{0.24\textwidth}
  \includegraphics[width=\linewidth]{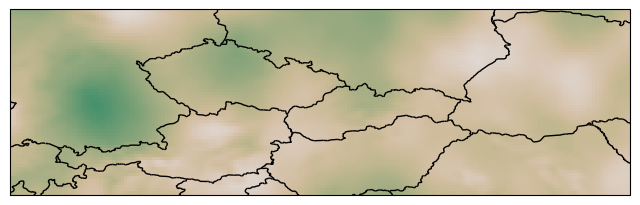}
  \label{fig:gp_sample3}
  \vspace{-1.5em}
  \caption{GP sample 3}
\end{subfigure}
\caption{Predictive samples overlaid on central Europe. Darker colours show higher precipitation. In (e), coloured pixels represent context points. GP samples often take negative values (lighter than ground truth data, see \cref{app:environmental_gp_baselines} for a discussion), whereas the NP has learned to produce non-negative samples which capture the \emph{sparsity} of precipitation. The model is trained on subregions roughly the size of the lengthscale of the precipitation process. More samples in \cref{app:environmental_figures}.}
\label{fig:precipitation_example_task}
\end{figure}
\begin{figure}[h!]
    \centering
    \includegraphics[width=1.\textwidth]{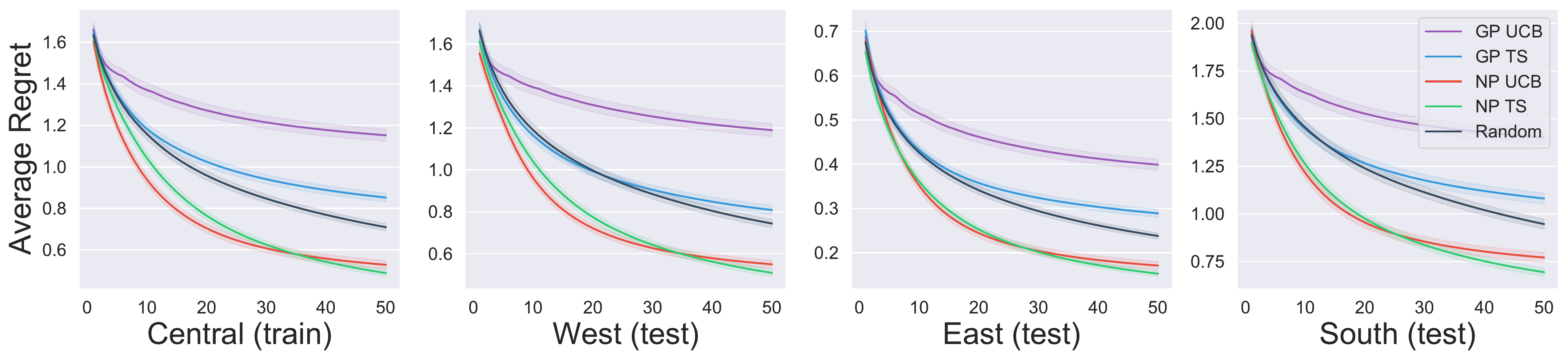}%
    \vspace*{-4pt}%
    \caption{Average regret plotted against number of points queried, averaged over 5000 tasks.}
    \label{fig:bayesopt}
\end{figure}

\paragraph{Prediction}
We first evaluate the ConvNP's predictive performance, comparing to a GP trained individually on each task as a baseline.
In about $10\%$ of tasks, the GP obtains a poor likelihood ($< 0$ nats); we remove these outliers from the evaluation. 
The results are shown in \cref{table:era5_loglik_rmse}. 
The ConvNP and GP have comparable RMSEs except on south, where the ConvNP outperforms the GP. 
However, the ConvNP consistently outperforms the GP in log-likelihood, which is expected for the following reasons:
\begin{inlinelist}
    \item the GP does not share information between tasks and hence is prone to overfitting on small context sets, resulting in overconfident predictions; and
    \item the ConvNP can learn non-Gaussian predictive densities (illustrated in \cref{app:environmental_figures}).
\end{inlinelist}  
\cref{fig:precipitation_example_task} shows samples from the predictive process of a ConvNP and GP, over the whole of the train region.
This demonstrates spatial extrapolation, as the ConvNP is trained only on random subregions.

\paragraph{Bayesian optimization}
We demonstrate the ConvNP in a downstream task by considering a toy Bayesian optimisation problem, where the goal is to identify the location with heaviest rainfall on a given day. 
We also test the ConvNP's spatial generalization, by optimising over larger regions (for central, west, and south) than the model was trained on. 
We test both Thompson sampling (TS) \citep{thompson1933likelihood} and upper confidence bounds (UCB) \citep{auer2002using} as methods for acquiring points.
Note that TS requires coherent samples.
The results are shown in \cref{fig:bayesopt}. 
On all data sets, ConvNP TS and UCB significantly outperform the random baseline by the 50th iteration; the GP does not reliably outperform random. 
We hypothesize this is due to its overconfidence, in line with the results on prediction.

\section{Related Work and Discussion}
\label{sec:conclusions}

We have introduced the ConvNP, a TE map from observed data sets to predictive SPs.
Within the NP framework, ConvNPs bring together three key considerations.

\paragraph{Expressive joint densities}       
ConvNPs extend ConvCNPs to allow for expressive joint predictive densities.
A powerful alternative approach is to combine  \textit{autoregressive} (AR) models (such as PixelCNN++ \citep{salimans2017pixelcnn++} and the Image Transformer \citep{parmar2018image}) with CNPs.
A difficulty in introducing AR sampling to CNPs is the need to specify a sampling ordering, which is in tension with permutation invariance and relates to the discussion on Bayes-consistency (\cref{sec:convnp_objective_comparison}).
Several works have considered \textit{exchangeable} NP models \citep{louizos2019functional, kumar2018consistent, korshunova2020conditional}, providing an avenue for future investigation.

\paragraph{Translation equivariance}
There has been much interest in incorporating equivariance with respect to symmetry groups into neural networks, e.g.\ \citep{kondor2008group, cohen2016group, cohen19gauge, kondor2018generalization}, with a comprehensive treatment provided by \citet{bloem2020probabilistic}.
ConvNPs leverage a simple relationship between translation equivariance and stationarity to construct a model particularly well suited to stationary SPs.
Similar ideas have been explored for 3D point-cloud modelling \citep{qi2017pointnet, qi2017pointnetplus}.
For example, the models proposed in \citep{wu2019pointconv, wang2018deep} perform convolutions over continuous domains, which are both TE and permutation invariant, achieving excellent performance in point-cloud classification.
In contrast with ConvNPs, point-cloud models 
\begin{inlinelist}
    \item are generally used as classification function approximators, rather than meta or few-shot learners;
    \item are typically tailored towards point clouds, making heavy use of specific properties for function design; and
    \item have not considered latent variable or stochastic generalizations.
\end{inlinelist}

\paragraph{Neural Process training procedures}
One of the key benefits of CNPs is their simple  maximum-likelihood training procedure \citep{garnelo2018conditional, gordon2020convolutional}. 
In contrast, NPs are usually trained with VI-inspired objectives \citep{garnelo2018neural}, variants of which are empirically investigated in \citet{le2018empirical}.
We propose an alternative training procedure that discards VI in favor of a (biased) maximum-likelihood approach that focuses on directly optimizing predictive performance.
In this regard, our work is similar to \citet{gordon2018meta}, albeit in a very different domain.
This approach has two benefits:
\begin{inlinelist}
    \item it does not require carefully designed inference procedures, and works ``out-of-the-box'' for a range of models; and
    \item empirically, we find that it leads to improved performance for ConvNPs and, often, for ANPs.
\end{inlinelist}

\section*{Broader Impact}

The proposed model and training procedure are geared towards off-the-grid, spatio-temporal applications.
As such, ConvNPs are particularly well-suited for many important applications in the medical and environmental sciences, such as modelling electronic healthcare records or the temporal evolution of temperatures.
We hope that one impact of ConvNPs is to increase the usability of deep learning tools in the sciences.
Another potential application of ConvNPs is image generation, which has potentially negative societal impacts. 
However, ConvNPs focus on predicting distributions over images, and are far from state-of-the-art in terms of perceptual quality.
Thus we believe the societal impact of ConvNPs via image-generation will be insignificant.

\section*{Acknowledgements}
The authors would like to thank Invenia Labs for their support during the project. We thank William Tebbutt for insightful discussions. We thank David R.~Burt, Eric Nalisnick, Cozmin Ududec and John Bronskill for helpful comments on the manuscript. Andrew Y.~K.~Foong gratefully acknowledges funding from a Trinity Hall Research Studentship and the George and Lilian Schiff Foundation. Part of the work was done while Yann Dubois was working as an AI resident at Facebook. Richard E.~Turner is supported by Google, Amazon, ARM, Improbable, EPSRC grants EP/M0269571 and EP/L000776/1, and the UKRI Centre for Doctoral Training in the Application of Artificial Intelligence to the study of Environmental Risks (AI4ER).

\bibliography{references}
\bibliographystyle{plainnat}

\clearpage
\newpage

\appendix

\section{Formal Definitions and Set-up}
\label{app:bayes_map}
\paragraph{Notation}
We first review the notation introduced in the main body for convenience. 
Let $\gX = \mathbb{R}^{d_{\mathrm{in}}}$ and $\gY = \mathbb{R}$ denote the input and output spaces respectively, and let $(\vx, y)$ denote a generic input-output pair (higher-dimensional outputs can be treated easily).
Define $\gS_N = (\gX \times \gY)^N$ to be the collection of all data sets of size $N$, and let $\gS \coloneqq \bigcup_{N=1}^\infty \gS_N$. 
Let $D_c, D_t \in \gS$ denote a \textit{context} and \textit{target} set respectively. Later, as is common in recent meta-learning approaches, we will consider predicting the target set from the context set \citet{garnelo2018conditional,garnelo2018neural}.
Let $\mX_c = (\vx_1, \hdots, \vx_{N_c})$ denote a matrix of context set inputs, with $\vy_c = (y_1, \hdots, y_{N_c})$ the corresponding outputs; $\mX_t, \vy_t$ are defined analogously.
We denote a single \textit{task} as $\xi = (D_c, D_t) = (D_c, (\mX_t, \vy_t)).$

\paragraph{Stochastic processes}
For our purposes, a stochastic process on $\gX$ will be defined\footnote{Strictly speaking, this is non-standard terminology, since $P$ is the \emph{law} of a stochastic process.} as a probability measure on the set of functions from $\gX \to \mathbb{R}$, i.e.~$\mathbb{R}^{\gX}$, equipped with the product $\sigma$-algebra of the Borel $\sigma$-algebra over each index point \citep{tao2011introduction}, denoted $\Sigma$. The measurable sets of $\Sigma$ are those which can be specified by the values of the function at a countable subset $I \subset \gX$ of its input locations. Since in practice we only ever observe data at a finite number of points, this is sufficient for our purposes.
We denote the set of all such measures as $\gP(\gX)$.
We model the world as having a ground truth stochastic process $P \in \gP(\gX)$.
Consider a Kolmogorov-consistent (i.e.~consistent under marginalization) collection of distributions on finite index sets $I \subset \gX$. By the Kolmogorov extension theorem, there exists a unique measure on $(\mathbb{R}^{\gX}, \Sigma)$ that has these distributions as its finite marginals. Hence we may think of these stochastic processes as defined by their finite-dimensional marginals.

\paragraph{Conditioning on observations}
We now define what it means to condition on observations of the stochastic process $P$. Let $p(\vy| \mX)$ denote the density with respect to Lebesgue measure of the finite marginal of $P$ with index set $\mX$ (we assume these densities always exist). Assume we have observed $P$ at a finite number of points $(\mX_c, \vy_c)$, with $p(\vy_c | \mX_c) > 0$. Let $\mX_t$ be another finite index set. Then we define the finite marginal at $\mX_t$ conditioned on $D_c$ as the distribution with density
\begin{align} \label{eqn:conditionals}
    p(\vy_t|\mX_t, D_c) = \frac{p(\vy_t, \vy_c|\mX_t, \mX_c)}{p(\vy_c|\mX_c)}.
\end{align}
It can easily be verified that for a fixed $D_c$, the conditional marginal distributions for different $\mX_t$ in \cref{eqn:conditionals} are Kolmogorov-consistent. Again, the Kolmogorov extension theorem implies there is a unique measure $P_{D_c}$ on $(\mathbb{R}^{\gX}, \Sigma)$ that has \cref{eqn:conditionals} as its finite marginals. We now define $\pi_P: \gS \to \gP(\gX), \pi_P: D_c \mapsto P_{D_c}$ as the \emph{prediction map}, so called because it maps each observed dataset $D_c$ to the exact predictive stochastic process conditioned on $D_c$. The meta-learning task may be viewed as learning an approximation to the prediction map.

\section{Stationary Processes and Translation Equivariance}\label{app:stationary_implies_equivariant}

\begin{definition}[Translating data sets and SPs] We define the action of the translation operator $T_{\bm{\tau}}$ on data sets and SPs, where $\bm{\tau} \in \gX$ denotes the shift vector of the translation.\footnote{To prevent notational clutter, the same symbol, $T_{\bm{\tau}}$, will denote translations on multiple kinds of objects.}
\begin{enumerate}
    \item Let $(\vx_n, \vy_n)_{n=1}^N = S \in \gS$. For the index set $\mX = (\vx_1, \hdots, \vx_n)$, the translation by $\bm{\tau}$ is defined as $T_{\bm{\tau}}\mX = (\vx_1 + \bm{\tau}, \hdots, \vx_n + \bm{\tau})$. Similarly, $T_{\bm{\tau}}S \coloneqq (\vx_n + \bm{\tau}, \vy_n)_{n=1}^N$.
    \item For a function $f \in \mathbb{R}^{\gX}$, define $T_{\bm{\tau}}f(\vx) \coloneqq f(\vx - \bm{\tau})$ for all $\vx \in \gX$. Let $F \in \Sigma$ be a measurable set of functions. Then $T_{\bm{\tau}}F \coloneqq \{ T_{\bm{\tau}} f: f \in F \}$.
    \item For any SP $P \in \gP(\gX)$, we now define $T_{\bm{\tau}}P$ by setting\footnote{This is well-defined since $\Sigma$ is closed under translations. Equivalently, we could define $T_{\bm{\tau}}P$ as the push-forward of $P$ under the the translation map on functions, $T_{\bm{\tau}}: \mathbb{R}^{\gX} \to \mathbb{R}^{\gX}$.} $T_{\bm{\tau}}P(F) \coloneqq P(T_{-\bm{\tau}}F)$ for all $F \in \Sigma$.

\end{enumerate}

\end{definition}

\begin{definition}[Stationary SP] 
We say a stochastic process is (strictly) \emph{stationary} if the densities of its finite marginals satisfy
\begin{align}
    p(\vy_t | \mX_t) = p(\vy_t | T_{\bm{\tau}}\mX_t)
\end{align}
for all $\vy_t$, $\mX_t$ and $\bm{\tau}$. 
\end{definition}

\begin{definition}[Translation equivariant prediction maps]
\label{property:translation_equivariance}
We say that $\Psi\colon \gS \to \gP(\gX)$ is \emph{translation equivariant} if $\Psi(T_{\bm{\tau}}S) = T_{\bm{\tau}}\Psi(S)$ for any data set $S \in \gS$ and shift $\bm{\tau} \in \gX$. 
\end{definition}
The following simple statement highlights the link between stationarity and translation equivariance:
\begin{proposition} \label{prop:stationary_implies_equivariant}
    Let $P$ be a stationary SP. Then the prediction map $\pi_P$ is translation equivariant.\footnote{We exclude conditioning on observations that have zero density, so that the prediction map is well defined.}
\end{proposition}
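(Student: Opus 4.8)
The plan is to reduce the claimed equality of two stochastic processes to equality of their finite-dimensional marginal densities, and then to obtain the latter directly from strict stationarity of $P$ together with the definition of conditioning in \cref{eqn:conditionals}. Fix a generic data set $D_c = (\mX_c, \vy_c) \in \gS$ with $p(\vy_c \mid \mX_c) > 0$ and a shift $\bm{\tau} \in \gX$; we must show $\pi_P(T_{\bm{\tau}}D_c) = T_{\bm{\tau}}\pi_P(D_c)$. By the Kolmogorov extension theorem, as set up in \cref{app:bayes_map}, a measure in $\gP(\gX)$ is uniquely determined by its finite marginals, so it suffices to check that these two processes assign the same density to every finite index set $\mX_t$.

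Next I would compute each side explicitly. The left-hand side is immediate from the definition of the prediction map applied to $T_{\bm{\tau}}D_c = (T_{\bm{\tau}}\mX_c, \vy_c)$: its marginal density at $\mX_t$ is $p(\vy_t, \vy_c \mid \mX_t, T_{\bm{\tau}}\mX_c)\, /\, p(\vy_c \mid T_{\bm{\tau}}\mX_c)$, where stationarity already guarantees the denominator is positive, so conditioning is well-defined. For the right-hand side I would first establish the general fact that for any $P' \in \gP(\gX)$, the finite marginal of $T_{\bm{\tau}}P'$ at $\mX_t$ coincides with the finite marginal of $P'$ at the \emph{shifted} index set $T_{-\bm{\tau}}\mX_t$. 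This is a short cylinder-set computation: writing a cylinder set at index set $\mX_t$ as $F = \{f : (f(\vx_1),\dots,f(\vx_n)) \in A\}$, one uses $T_{\bm{\tau}}P'(F) = P'(T_{-\bm{\tau}}F)$ and $T_{-\bm{\tau}}f(\vx) = f(\vx + \bm{\tau})$ to see that $T_{-\bm{\tau}}F$ is precisely the cylinder set at index set $T_{-\bm{\tau}}\mX_t$ with the same Borel set $A$. Applying this with $P' = \pi_P(D_c)$ shows that $T_{\bm{\tau}}\pi_P(D_c)$ has marginal density $p(\vy_t, \vy_c \mid T_{-\bm{\tau}}\mX_t, \mX_c)\, /\, p(\vy_c \mid \mX_c)$ at $\mX_t$.

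Finally I would match the two expressions using strict stationarity of $P$, which says $p(\vy \mid \mX) = p(\vy \mid T_{\bm{\sigma}}\mX)$ for every shift $\bm{\sigma}$. Applying this with $\bm{\sigma} = -\bm{\tau}$ to the joint index set $(\mX_t, T_{\bm{\tau}}\mX_c)$, and noting $T_{-\bm{\tau}}(\mX_t, T_{\bm{\tau}}\mX_c) = (T_{-\bm{\tau}}\mX_t, \mX_c)$, turns the numerator of the left-hand side into that of the right-hand side; applying it to $\mX_c$ equates the two denominators (and, in passing, confirms positivity of $p(\vy_c \mid T_{\bm{\tau}}\mX_c)$). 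Hence the marginal densities agree for every $\mX_t$, so $\pi_P(T_{\bm{\tau}}D_c) = T_{\bm{\tau}}\pi_P(D_c)$, and since $D_c$ and $\bm{\tau}$ were arbitrary, $\pi_P$ is translation equivariant.

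The step I expect to be the main obstacle is the ``general fact'' in the second paragraph: correctly tracking how the pushforward under $T_{\bm{\tau}}$ acts on finite-dimensional marginals, i.e. that it translates the \emph{index set} by $-\bm{\tau}$ (not $+\bm{\tau}$), and verifying that this is consistent with the sign conventions chosen for the action of $T_{\bm{\tau}}$ on functions and on SPs. Once this is pinned down, the remainder is routine manipulation of \cref{eqn:conditionals} and the stationarity identity.
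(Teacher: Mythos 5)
Your proof is correct and follows essentially the same route as the paper's: reduce to finite-dimensional conditional densities, observe that the pushforward $T_{\bm{\tau}}$ shifts the index set by $-\bm{\tau}$, and apply strict stationarity to the numerator and denominator of \cref{eqn:conditionals}. The only difference is that you explicitly carry out the cylinder-set computation relating the marginals of $T_{\bm{\tau}}P'$ at $\mX_t$ to those of $P'$ at $T_{-\bm{\tau}}\mX_t$ (with the correct sign, matching the paper's conventions), a step the paper's proof leaves implicit in its ``it suffices to show'' reduction.
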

\begin{proof}
Let $p(\vy_t|\mX_t, D_c)$ denote the finite dimensional density of $\pi_P(D_c)$ at index set $\mX_t$. To show that $\pi_P(T_{\bm{\tau}}D_c) = T_{\bm{\tau}}\pi_P(D_c)$ it suffices to show that $p(\vy_t|\mX_t, T_{\bm{\tau}}D_c) = p(\vy_t|T_{-\bm{\tau}}\mX_t, D_c)$. We have
\begin{align*}
    p(\vy_t|\mX_t, T_{\bm{\tau}}D_c) &= \frac{p(\vy_t, \vy_c| \mX_t, T_{\bm{\tau}}\mX_c)}{p(\vy_c|T_{\bm{\tau}}\mX_c)}\\
    &= \frac{p(\vy_t, \vy_c| T_{\bm{-\tau}}\mX_t, \mX_c)}{p(\vy_c|\mX_c)}\\
    &= p(\vy_t|T_{-\bm{\tau}}\mX_t, D_c),
\end{align*}
where we used the stationarity assumption in the second line. \qedhere
\end{proof}

\section{Description and Pseudocode for ConvCNP and ConvNP}
\label{app:pseudocode}
We provide additional details and pseudo-code for ConvCNP and ConvNP. 
Similar to \citet{gordon2020convolutional}, we distinguish between the ``on-the-grid'' and ``off-the-grid'' versions of the model. 
In our experiments, we use the ``off-the-grid'' version of the model for the 1d experiments in \cref{sec:1d_regression}, and the ``on-the-grid'' version for the image and environmental experiments in \cref{sec:image_completion,sec:climate_prediction}.

\subsection{ConvCNP Pseudo-Code and Details}
\label{sec:convcnp_pseudo_code}
\paragraph{Off-the-grid ConvCNP}
We begin by providing details for off-the-grid ConvCNP. 
As detailed in the main text, the encoder $\mathrm{E}_\vphi$ is defined by a ConvCNP, which provides a distribution over latent functions $z$.
In practice, we consider the discretized version, where we denote the grid of discretization locations as $(\vt_i)_{i=1}^K$, with $\vt_i \in \gX$.
Let $p_\vphi(\vz_i|\vt_i, D_c)$ denote the density of the latent function at the $i$th position, i.e.~at $\vz_i = z(\vt_i)$. 
Then in order to sample $\vz \sim \mathrm{E}_\vphi$ (as in e.g.~\cref{eqn:convnp_likelihod_fn} in the main body) we specify the density of the entire discretized latent function $\vz$ as:
\begin{align}
p_{\bm{\phi}}(\vz| D_c) = \prod_{i=1}^K p_{\bm{\phi}}(\vz_i| \vt_i, D_c) = \prod_{i=1}^K \mathcal{N}(\vz_i; \mu(\vt_i, D_c), \sigma^2(\vt_i, D_c)),
\end{align}
where $\mu$ and $\sigma^2$ are parametrized by ConvDeepSets \citep{gordon2020convolutional}. 

ConvDeepSets can be expressed as the composition of two functions.
Let $\Phi = \rho \circ \gamma$ be a ConvDeepSet.
$\gamma$ maps a data set $D$ to its functional representation via 
\[
    \gamma(D) = \sum_{(\vx, y) \in D} \phi(y) \psi(\,\mathord{\cdot} - \vx).
\]
Following \citet{gordon2020convolutional}, we set $\phi(y) = [1, y]^{\mathsf{T}} \in \mathbb{R}^2$, and $\psi$ to be a radial basis function. 
$\gamma(D)$ is itself discretized by evaluating it on a grid (which for simplicity we can also take to be $(\vt_i)_{i=1}^K$).

Next, $\rho$ maps the discretized $\gamma(D)$ to a continuous function, which we denote $f = \rho(\gamma(D))$.
$\gamma$ is itself implemented in two stages. 
First a deep CNN maps the discretized $\gamma(D)$ to a discretized output.
Second, this discrete output is mapped to a continuous function by using the CNN outputs as weights for evenly-spaced basis functions (again employing radial basis functions), which we denote by $\psi_{\rho}$. 

Whenever models output standard deviations, we enforce positivity via a function (e.g. the soft-plus function), which we denote $\text{pos}(\cdot)$.
Pseudo-code for a forward pass through an off-the-grid ConvCNP is provided in \cref{alg:convcnp_off_the_grid}.
Note the forward pass involves the computation of a \emph{density channel} $\vh^{(0)}$, whose role intuitively is to allow the model to know where it has observed datapoints.
This is discussed further in \citet{gordon2020convolutional}.
\begin{algorithm}[t]
\caption{Forward pass through ConvCNP (off-the-grid)}
\label{alg:convcnp_off_the_grid}
    \begin{algorithmic}[1]
    \Require $\rho = (\text{CNN}, \psi_\rho)$, $\psi$, and density $\zeta$ 
    \Require context $(\vx_n, y_n)_{n=1}^N$, target $(\vx^\ast_m)_{m=1}^M$ 
    \State $\text{lower, upper} \leftarrow \text{range}\!\left( (\vx_n)_{n=1}^N \!\cup\! (\vx^\ast_m)_{m=1}^M \right)$
    \State $(\vt_i)_{i=1}^K \leftarrow \text{uniform\_grid(lower, upper} ; \gamma)$
    \State $\vh_i \leftarrow \sum_{n=1}^N \begin{bmatrix} 1 & y_n  \end{bmatrix}^\top \psi(\vt_i - \vx_n)$ 
    \State $\vh^{(1)}_i \leftarrow \vh^{(1)}_i / \vh^{(0)}_i$ 
    \State $(f_\mu(\vt_i), f_\sigma(\vt_i))_{i=1}^T \leftarrow \textsc{CNN}((\vt_i, \vh_i)_{i=1}^T)$
    \State $\vmu_m \leftarrow \sum_{i=1}^K f_\mu(\vt_i) \psi_\rho(\vx^\ast_m - \vt_i)$ 
    \State $\vsigma_m \leftarrow \sum_{i=1}^K \text{pos}(f_\sigma(\vt_i)) \psi_\rho(\vx^\ast_m - \vt_i)$ \\
    \Return $(\vmu_m, \vsigma_m)_{m=1}^M$
\end{algorithmic}
\end{algorithm}

\paragraph{On-the-grid ConvCNP}
Next, we describe the ConvCNP for on-the-grid data, which is used in our image and environmental experiments. This version is simpler to implement in practice, and is applicable whenever the input data is confined to a regular grid.
As in \citet{gordon2020convolutional} we choose the discretization $(\vt_i)_{i=1}^K$ to be the pixel locations. 

Let $ \mathrm{I} \in \R^{H \times W \times C}$ be an image of dimensions $H, W, C$ (height, width, and channels, respectively).
We define a mask $\mathrm{M}_c$, which is such that $[\mathrm{M}_c]_{i,j} = 1$ if pixel location $(i,j)$ is in the context set, and $0$ otherwise.
Masking an image is then achieved via element-wise multiplication, denoted $\mathrm{M}_c \odot \mathrm{I}$.
This allows us to flexibly define context and target sets for an image (target sets are typically considered as the complete image, so the masks $\mathrm{M}_c$ are simply binary-valued tensors with the same dimensions as the image).
In this setting, we implement $\phi$, by selecting the context points, and prepend the context mask: $\phi = [\mathrm{M}_c, \mathrm{Z}_c]^\top$. 
We then implement $\gamma$ by a simple convolutional layer, which we denote $\textsc{conv}_\vtheta$ to emphasize that we use a standard 2d convolutional layer.
Full pseudo-code for the on-the-grid ConvCNP is provided in \cref{alg:convcnp_on_the_grid}.
\begin{algorithm}[t]
\caption{ConvCNP Forward pass (on-the-grid)}
\label{alg:convcnp_on_the_grid}
    \begin{algorithmic}[1]
    \Require $\rho = (\text{CNN}, \psi_\rho)$ and $\textsc{conv}_\vtheta$
    \Require image $\mathrm{I}$, context $\mathrm{M}_c$, and target mask $\mathrm{M}_t$
    \State We discretize at the pixel locations.
    \State $\mathrm{I}_c \leftarrow \mathrm{M}_c \odot \mathrm{I}$
    \State $\vh \leftarrow \textsc{conv}_\vtheta ([\mathrm{M}_c, \mathrm{I}_c]^\top ) $
    \State $\vh^{(1:C)} \leftarrow \vh^{(1:C)} / \vh^{(0)} $
    \State $f_t \leftarrow \mathrm{M}_t \odot \text{CNN}(\vh)$
    \State $\vmu \leftarrow f_t^{(1:C)}$ 
    \State $\vsigma \leftarrow \text{pos}( f_t^{(C+1:2C)})$\\
    \Return $(\vmu, \vsigma)$
\end{algorithmic}
\end{algorithm}

\subsection{Pseudo-Code for the ConvNP}
\label{app:convnp_pseudo_code}

The ConvNP can be implemented very simply by passing samples from the ConvCNP through an additional CNN decoder, which we denote $d_\vtheta$.
For an ``off-the-grid'' ConvNP, similarly to the ConvCNP, we must map the output of a standard CNN back to functions on a continuous domain $\gX$.
This can be achieved via an RBF mapping, similar to the off-the-grid ConvCNP, e.g. \cref{alg:convcnp_off_the_grid} lines 6, 7.
Pseudo-code for off- and on-the-grid ConvNPs are provided in \cref{alg:convnp_off_the_grid,alg:convnp_on_the_grid}, respectively.
\begin{algorithm}[t]
\caption{Forward pass through ConvNP (off-the-grid)}
\label{alg:convnp_off_the_grid}
    \begin{algorithmic}[1]
    \Require $d = (\text{CNN}, \psi_d)$, $\mathrm{E}_\vphi$ (off-the-grid ConvCNP), and number of samples $L$
    \Require context $(\vx_n, y_n)_{n=1}^N$, target $(\vx^\ast_m)_{m=1}^M$
    
    \State $\vmu_z, \vsigma_z \leftarrow \mathrm{E}_\vphi(D_c)$
    \For{$l = 1, \hdots, L$}
        \State $\vz_l \sim \gN(\vz; \vmu_z, \vsigma_z^2)$
        \State $(f_\mu(\vt_i), f_\sigma(\vt_i))_{i=1}^K \leftarrow \text{CNN}(\vz_l)$
        \State $\vmu_{m,l} \leftarrow \sum_{i=1}^T f_\mu(\vt_i) \psi_d(\vx^\ast_m - \vt_i)$ 
        \State $\vsigma_{m,l} \leftarrow \text{pos}\left(f_\sigma(\vt_i)\right)$
    \EndFor \\
    \Return $(\vmu, \vsigma)$
\end{algorithmic}
\end{algorithm}
\begin{algorithm}[t]
\caption{Forward pass through ConvNP (on-the-grid)}
\label{alg:convnp_on_the_grid}
    \begin{algorithmic}[1]
    \Require $d = \text{CNN}$, $\mathrm{E}_\vphi$ (on-the-grid ConvCNP), and number of samples $L$
    \Require image $\mathrm{I}$, context mask $\mathrm{M}_c$, and target mask $\mathrm{M}_t$
    \State $\vmu_z, \vsigma_z \leftarrow \mathrm{E}_\vphi(I, M_c)$
    \For{$l = 1, \hdots, L$}
        \State $\vz_l \sim \gN(\vz; \vmu_z, \vsigma_z^2)$
        \State $(f_\mu(\vt_i), f_\sigma(\vt_i))_{i=1}^K \leftarrow \text{CNN}(\vz_l)$
        \State $\vmu \leftarrow f_t^{(1:C)}$ 
        \State $\vsigma \leftarrow \text{pos}\left( f_t^{(C+1:2C)}\right)$
    \EndFor \\
    \Return $(\vmu, \vsigma)$
\end{algorithmic}
\end{algorithm}
Note that for the ConvNP, the discretization of the latent function $\vz$ is typically on a
pre-specified grid, and therefore lines 6 and 7 of \cref{alg:convcnp_off_the_grid} are unnecessary when calling the ConvCNP (\cref{alg:convnp_off_the_grid}, line 1).
%
%
%

\section{Translation Equivariance of the ConvNP} \label{app:equivariance_proof}
We prove that the ConvNP is a translation equivariant map from data sets to stochastic processes, by proving that the decoder and encoder are separately translation equivariant. In this section we suppress the dependence on parameters $(\vphi, \vtheta)$.
\begin{lemma}\label{lem:decoder}
Let $d$ be a measurable, translation equivariant map from $(\mathbb{R}^\gX, \Sigma)$ to $(\mathbb{R}^\gX, \Sigma)$. The ConvNP decoder $\mathrm{D}: \gP(\gX) \to \gP(\gX)$, defined by $\mathrm{D}(P) = d_*(P)$, where $d_*(P)$ is the pushforward measure under $d$, is translation equivariant.
\end{lemma}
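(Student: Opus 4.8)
The plan is to unwind the definitions of translation equivariance for maps on $\gP(\gX)$ and for the pushforward, reducing the claim to the hypothesis that $d$ commutes with the translation map $T_{\bm{\tau}}\colon \mathbb{R}^\gX \to \mathbb{R}^\gX$ on functions. Concretely, fix $P \in \gP(\gX)$ and a shift $\bm{\tau} \in \gX$; I need to show $\mathrm{D}(T_{\bm{\tau}}P) = T_{\bm{\tau}}\mathrm{D}(P)$ as measures on $(\mathbb{R}^\gX, \Sigma)$, i.e.\ that they agree on every $F \in \Sigma$. Expanding the left side, $\mathrm{D}(T_{\bm{\tau}}P)(F) = d_*(T_{\bm{\tau}}P)(F) = (T_{\bm{\tau}}P)(d^{-1}(F)) = P(T_{-\bm{\tau}}d^{-1}(F))$, using the definition of pushforward and then the definition of the translation action on SPs, $T_{\bm{\tau}}P(\cdot) = P(T_{-\bm{\tau}}(\cdot))$.

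Next I would expand the right side: $T_{\bm{\tau}}\mathrm{D}(P)(F) = \mathrm{D}(P)(T_{-\bm{\tau}}F) = d_*(P)(T_{-\bm{\tau}}F) = P(d^{-1}(T_{-\bm{\tau}}F))$. So the identity we must establish is $P(T_{-\bm{\tau}}d^{-1}(F)) = P(d^{-1}(T_{-\bm{\tau}}F))$, and it suffices to prove the set equality $T_{-\bm{\tau}}d^{-1}(F) = d^{-1}(T_{-\bm{\tau}}F)$ (after checking both sets lie in $\Sigma$, which follows from measurability of $d$ and the fact, noted in the excerpt, that $\Sigma$ is closed under translations). Writing $T_{\bm{\tau}}$ for the translation map on functions, the hypothesis that $d$ is translation equivariant means $d \circ T_{\bm{\tau}} = T_{\bm{\tau}} \circ d$ as maps $\mathbb{R}^\gX \to \mathbb{R}^\gX$. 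Since $T_{\bm{\tau}}$ is a bijection with inverse $T_{-\bm{\tau}}$, equivariance is equivalent to $T_{-\bm{\tau}} \circ d^{-1} = d^{-1} \circ T_{-\bm{\tau}}$ at the level of preimages of sets, which gives exactly the required set equality.

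The only genuinely delicate point — and the main obstacle — is being careful about the two distinct but notationally identical uses of $T_{\bm{\tau}}$: as a map on individual functions and as the induced action on measurable sets $T_{\bm{\tau}}F = \{T_{\bm{\tau}}f : f \in F\}$. I would make explicit that, for the function-level bijection $T_{\bm{\tau}}$, the set-level operation $F \mapsto T_{\bm{\tau}}F$ is just the image under $T_{\bm{\tau}}$, equivalently the preimage under $T_{-\bm{\tau}}$, so that $T_{-\bm{\tau}}F = T_{\bm{\tau}}^{-1}(F)$; then the chain $d^{-1}(T_{\bm{\tau}}^{-1}(F)) = (T_{\bm{\tau}} \circ d)^{-1}(F) = (d \circ T_{\bm{\tau}})^{-1}(F) = T_{\bm{\tau}}^{-1}(d^{-1}(F)) = T_{-\bm{\tau}}(d^{-1}(F))$ is a pure preimage-of-composition manipulation, with the middle equality being the equivariance hypothesis. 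Combining this with the measure computations above yields $\mathrm{D}(T_{\bm{\tau}}P)(F) = T_{\bm{\tau}}\mathrm{D}(P)(F)$ for all $F \in \Sigma$, hence $\mathrm{D}(T_{\bm{\tau}}P) = T_{\bm{\tau}}\mathrm{D}(P)$, which is the claim. I would also remark in passing that measurability of $d$ is what guarantees $\mathrm{D}(P) = d_*(P)$ is a well-defined element of $\gP(\gX)$ in the first place, so the statement is not vacuous.
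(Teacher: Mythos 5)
Your proposal is correct and follows essentially the same route as the paper's proof: both expand the definitions of the pushforward and of $T_{\bm{\tau}}$ on measures, and reduce the claim to the set identity $T_{-\bm{\tau}}d^{-1}(F) = d^{-1}(T_{-\bm{\tau}}F)$, which follows from the equivariance of $d$ (the paper verifies this element-wise, you via preimages of compositions --- a cosmetic difference). Your added remarks on measurability and on disambiguating the two uses of $T_{\bm{\tau}}$ are sensible but do not change the argument.
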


\begin{proof}
Let $F \in \Sigma$ be measurable. Then:
\begin{align*}
    \mathrm{D}(T_{\bm{\tau}}P)(F) &\stackrel{\text{(a)}}{=} T_{\bm{\tau}}P(d^{-1}(F)) \\
    &= P(T_{-\bm{\tau}}d^{-1}(F)) \\
    &\stackrel{\text{(b)}}{=} P(d^{-1}(T_{-\bm{\tau}}F)) \\
    &= \mathrm{D}(P)(T_{-\bm{\tau}}F) \\
    &= T_{\bm{\tau}}\mathrm{D}(P)(F).
\end{align*}
Here (a) follows from definition of the pushforward, and (b) follows because
\begin{align*}
    T_{-\bm{\tau}}d^{-1}(F) &= T_{-\bm{\tau}} \{ f : d(f) \in F \} \\
    &= \{ T_{-\bm{\tau}}f : d(f) \in F \}\\
    &= \{ f: d(T_{\bm{\tau}}f) \in F \}\\
    &= \{ f: T_{\bm{\tau}}d(f) \in F \}\\
    &= \{ f: d(f) \in T_{-\bm{\tau}}F \}\\
    &= d^{-1} (T_{-\bm{\tau}}F ) .\qedhere
\end{align*}
\end{proof}
\begin{lemma} \label{lem:ConvCNP_translation_equivariant}
The ConvNP encoder $\mathrm{E}$ (a ConvCNP), is a translation equivariant map from data sets to stochastic processes.
\end{lemma}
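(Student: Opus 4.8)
The plan is to reduce the claim to a statement about finite-dimensional marginals and then invoke the translation equivariance of ConvDeepSets established in \citet{gordon2020convolutional}. Since a stochastic process in $\gP(\gX)$ is determined by its finite-dimensional distributions (Kolmogorov extension, as recalled in \cref{app:bayes_map}), and since $\mathrm{E}(D_c) \in \gP_{\mathrm{N}}(\gX)$ has finite-dimensional densities given by \cref{eqn:convcnp_likelihood}, it suffices to show that for every finite index set $\mX_t$ and every shift $\bm{\tau} \in \gX$,
\[
p_{\mathrm{E}(T_{\bm{\tau}} D_c)}(\vy_t \mid \mX_t) = p_{T_{\bm{\tau}} \mathrm{E}(D_c)}(\vy_t \mid \mX_t) \qquad \text{for all } \vy_t,
\]
where $p_Q(\,\cdot\mid\mX_t)$ denotes the density of the $\mX_t$-marginal of $Q \in \gP(\gX)$. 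Note both sides are genuine elements of $\gP_{\mathrm{N}}(\gX)$ — the left by construction, the right because a translate of a noise GP is again a noise GP with shifted variance function — so this equality of densities is exactly equality of the two processes.

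First I would record how $T_{\bm{\tau}}$ acts on finite-dimensional densities. Unwinding the definition of the translation action on SPs, a cylinder event on index points $\mX$ is mapped by $T_{-\bm{\tau}}$ to the corresponding cylinder event on $T_{-\bm{\tau}}\mX$, so the defining relation $T_{\bm{\tau}}Q(F) = Q(T_{-\bm{\tau}}F)$ translates into the density identity $p_{T_{\bm{\tau}}Q}(\vy \mid \mX) = p_{Q}(\vy \mid T_{-\bm{\tau}}\mX)$. Applying this with $Q = \mathrm{E}(D_c)$ and expanding via \cref{eqn:convcnp_likelihood}, the right-hand side of the display above equals $\prod_{(\vx, y)\in D_t} \gN\!\left(y;\, \mu(\vx - \bm{\tau}, D_c),\, \sigma^2(\vx - \bm{\tau}, D_c)\right)$.

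Next I would expand the left-hand side directly from \cref{eqn:convcnp_likelihood}, obtaining $\prod_{(\vx, y)\in D_t} \gN\!\left(y;\, \mu(\vx, T_{\bm{\tau}}D_c),\, \sigma^2(\vx, T_{\bm{\tau}}D_c)\right)$. Because $\mu(\,\cdot\,, D_c)$ and $\sigma^2(\,\cdot\,, D_c)$ are ConvDeepSets — translation equivariant maps $\gS \to C_b(\gX)$ — we have $\mu(\,\cdot\,, T_{\bm{\tau}}D_c) = T_{\bm{\tau}}\mu(\,\cdot\,, D_c)$ and likewise for $\sigma^2$; evaluating at $\vx$ and using $T_{\bm{\tau}}f(\vx) = f(\vx - \bm{\tau})$ gives $\mu(\vx, T_{\bm{\tau}}D_c) = \mu(\vx - \bm{\tau}, D_c)$ and $\sigma^2(\vx, T_{\bm{\tau}}D_c) = \sigma^2(\vx - \bm{\tau}, D_c)$. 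Substituting, the two sides coincide, establishing the lemma.

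I do not expect a serious obstacle; the only care needed is bookkeeping of the sign of the shift as it passes between the action on measures ($T_{\bm{\tau}}Q(F) = Q(T_{-\bm{\tau}}F)$), the action on index sets, and the action on functions ($T_{\bm{\tau}}f = f(\,\cdot\, - \bm{\tau})$), and noting that this argument is for the idealized (non-discretized) ConvDeepSet, the discretized implementation being equivariant only up to the grid as already flagged in \cref{sec:convnp_parameterization}. Finally, combining this lemma with \cref{lem:decoder} yields that $\mathrm{ConvNP} = \mathrm{D} \circ \mathrm{E}$ is translation equivariant.
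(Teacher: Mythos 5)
Your proof is correct and takes essentially the same route as the paper's: both reduce the claim to the translation equivariance of the ConvDeepSet mean and variance functions (Theorem 1 of \citet{gordon2020convolutional}) together with the fact that a noise GP is determined by those two functions. The paper states the resulting identity $\mathrm{E}(T_{\bm{\tau}}S)(F) = \mathrm{E}(S)(T_{-\bm{\tau}}F)$ directly on measurable sets, whereas you verify it on finite-dimensional densities via the Kolmogorov extension theorem; your sign bookkeeping ($p_{T_{\bm{\tau}}Q}(\vy\mid\mX) = p_Q(\vy\mid T_{-\bm{\tau}}\mX)$, hence $\mu(\vx, T_{\bm{\tau}}D_c) = \mu(\vx-\bm{\tau}, D_c)$) is consistent with the conventions of \cref{app:stationary_implies_equivariant}.
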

\begin{proof}
Recall that the mean and variance $\mu(\cdot, S), \sigma^2(\cdot, S)$ (viewed as maps from $\gS \to C_b(\gX)$) of the encoder $\mathrm{E}$ are both given by ConvDeepSets. Due to the translation equivariance of ConvDeepSets \citep[Theorem 1]{gordon2020convolutional}, $\mu(\cdot, T_{\bm{\tau}}S) = T_{\bm{\tau}}\mu(\cdot, S)$ for all $S, \bm{\tau}$, and similarly for $\sigma^2$.
Let $F \in \Sigma$. Then since the measure $\mathrm{E}(S) \in \gP_{\mathrm{N}}(\gX)$ is defined entirely by its mean and variance function, $\mathrm{E}(T_{\bm{\tau}}S)(F) = \mathrm{E}(S)(T_{-\bm{\tau}}F) = T_{\bm{\tau}}\mathrm{E}(S)(F)$. 
\end{proof}
Noting that a composition of translation equivariant maps is itself translation equivariant, we obtain the following proposition:
\begin{proposition}
    Define $\mathrm{ConvNP} = \mathrm{D} \circ \mathrm{E}$. Then $\mathrm{ConvNP}$ is a translation equivariant map from data sets to stochastic processes.
\end{proposition}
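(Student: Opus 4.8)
The plan is to reduce the statement entirely to the two preceding lemmas, \cref{lem:decoder} and \cref{lem:ConvCNP_translation_equivariant}, together with the elementary fact that a composition of translation equivariant maps is translation equivariant. First I would check that the decoder $d$ satisfies the hypotheses of \cref{lem:decoder}: that it is a measurable map from $(\mathbb{R}^\gX, \Sigma)$ to $(\mathbb{R}^\gX, \Sigma)$ and that it is translation equivariant in the pointwise-on-functions sense $d(T_{\bm{\tau}} f) = T_{\bm{\tau}} d(f)$. Translation equivariance holds by construction of the CNN-based decoder (up to boundary effects from zero-padding, already discussed in the main text), and measurability follows because $d$ is assembled from compositions of continuous operations — convolutions, pointwise nonlinearities, and (in the off-the-grid case) reconstruction via evenly-spaced basis functions — each measurable with respect to the product $\sigma$-algebra generated by finitely many coordinate evaluations. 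With this in hand, \cref{lem:decoder} gives that $\mathrm{D} = d_*$ is a translation equivariant map $\gP(\gX) \to \gP(\gX)$; since the pushforward of a probability measure under a measurable map is again a probability measure, and since $\mathrm{E}$ takes values in $\gP_{\mathrm{N}}(\gX) \subseteq \gP(\gX)$, the composition $\mathrm{D} \circ \mathrm{E}$ is well-defined.

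Next I would invoke \cref{lem:ConvCNP_translation_equivariant}, which states that the encoder $\mathrm{E}$ (a ConvCNP) is translation equivariant as a map $\gS \to \gP(\gX)$. Then for an arbitrary data set $S \in \gS$ and shift $\bm{\tau} \in \gX$, the result follows by chaining the equivariance of the two stages:
\begin{align*}
  \mathrm{ConvNP}(T_{\bm{\tau}} S)
  = \mathrm{D}\big(\mathrm{E}(T_{\bm{\tau}} S)\big)
  = \mathrm{D}\big(T_{\bm{\tau}} \mathrm{E}(S)\big)
  = T_{\bm{\tau}} \mathrm{D}\big(\mathrm{E}(S)\big)
  = T_{\bm{\tau}} \mathrm{ConvNP}(S),
\end{align*}
where the second equality uses \cref{lem:ConvCNP_translation_equivariant} and the third uses \cref{lem:decoder}. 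This is precisely the defining condition of \cref{property:translation_equivariance}, which concludes the argument.

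There is no substantive obstacle: the work was already done in the two lemmas. The only point that deserves a sentence of care is confirming that the concrete decoder used in practice genuinely meets the abstract hypotheses of \cref{lem:decoder} — namely exact (rather than merely approximate) translation equivariance and measurability. As the main text notes, zero-padding in a finite CNN breaks exact equivariance, so the proposition should be read for an idealized unpadded/infinite-domain decoder, or as holding up to the chosen discretization, consistent with the caveats stated earlier in the paper.
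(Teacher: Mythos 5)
Your proof is correct and takes essentially the same route as the paper, which simply combines \cref{lem:decoder} and \cref{lem:ConvCNP_translation_equivariant} with the observation that a composition of translation equivariant maps is translation equivariant. Your explicit chain of equalities and the added remarks on measurability of $d$ and the zero-padding caveat are consistent with, and slightly more careful than, the paper's one-line argument.
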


\section{Recovering the Prediction Map in the Infinite Data / Capacity Limits}
\label{app:log_el_convergence}
\paragraph{Task generation procedure}
Assume tasks $\xi = (D_c, D_t)$ are generated as follows: first, some finite number of input locations $\mX_t, \mX_c$ are sampled. 
Assume that $\mathrm{Pr}(|\mX_t| = n) > 0$ for all $n \in \mathbb{Z}_{\geq0}$, where $|\mX_t|$ denotes the number of datapoints in $\mX_t$, and assume the same is true of $\mathrm{Pr}(|\mX_c| = n)$. 
Further assume that for each $n > 0$, the distribution of $\mX$ given $|\mX| = n$ has a continuous density with support over all of $\mathbb{R}^{n \times d_{\mathrm{in}}}$. 
Next, we sample $\vy_t, \vy_c$ from the finite marginal of the ground truth stochastic process $P$, which has density $p(\vy_t, \vy_c|\mX_t, \mX_c)$. Finally, we set $(D_c, D_t) \coloneqq ((\mX_t, \vy_t), (\mX_c, \vy_c))$. 
\begin{proposition}
    Let $\Psi: \gS \to \gP(\gX)$ be any map from data sets to stochastic processes, and let $\gL_{\mathrm{ML}}(\Psi) \coloneqq \mathbb{E}_{p(\xi)}[ \log p_{\Psi}(\vy_t| \mX_t, D_c)]$, where the density $p_{\Psi}$ is that of $\Psi(D_c)$ evaluated at $\mX_t$.
    Then $\Psi$ globally maximises $\gL_{\mathrm{ML}}$ if and only if $\Psi = \pi_P$, the prediction map.
\end{proposition}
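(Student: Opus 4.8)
The plan is to recognize $\gL_{\mathrm{ML}}(\Psi)$ as (minus) a conditional cross-entropy, so that a Gibbs-inequality argument pins down the maximizer, and then to use the full-support assumptions on the index-set distributions to promote an ``almost everywhere'' conclusion to agreement of the two maps.

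First I would exploit the structure of the task-generation procedure to factor the outer expectation. Since $\xi = (D_c, D_t)$ is drawn by first sampling the index sets $\mX_c, \mX_t$ and then sampling $(\vy_c, \vy_t)$ from the finite marginal of $P$, the pair $(D_c, \mX_t)$ has a well-defined marginal law, and conditionally on it $\vy_t$ has density $p(\vy_t \mid \mX_t, D_c) = p(\vy_t, \vy_c \mid \mX_t, \mX_c)/p(\vy_c \mid \mX_c)$ --- exactly the finite-dimensional density of $\pi_P(D_c)$ at index set $\mX_t$ (cf.\ \cref{eqn:conditionals}). Hence
\[
\gL_{\mathrm{ML}}(\Psi) = \E_{(D_c, \mX_t)} \left[ \E_{\vy_t \sim p(\cdot \mid \mX_t, D_c)} \big[ \log p_{\Psi}(\vy_t \mid \mX_t, D_c) \big] \right].
\]
For each fixed $(D_c, \mX_t)$ with $p(\vy_c \mid \mX_c) > 0$, adding and subtracting $\log p(\vy_t \mid \mX_t, D_c)$ inside the inner expectation gives
\[
\E_{\vy_t} \big[ \log p_{\Psi}(\vy_t \mid \mX_t, D_c) \big] = \E_{\vy_t}\big[\log p(\vy_t \mid \mX_t, D_c)\big] - \KL\!\left( p(\cdot \mid \mX_t, D_c) \,\middle\|\, p_{\Psi}(\cdot \mid \mX_t, D_c) \right).
\]
The first term does not depend on $\Psi$ (it is minus the differential entropy of the true conditional), and the KL term is non-negative, vanishing exactly when $p_{\Psi}(\cdot \mid \mX_t, D_c)$ agrees with $p(\cdot \mid \mX_t, D_c)$ Lebesgue-a.e. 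Taking the outer expectation yields $\gL_{\mathrm{ML}}(\Psi) \le \gL_{\mathrm{ML}}(\pi_P)$, with equality if and only if $p_{\Psi}(\cdot\mid\mX_t,D_c) = p(\cdot\mid\mX_t,D_c)$ for $(D_c,\mX_t)$-almost every pair. The reverse direction is then immediate: $\Psi = \pi_P$ makes every KL term zero and so attains the maximum.

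Next I would upgrade ``almost every'' to ``every''. By the hypothesis that $\mathrm{Pr}(|\mX_t| = n) > 0$ for all $n$ and that the law of $\mX_t$ given $|\mX_t| = n$ has a continuous density with full support on $\mathbb{R}^{n \times d_{\mathrm{in}}}$ (and likewise for $\mX_c$, so that $D_c$ ranges over a full-support set of admissible context sets), the exceptional null set of pairs $(D_c, \mX_t)$ can be discarded, so that for every finite index set $\mX_t$ and every $D_c$ with $p(\vy_c\mid\mX_c)>0$ the finite-dimensional marginals of $\Psi(D_c)$ coincide with those of $\pi_P(D_c)$. Since a stochastic process is determined by its finite-dimensional marginals (Kolmogorov extension, as set up in \cref{app:bayes_map}), this forces $\Psi(D_c) = \pi_P(D_c)$ for all such $D_c$, i.e.\ $\Psi = \pi_P$ on the domain where $\pi_P$ is defined.

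The main obstacle is the measure-theoretic bookkeeping rather than the probabilistic core (Gibbs' inequality is routine): one must ensure all the densities and the KL term are well defined and that the relevant integrals are not $-\infty$ --- e.g.\ by assuming $\gL_{\mathrm{ML}}(\pi_P)$ is finite, or by arguing that any $\Psi$ not a.e.\ equal to $\pi_P$ has strictly smaller (possibly $-\infty$) objective --- and one must secure the joint measurability of $(D_c, \mX_t, \vy_t) \mapsto p_\Psi(\vy_t \mid \mX_t, D_c)$. The most delicate point is carefully propagating the full-support hypotheses to conclude agreement of the two processes at \emph{every} admissible $D_c$, rather than merely almost every one; absent an additional regularity assumption on candidate maps $\Psi$, the cleanest reading is that equality ``$\Psi = \pi_P$'' is up to the natural $p(\xi)$-almost-everywhere equivalence. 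These are precisely the ``more details and conditions'' deferred to the appendix.
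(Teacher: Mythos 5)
Your proof is correct and follows essentially the same route as the paper's: condition on $(D_c,\mX_t)$, rewrite the inner expectation as a $\Psi$-independent entropy term minus $\KL\left(p(\cdot\mid\mX_t,D_c)\,\middle\|\,p_\Psi(\cdot\mid\mX_t,D_c)\right)$, and invoke non-negativity of the KL to identify $\pi_P$ as the unique maximizer up to a $p(D_c,\mX_t)$-null set. Your closing discussion of the ``almost every'' versus ``every'' issue matches the caveat the paper itself appends after its proof, so there is nothing to add.
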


\begin{proof}
We have:
\begin{align}
    \gL_{\mathrm{ML}}(\Psi) &= \E_{p(D_c, \mX_t, \vy_t)} \left[ \log p_{\Psi}(\vy_t| \mX_t, D_c) \right] \\
    &= \E_{p(D_c, \mX_t)}\left[ \E_{p(\vy_t|\mX_t, D_c)} \left[ \log p_{\Psi}(\vy_t| \mX_t, D_c) \right] \right]\\
    &= -\E_{p(D_c, \mX_t)} \left[ \mathrm{KL}\left( p( \vy_t |\mX_t, D_c ) \middle\| p_{\Psi}(\vy_t| \mX_t, D_c) \right) \right] + \mathrm{constant}, \label{eqn:expected_KLs}
\end{align}
where the additive constant is constant with respect to $\Psi$.
First note that the KL-divergence is non-negative, and that the prediction map sends all the KL-divergences to zero, globally optimising $\gL(\Psi)$. 
Furthermore, the KL-divergence is equal to zero if and only if the two distributions are equal, and this must hold for all $\mX_t, D_c$. 
For, if this were not the case, the KL-divergence would contribute a non-zero amount to the expectation in \cref{eqn:expected_KLs}.
\end{proof}

Strictly speaking, this argument only shows that the finite marginals of the prediction map and $\Psi$ must be equal for almost all $(D_c , \mX_t)$ with respect to $p(D_c , \mX_t)$. Since the task generation procedure outlined in this section assumes a finite probability of generating any finite-sized context and target set, this is not very restrictive. However, in practice we often limit the maximum size of the sampled data sets, and also their range in $\gX$ space. Hence we can only expect the model to learn reasonable predictions within the ranges seen during train time.

\section{Relationship Between Neural Process and Maximum-Likelihood Objectives}
\label{app:neural_process_models}

Let $D \coloneqq D_t \cup D_c$, and let $Z = \int p_{\bm{\theta}}(\vy_t| \mX_t,  \vz) q_\vphi(\vz|D_c) \, \mathrm{d}\vz$. The NP objective is:
\begin{align}
\label{eqn:neural_process_objective}
    \mathcal{L}_{\mathrm{NP}}(\bm{\theta},\bm{\phi}; \xi) 
    &\coloneqq \mathbb{E}_{q_{\bm{\phi}}(\vz | D)} [\log p_{\bm{\theta}}(\vy_t| \mX_t, \vz)] - \mathrm{KL}(q_{\bm{\phi}} (\vz | D) \| q_{\bm{\phi}} (\vz | D_c))      \\
    &= \mathbb{E}_{q_{\bm{\phi}}(\vz | D)} [\log p_{\bm{\theta}}(\vy_t| \mX_t, \vz) +  \log q_{\bm{\phi}} (\vz | D_c) -  \log q_{\bm{\phi}} (\vz | D) ]\\
    &= \mathbb{E}_{q_{\bm{\phi}}(\vz | D)} \left[\log Z + \log \frac{p_{\bm{\theta}}(\vy_t| \mX_t, \vz)q_{\bm{\phi}} (\vz | D_c)}{Z} -  \log q_{\bm{\phi}} (\vz | D) \right]\\
    &=    \log  Z - \mathrm{KL}\left(q_{\bm{\phi}}(\vz|D)\middle\|\frac{1}{Z} p_{\bm{\theta}}(\vy_t| \mX_t,  \vz) q(\vz|D_c) \right).
\end{align}
If we identify the approximate posterior $q_{\bm{\phi}}$ with the encoder of the maximum-likelihood ConvNP, (which in the maximum-likelihood framework does not have an approximate inference interpretation), then $\log Z = \gL_{\mathrm{ML}}(\vtheta, \vphi; \xi)$.

\section{Effect of Number of Samples Used to Estimate Objective During Training and Evaluation}
\label{app:effect_of_L}

In this section we empirically examine the effect of $L$, the number of samples used to estimate likelihood bounds, on the training and evaluation of ConvNPs and ANPs. 

\subsection{Effect of Number of Samples Used for Evaluation}
\label{app:effect_num_samples_eval}

\begin{figure}[t]
\begin{subfigure}{\textwidth}
    \centering
    \includegraphics[width=.9\textwidth]{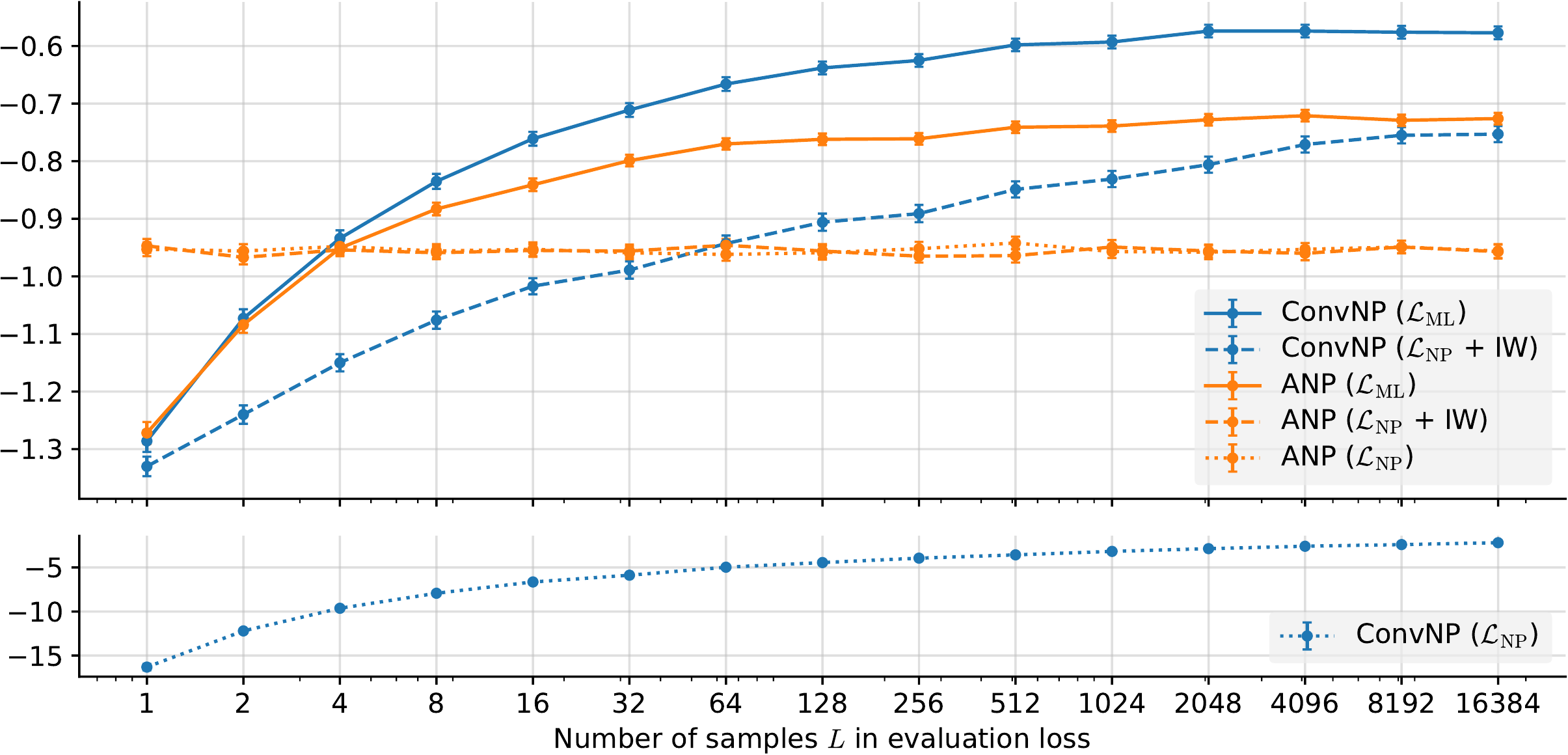}
    \caption{Mat\'ern--$\frac52$}
    \label{fig:perf_v_samples_matern}
\end{subfigure}
\begin{subfigure}{\textwidth}
    \centering
    \includegraphics[width=.9\textwidth]{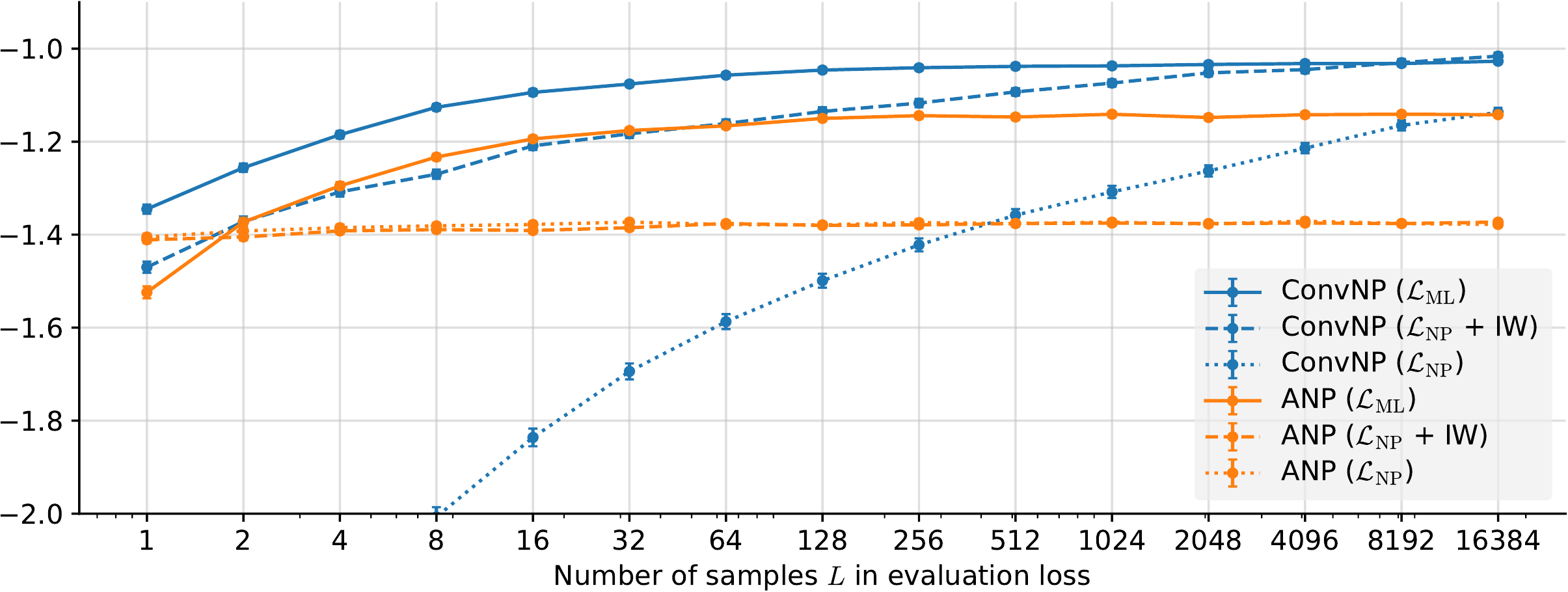}
    \caption{Weakly periodic kernel}
    \label{fig:perf_v_samples_weakly-periodic}
\end{subfigure}
\caption{
    Log-likelihood bounds achieved by various combination of models and training objectives when evaluated with $\gL_{\textrm{ML}}$ and $\gL_{\textrm{IW}}$ for various numbers of samples $L$.
    Color indicates model.
    Solid lines correspond to models trained and evaluated with $\gL_{\textrm{ML}}$.
    Dashed lines correspond to models trained with $\gL_{\textrm{NP}}$ and evaluated with $\gL_{\textrm{IW}}$.
    Dotted lines correspond to models trained with $\gL_{\textrm{ML}}$ and evaluated with $\gL_{\textrm{ML}}$.
}
\label{fig:perf_v_samples}
\end{figure}
As the true log-likelihoods of NP-based models are intractable, quantitative evaluation and comparison of models is challenging.
Instead, we compare models by using an estimate of the log-likelihood.
A natural candidate is $\gL_{\mathrm{ML}}$.
However, unless large $L$ is used, $\gL_{\mathrm{ML}}$ is conservative and tends to significantly underestimate the log-likelihood.
One way to improve the estimate of $\gL_{\mathrm{ML}}$ is through importance weighting (IW) \citep{wu2016quantitative,le2018empirical}.
Denoting $D = D_c \cup D_t$, the encoder $\mathrm{E}_\vphi(D)$ can be used as a proposal distribution:
\begin{equation}
\label{eqn:iw_estimator}
    \hat\gL_{\mathrm{IW}} (\vtheta, \vphi; \xi) \coloneqq \log \left( \frac{1}{L} \sum_{l=1}^L \exp \left( \log w(\vz_l) + \sum_{(\vx, y) \in D_t} \log p_\vtheta(y | \vx, \vz_l) \right) \right), \quad \vz_l \sim \mathrm{E}_\vphi(D),
\end{equation}
where the importance weights are given by $\log w(\vz_l) \coloneqq \log q_\vphi(\vz|D_c) - \log q_\vphi(\vz|D)$.
Here $q_\vphi(\vz|D)$ is the density of the encoder distribution.
We find that training models with $\gL_{\mathrm{ML}}$ results in encoders that are ill-suited as proposal distributions, so we only use $\gL_{\mathrm{IW}}$ to evaluate models trained with $\gL_{\mathrm{NP}}$.

\cref{fig:perf_v_samples} demonstrates the effect of the number of samples $L$ used to estimate the evaluation objective for the ConvNP and ANP trained with $\gL_{\textrm{ML}}$ and $\gL_{\textrm{NP}}$.
The models used to generate \cref{fig:perf_v_samples} are the same models used in \cref{sec:1d_regression}, i.e.\ having heteroskedastic noise.
Observe the general trend that the log-likelihood estimates tend to increase with $L$, as expected.
The ANP trained with $\gL_{\mathrm{NP}}$ collapsed to a conditional ANP, meaning that the encoder became deterministic;
in that case, $\gL_{\mathrm{ML}}$ is exact, which means that larger $L$ and importance weighting will not increase the estimate.
In contrast, the ANP trained with $\gL_{\mathrm{ML}}$ did not collapse, and we see that there the estimate increases with $L$.
For the ConvNP trained with $\gL_{\mathrm{NP}}$, evaluating with $\gL_{\mathrm{IW}}$ yields a significant increase, showing that the bound estimated with $\gL_{\mathrm{IW}}$ is very loose.
The models trained with $\gL_{\mathrm{ML}}$ tend to be the best performing, although the ConvNP trained with $\gL_{\mathrm{NP}}$ is best for weakly periodic kernel and appears to still be increasing with $L$.

In both the main and the supplement, all log-likelihood lower bounds reported are computed with $\gL_{\mathrm{ML}}$ if the model was trained using $\gL_{\mathrm{ML}}$ and with $\gL_{\mathrm{IW}}$ if the model was trained using $\gL_{\mathrm{NP}}$. 

\subsection{Effect of Number of Samples Used During Training}
\begin{figure}[th]
    \centering
    \includegraphics[width=0.8\linewidth]{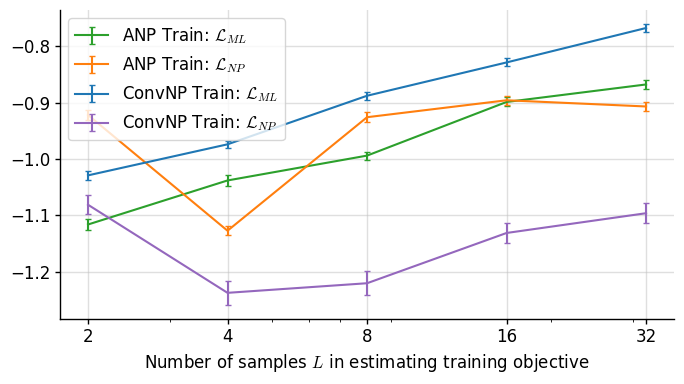}
    \caption{
        Interpolation performance (within training range) for context set sizes uniformly sampled from $\{0, \ldots, 50\}$ of the ConvNP and ANP on Mat\'ern--$\frac52$ samples.
        The models are trained with $\gL_{\mathrm{ML}}$ and $\gL_{\mathrm{NP}}$ for various number of samples $L$. Models trained with $\gL_{\mathrm{ML}}$ are evaluated with $\gL_{\mathrm{ML}}$, while models trained with $\gL_{\mathrm{NP}}$ are evaluated with $\gL_{\mathrm{ML}}$. At evaluation, all bounds are estimated using 2,048 samples.
    }
    \label{fig:effect_of_L_training}
\end{figure}
\Cref{fig:effect_of_L_training} shows the effect of the number of samples $L$ in the training objectives on the performance of the ConvNP and ANP.
Observe that the performance of $\gL_{\mathrm{ML}}$ reliably increases with the number of samples $L$ and that $\gL_{\mathrm{ML}}$ outperforms $\gL_{\mathrm{NP}}$.
The performance for $\gL_{\mathrm{NP}}$ does not appear to increase with the number of samples $L$ and appears more noisy than $\gL_{\mathrm{ML}}$.
Note that the models used for \cref{fig:effect_of_L_training} were trained with homoskedastic observation noise.
This is achieved by pooling $f_\sigma$ over the time dimension.

\section{Experimental Details on 1D Regression}
\label{app:details_1D}

For the full results of the 1D regression tasks, see \cref{app:additional_results_1d}.
Code to reproduce the 1D regression experiments can be found at \url{https://github.com/wesselb/NeuralProcesses.jl}.

In the 1D regression experiments, we consider the following generative processes:
\begin{enumerate}[leftmargin=8em,itemsep=0.5em]
    \item[EQ:]
        samples from a Gaussian process with the following exponentiated-quadratic kernel:
        \[
            k(t, t') = \exp\left(-\frac18(t - t')^2\right);
        \]
    \item[Mat\'ern--$\frac52$:]
        samples from a Gaussian process with the following Mat\'ern--$\frac52$ kernel:
        \[
            k(t, t') = \left(1 + 4\sqrt{5} d  + \frac53 d^2\right) \exp\left(-\sqrt{5} d \vphantom{\frac52}\right)
        \]
        with $d = 4|x - x'|$;
    \item[noisy mixture:]
        samples from a Gaussian process with the following noisy mixture kernel:
        \[
            k(t, t') =
                \exp\left(-\frac18(t - t')^2\right) +
                \exp\left(-\frac12(t - t')^2\right) +
                10^{-3}\delta[t - t'];
        \]
    \item[weakly periodic:]
        samples from a Gaussian process with the following weakly-periodic kernel:
        \[
            k(t, t') = \exp\left(-\frac12 (f_1(t) - f_1(t'))^2 -\frac12 (f_2(t) - f_2(t'))^2 - \frac18(t-t')^2\right)
        \]
        with $f_1(t) = \cos(8 \pi t)$ and $f_2(t) = \sin(8\pi t)$; and 
    \item[sawtooth:]
        samples from the following sawtooth process:
        \[
            f(t) = \frac{A}{2} - \frac{A}{\pi}\sum_{k=1}^K (-1)^k \frac{\sin(2\pi k f (t - s))}{k}
        \]
        with $A = 1$, $f \sim \mathcal{U}[3, 5]$, $s \sim \mathcal{U}[-5, 5]$, and $K \in \{10, \ldots, 20\}$ chosen uniformly.
\end{enumerate}

We compare the following models, where all activation functions are leaky ReLUs with leak $0.1$:
\begin{enumerate}[leftmargin=8em,itemsep=0.5em]
    \item [ConvCNP:]
        The first model is the ConvCNP.
        The architecture of the ConvCNP is equal to that of the encoder in the ConvNP, described next.
    \item [ConvNP:]
        The second model is the ConvNP as described in the main body.
        The functional embedding uses separate length scales for the data channel and density channel (\cref{fig:convnp_full_diagram}), which are initialized to twice the inter-point spacing of the discretization and learned during training.
        The discretization uniformly ranges over $[\min(x) - 1, \max(x) + 1]$ at density $\rho=64$ points per unit, where $\min(x)$ is the minimum $x$ value occurring in the union of the context and target sets in the current batch and $\max(x)$ is corresponding maximum $x$ value.
        The discretization is passed through a 10-layer (excluding an initial and final point-wise linear layer) CNN with $64$ channels and depthwise-separable convolutions.
        The width of the filters depends on the data set and is chosen such that the receptive field sizes are as follows:
        \begin{enumerate}[leftmargin=8em]
            \item[EQ:] $2$,
            \item[Mat\'ern--$\frac52$:] $2$,
            \item[noisy mixture:] $4$,
            \item[weakly periodic:] $4$,
            \item[sawtooth:] $16$.
        \end{enumerate}
        The discretized functional representation consists of 16 channels.
        The smoothing at the end of the encoder also has separate length scales for the mean and variance which are initialized similarly and learned.
        The encoder parametrizes the standard deviations by passing the output of the CNN through a softplus.
        The decoder has the same architecture as the encoder.
    \item [ANP:]
        The third model is the Attentive NP with latent dimensionality $d = 128$ and $8$-head dot-product attention \citep{vaswani2017attention}.
        In the attentive deterministic encoder, the keys ($t$), queries ($t$), and values (concatenation of $t$ and $y$) are transformed by a three-layer MLP of constant width $d$.
        The dot products are normalised by $\sqrt{d}$.
        The output of the attention mechanism is passed through a constant-width linear layer, which is then passed through two layers of layer normalization \citep{ba2016layer} to normalise the latent representation.
        In the first of these two layers, first the transformed queries are passed through a constant-width linear layer and added to the input.
        In the second of these two layers, the output of the first layer is first passed through a two-layer constant-width MLP and added to itself, making a residual layer.
        In the stochastic encoder, the inputs and outputs are concatenated and passed though a three-layer MLP of constant width $d$.
        The result is mean-pooled and passed through a two-layer constant-width MLP.
        The decoder consists of a three-layer MLP of constant width $d$.
    \item [NP:]
        The fourth model is the original NP \citep{garnelo2018neural}.
        The architecture is similar to that of the ANP, where the architecture of the deterministic encoder is replaced by that of the stochastic encoder.
\end{enumerate}
For all models, positivity of the observation noise is enforced with a softplus function.
Parameter counts of the ConvCNP, ConvNP, ANP, and NP are listed in \cref{tab:1D_parameter_counts}.

\begin{table}[t]
    \centering
    \begin{tabular}{rccccc}
        \toprule
        & EQ & Mat\'ern--$\frac52$ & Noisy Mixt.\ & Weakly Per.\ & Sawtooth \\ \midrule
        ConvCNP & $42\,822$ & $42\,822$ & $51\,014$ & $51\,014$ & $100\,166$ \\
        ConvNP & $88\,486$ & $88\,486$ & $104\,870$ & $104\,870$ & $203\,174$ \\
        ANP & $530\,178$ & $530\,178$ & $530\,178$ & $530\,178$ & $530\,178$ \\
        NP & $479\,874$ & $479\,874$ & $479\,874$ & $479\,874$ & $479\,874$ \\
        \bottomrule
    \end{tabular}
    \vspace*{5pt}
    \caption{Parameter counts for the ConvCNP, ConvNP, ANP, and NP in the 1D regression tasks}
    \label{tab:1D_parameter_counts}
\end{table}

The models are trained with $\gL_{\mathrm{ML}}$ ($L=20$) and $\gL_{\mathrm{NP}}$ ($L=5$).
For $\gL_{\mathrm{NP}}$, the context set is appended to the target set when evaluating the objective.
The models are optimised using ADAM with learning rate $5\cdot 10^{-3}$ for $100$ epochs.
One epoch consists of $2^{14}$ tasks divided into batches of size $16$.
For training, the inputs of the context and target sets are sampled uniformly from $[-2, 2]$.
The size of the context set is sampled uniformly from $\{0,\ldots,50\}$ and the size of the target set is fixed to 50.
To encourage the NP-based models---not the CNP-based models---to fit and not revert to their conditional variants, the observation noise standard deviation $\sigma$ is held fixed to $10^{-2}$ for the first 20 epochs.

For evaluation, the size of the context set is sampled uniformly from $\{0,\ldots,10\}$, and the losses are evaluated with $L=5000$ and batch size one.
To test interpolation within the training range, the inputs of the context and target sets are, like training, sampled uniformly from $[-2, 2]$.
To test interpolation beyond the training range, the inputs of the context and target sets are sampled uniformly from $[2, 6]$.
To test extrapolation beyond the training range, the inputs of the context sets are sampled uniformly from $[-2, 2]$ and the inputs of the target sets are sampled uniformly from $[-4, -2]\cup[2,4]$.
As described in \cref{app:effect_num_samples_eval}, models trained with $\gL_{\mathrm{NP}}$ are evaluated using importance weighting to obtain a better estimate of the evaluation loss.

\section{Additional Results on 1D Regression}
\label{app:additional_results_1d}

\Cref{table:1d_bakeoff_amortized} presents results for all models with all losses on all data sets described in \cref{app:details_1D} according to the evaluation protocol described in \cref{app:effect_num_samples_eval,app:details_1D}.

\begin{table}
\footnotesize
\caption{
    Log-likelihood for ConvCNP, ConvNP, ANP, and NP.
    Each of the stochastic models was trained on each data set with $\gL_{\mathrm{ML}}$ and $\gL_{\mathrm{NP}}$, separately.
}
\label{table:1d_bakeoff_amortized}
\begin{center}
\begin{tabular}{l@{\hspace{4pt}}l@{\hspace{4pt}}c@{\hspace{4pt}}c@{\hspace{4pt}}c@{\hspace{4pt}}c@{\hspace{4pt}}c}

\toprule 
 & & \multicolumn{1}{c}{EQ} & \multicolumn{1}{c}{Mat\'ern--$\frac52$} & \multicolumn{1}{c}{Noisy Mixt.} & \multicolumn{1}{c}{Weakly Per.} & \multicolumn{1}{c}{Sawtooth}\\
\midrule\multicolumn{7}{l}{\textsc{Interpolation inside training range}} \\[0.5em] 
GP (full) & &
    $5.80 { \scriptstyle \,\pm\, 0.02 }$ &
    $1.22 { \scriptstyle \,\pm\, 6.3\text{\textsc{e}}{\,\text{--}3} }$ &
    $1.00 { \scriptstyle \,\pm\, 4.1\text{\textsc{e}}{\,\text{--}3} }$ &
    $\text{--}0.06 { \scriptstyle \,\pm\, 4.6\text{\textsc{e}}{\,\text{--}3} }$ &
    N/A \\
GP (diag) & &
    $\text{--}0.59 { \scriptstyle \,\pm\, 0.01 }$ &
    $\text{--}0.84 { \scriptstyle \,\pm\, 9.0\text{\textsc{e}}{\,\text{--}3} }$ &
    $\text{--}0.89 { \scriptstyle \,\pm\, 0.01 }$ &
    $\text{--}1.17 { \scriptstyle \,\pm\, 5.2\text{\textsc{e}}{\,\text{--}3} }$ &
    N/A \\
ConvCNP & & $\text{--}0.70 { \scriptstyle \,\pm\, 0.02 }$ & $\text{--}0.88 { \scriptstyle \,\pm\, 0.01 }$ & $\text{--}0.92 { \scriptstyle \,\pm\, 0.02 }$ & $\text{--}1.19 { \scriptstyle \,\pm\, 7.0\text{\textsc{e}}{\,\text{--}3} }$ & $1.15 { \scriptstyle \,\pm\, 0.04 }$ \\[.5em]
ConvNP & $\gL_{\mathrm{ML}}$ & $\text{--}0.30 { \scriptstyle \,\pm\, 0.02 }$ & $\text{--}0.58 { \scriptstyle \,\pm\, 0.01 }$ & $\text{--}0.55 { \scriptstyle \,\pm\, 0.01 }$ & $\text{--}1.02 { \scriptstyle \,\pm\, 6.0\text{\textsc{e}}{\,\text{--}3} }$ & $2.30 { \scriptstyle \,\pm\, 0.01 }$ \\
ANP & $\gL_{\mathrm{ML}}$ & $\text{--}0.52 { \scriptstyle \,\pm\, 0.01 }$ & $\text{--}0.73 { \scriptstyle \,\pm\, 0.01 }$ & $\text{--}0.69 { \scriptstyle \,\pm\, 0.01 }$ & $\text{--}1.14 { \scriptstyle \,\pm\, 6.0\text{\textsc{e}}{\,\text{--}3} }$ & $0.09 { \scriptstyle \,\pm\, 3.0\text{\textsc{e}}{\,\text{--}3} }$ \\
NP & $\gL_{\mathrm{ML}}$ & $\text{--}0.84 { \scriptstyle \,\pm\, 9.0\text{\textsc{e}}{\,\text{--}3} }$ & $\text{--}0.96 { \scriptstyle \,\pm\, 7.0\text{\textsc{e}}{\,\text{--}3} }$ & $\text{--}0.93 { \scriptstyle \,\pm\, 9.0\text{\textsc{e}}{\,\text{--}3} }$ & $\text{--}1.23 { \scriptstyle \,\pm\, 5.0\text{\textsc{e}}{\,\text{--}3} }$ & $\text{--}0.02 { \scriptstyle \,\pm\, 2.0\text{\textsc{e}}{\,\text{--}3} }$ \\[.5em]
ConvNP & $\gL_{\mathrm{NP}}$ & $\text{--}0.50 { \scriptstyle \,\pm\, 0.02 }$ & $\text{--}0.77 { \scriptstyle \,\pm\, 0.01 }$ & $\text{--}0.48 { \scriptstyle \,\pm\, 0.02 }$ & $\text{--}1.03 { \scriptstyle \,\pm\, 8.0\text{\textsc{e}}{\,\text{--}3} }$ & $2.47 { \scriptstyle \,\pm\, 8.0\text{\textsc{e}}{\,\text{--}3} }$ \\
ANP & $\gL_{\mathrm{NP}}$ & $\text{--}0.82 { \scriptstyle \,\pm\, 0.01 }$ & $\text{--}0.96 { \scriptstyle \,\pm\, 0.01 }$ & $\text{--}1.04 { \scriptstyle \,\pm\, 0.01 }$ & $\text{--}1.37 { \scriptstyle \,\pm\, 6.0\text{\textsc{e}}{\,\text{--}3} }$ & $0.20 { \scriptstyle \,\pm\, 9.0\text{\textsc{e}}{\,\text{--}3} }$ \\
NP & $\gL_{\mathrm{NP}}$ & $\text{--}0.58 { \scriptstyle \,\pm\, 9.0\text{\textsc{e}}{\,\text{--}3} }$ & $\text{--}1.00 { \scriptstyle \,\pm\, 9.0\text{\textsc{e}}{\,\text{--}3} }$ & $\text{--}0.72 { \scriptstyle \,\pm\, 0.01 }$ & $\text{--}1.22 { \scriptstyle \,\pm\, 5.0\text{\textsc{e}}{\,\text{--}3} }$ & $\text{--}0.16 { \scriptstyle \,\pm\, 2.0\text{\textsc{e}}{\,\text{--}3} }$ \\
\midrule\multicolumn{7}{l}{\textsc{Interpolation beyond training range}} \\[0.5em] 
GP (full) & &
    $5.80 { \scriptstyle \,\pm\, 0.02 }$ &
    $1.22 { \scriptstyle \,\pm\, 6.3\text{\textsc{e}}{\,\text{--}3} }$ &
    $1.00 { \scriptstyle \,\pm\, 4.1\text{\textsc{e}}{\,\text{--}3} }$ &
    $\text{--}0.06 { \scriptstyle \,\pm\, 4.6\text{\textsc{e}}{\,\text{--}3} }$ &
    N/A \\
GP (diag) & &
    $\text{--}0.59 { \scriptstyle \,\pm\, 0.01 }$ &
    $\text{--}0.84 { \scriptstyle \,\pm\, 9.0\text{\textsc{e}}{\,\text{--}3} }$ &
    $\text{--}0.89 { \scriptstyle \,\pm\, 0.01 }$ &
    $\text{--}1.17 { \scriptstyle \,\pm\, 5.2\text{\textsc{e}}{\,\text{--}3} }$ &
    N/A \\
ConvCNP & & $\text{--}0.69 { \scriptstyle \,\pm\, 0.02 }$ & $\text{--}0.87 { \scriptstyle \,\pm\, 0.01 }$ & $\text{--}0.94 { \scriptstyle \,\pm\, 0.02 }$ & $\text{--}1.19 { \scriptstyle \,\pm\, 7.0\text{\textsc{e}}{\,\text{--}3} }$ & $1.11 { \scriptstyle \,\pm\, 0.04 }$ \\[.5em]
ConvNP & $\gL_{\mathrm{ML}}$ & $\text{--}0.30 { \scriptstyle \,\pm\, 0.02 }$ & $\text{--}0.58 { \scriptstyle \,\pm\, 0.01 }$ & $\text{--}0.56 { \scriptstyle \,\pm\, 0.01 }$ & $\text{--}1.03 { \scriptstyle \,\pm\, 6.0\text{\textsc{e}}{\,\text{--}3} }$ & $2.29 { \scriptstyle \,\pm\, 0.02 }$ \\
ANP & $\gL_{\mathrm{ML}}$ & $\text{--}1.35 { \scriptstyle \,\pm\, 6.0\text{\textsc{e}}{\,\text{--}3} }$ & $\text{--}1.39 { \scriptstyle \,\pm\, 7.0\text{\textsc{e}}{\,\text{--}3} }$ & $\text{--}1.65 { \scriptstyle \,\pm\, 5.0\text{\textsc{e}}{\,\text{--}3} }$ & $\text{--}1.35 { \scriptstyle \,\pm\, 4.0\text{\textsc{e}}{\,\text{--}3} }$ & $\text{--}0.17 { \scriptstyle \,\pm\, 1.0\text{\textsc{e}}{\,\text{--}3} }$ \\
NP & $\gL_{\mathrm{ML}}$ & $\text{--}2.70 { \scriptstyle \,\pm\, 3.0\text{\textsc{e}}{\,\text{--}3} }$ & $\text{--}2.60 { \scriptstyle \,\pm\, 3.0\text{\textsc{e}}{\,\text{--}3} }$ & $\text{--}2.82 { \scriptstyle \,\pm\, 3.0\text{\textsc{e}}{\,\text{--}3} }$ & - & $\text{--}0.03 { \scriptstyle \,\pm\, 2.0\text{\textsc{e}}{\,\text{--}3} }$ \\[.5em]
ConvNP & $\gL_{\mathrm{NP}}$ & $\text{--}0.48 { \scriptstyle \,\pm\, 0.02 }$ & $\text{--}0.79 { \scriptstyle \,\pm\, 0.01 }$ & $\text{--}0.48 { \scriptstyle \,\pm\, 0.02 }$ & $\text{--}1.04 { \scriptstyle \,\pm\, 8.0\text{\textsc{e}}{\,\text{--}3} }$ & $2.47 { \scriptstyle \,\pm\, 8.0\text{\textsc{e}}{\,\text{--}3} }$ \\
ANP & $\gL_{\mathrm{NP}}$ & $\text{--}1.91 { \scriptstyle \,\pm\, 0.03 }$ & $\text{--}1.48 { \scriptstyle \,\pm\, 4.0\text{\textsc{e}}{\,\text{--}3} }$ & $\text{--}1.85 { \scriptstyle \,\pm\, 7.0\text{\textsc{e}}{\,\text{--}3} }$ & $\text{--}1.66 { \scriptstyle \,\pm\, 0.01 }$ & $\text{--}0.30 { \scriptstyle \,\pm\, 4.0\text{\textsc{e}}{\,\text{--}3} }$ \\
NP & $\gL_{\mathrm{NP}}$ & $\text{--}13.7 { \scriptstyle \,\pm\, 0.82 }$ & $\text{--}3.96 { \scriptstyle \,\pm\, 0.04 }$ & $\text{--}3.80 { \scriptstyle \,\pm\, 0.02 }$ & - & $\text{--}4.98 { \scriptstyle \,\pm\, 0.02 }$ \\
\midrule\multicolumn{7}{l}{\textsc{Extrapolation beyond training range}} \\[0.5em] 
GP (full) & &
    $4.29 { \scriptstyle \,\pm\, 6.2\text{\textsc{e}}{\,\text{--}3} }$ &
    $0.82 { \scriptstyle \,\pm\, 4.3\text{\textsc{e}}{\,\text{--}3} }$ &
    $0.66 { \scriptstyle \,\pm\, 2.2\text{\textsc{e}}{\,\text{--}3} }$ &
    $\text{--}0.33 { \scriptstyle \,\pm\, 3.4\text{\textsc{e}}{\,\text{--}3} }$ &
    N/A \\
GP (diag) & &
    $\text{--}1.40 { \scriptstyle \,\pm\, 5.0\text{\textsc{e}}{\,\text{--}3} }$ &
    $\text{--}1.41 { \scriptstyle \,\pm\, 4.8\text{\textsc{e}}{\,\text{--}3} }$ &
    $\text{--}1.72 { \scriptstyle \,\pm\, 6.2\text{\textsc{e}}{\,\text{--}3} }$ &
    $\text{--}1.40 { \scriptstyle \,\pm\, 4.0\text{\textsc{e}}{\,\text{--}3} }$ &
    N/A \\
ConvCNP & & $\text{--}1.41 { \scriptstyle \,\pm\, 6.0\text{\textsc{e}}{\,\text{--}3} }$ & $\text{--}1.41 { \scriptstyle \,\pm\, 7.0\text{\textsc{e}}{\,\text{--}3} }$ & $\text{--}1.73 { \scriptstyle \,\pm\, 8.0\text{\textsc{e}}{\,\text{--}3} }$ & $\text{--}1.41 { \scriptstyle \,\pm\, 6.0\text{\textsc{e}}{\,\text{--}3} }$ & $0.27 { \scriptstyle \,\pm\, 0.02 }$ \\[.5em]
ConvNP & $\gL_{\mathrm{ML}}$ & $\text{--}1.09 { \scriptstyle \,\pm\, 5.0\text{\textsc{e}}{\,\text{--}3} }$ & $\text{--}1.11 { \scriptstyle \,\pm\, 5.0\text{\textsc{e}}{\,\text{--}3} }$ & $\text{--}1.30 { \scriptstyle \,\pm\, 4.0\text{\textsc{e}}{\,\text{--}3} }$ & $\text{--}1.24 { \scriptstyle \,\pm\, 4.0\text{\textsc{e}}{\,\text{--}3} }$ & $1.61 { \scriptstyle \,\pm\, 0.02 }$ \\
ANP & $\gL_{\mathrm{ML}}$ & $\text{--}1.29 { \scriptstyle \,\pm\, 6.0\text{\textsc{e}}{\,\text{--}3} }$ & $\text{--}1.29 { \scriptstyle \,\pm\, 5.0\text{\textsc{e}}{\,\text{--}3} }$ & $\text{--}1.55 { \scriptstyle \,\pm\, 5.0\text{\textsc{e}}{\,\text{--}3} }$ & $\text{--}1.34 { \scriptstyle \,\pm\, 5.0\text{\textsc{e}}{\,\text{--}3} }$ & $\text{--}0.25 { \scriptstyle \,\pm\, 2.0\text{\textsc{e}}{\,\text{--}3} }$ \\
NP & $\gL_{\mathrm{ML}}$ & $\text{--}2.23 { \scriptstyle \,\pm\, 4.0\text{\textsc{e}}{\,\text{--}3} }$ & $\text{--}2.08 { \scriptstyle \,\pm\, 3.0\text{\textsc{e}}{\,\text{--}3} }$ & $\text{--}2.50 { \scriptstyle \,\pm\, 4.0\text{\textsc{e}}{\,\text{--}3} }$ & $\text{--}1.39 { \scriptstyle \,\pm\, 4.0\text{\textsc{e}}{\,\text{--}3} }$ & $\text{--}0.06 { \scriptstyle \,\pm\, 2.0\text{\textsc{e}}{\,\text{--}3} }$ \\[.5em]
ConvNP & $\gL_{\mathrm{NP}}$ & $\text{--}1.21 { \scriptstyle \,\pm\, 0.01 }$ & $\text{--}1.31 { \scriptstyle \,\pm\, 0.01 }$ & $\text{--}1.19 { \scriptstyle \,\pm\, 0.01 }$ & $\text{--}1.51 { \scriptstyle \,\pm\, 8.0\text{\textsc{e}}{\,\text{--}3} }$ & $2.10 { \scriptstyle \,\pm\, 7.0\text{\textsc{e}}{\,\text{--}3} }$ \\
ANP & $\gL_{\mathrm{NP}}$ & $\text{--}1.44 { \scriptstyle \,\pm\, 6.0\text{\textsc{e}}{\,\text{--}3} }$ & $\text{--}1.45 { \scriptstyle \,\pm\, 6.0\text{\textsc{e}}{\,\text{--}3} }$ & $\text{--}1.77 { \scriptstyle \,\pm\, 7.0\text{\textsc{e}}{\,\text{--}3} }$ & $\text{--}1.46 { \scriptstyle \,\pm\, 6.0\text{\textsc{e}}{\,\text{--}3} }$ & $\text{--}0.20 { \scriptstyle \,\pm\, 2.0\text{\textsc{e}}{\,\text{--}3} }$ \\
NP & $\gL_{\mathrm{NP}}$ & $\text{--}5.85 { \scriptstyle \,\pm\, 0.05 }$ & $\text{--}2.65 { \scriptstyle \,\pm\, 3.0\text{\textsc{e}}{\,\text{--}3} }$ & $\text{--}4.06 { \scriptstyle \,\pm\, 0.04 }$ & $\text{--}1.49 { \scriptstyle \,\pm\, 5.0\text{\textsc{e}}{\,\text{--}3} }$ & $\text{--}1.99 { \scriptstyle \,\pm\, 6.0\text{\textsc{e}}{\,\text{--}3} }$ \\
\bottomrule

\end{tabular}
\end{center}
\end{table}

\section{Experimental Details on Image Completion} 
\label{app:details_images}

\subsection{Data Details} 
\begin{figure}[htb]
\centering
\begin{subfigure}{0.40\textwidth}
  \includegraphics[width=\linewidth]{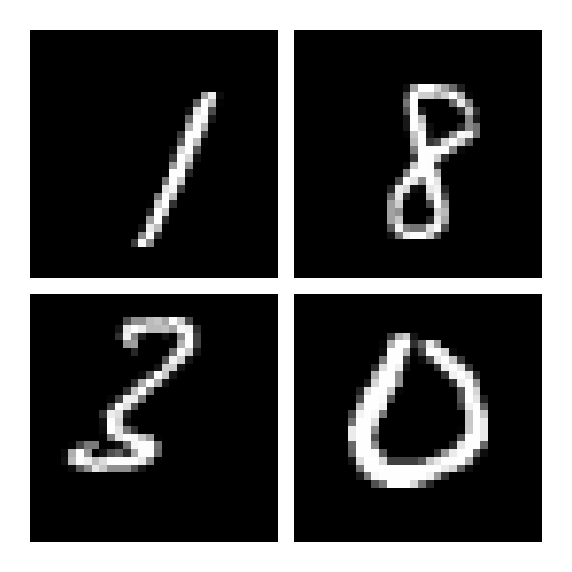}
  \subcaption{Train ($32  \times 32)$}
  \label{fig:zsmm_train}
\end{subfigure}
\begin{subfigure}{0.40\textwidth}
  \includegraphics[width=\linewidth]{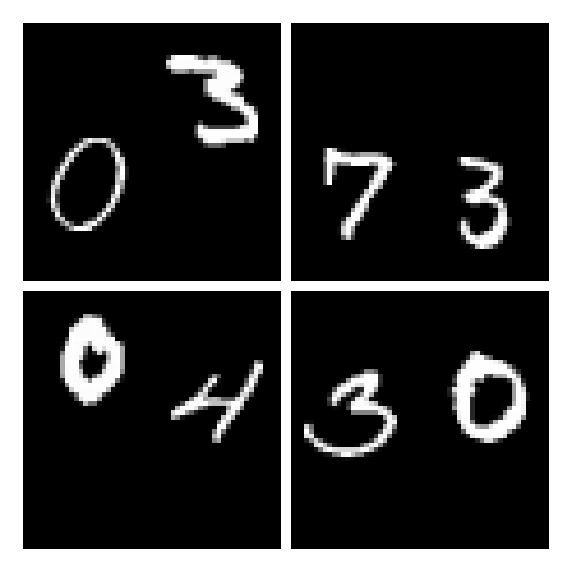}
  \subcaption{Test ($56 \times 56)$}
  \label{fig:zsmm_test}
\end{subfigure}

\caption{Samples from our generated Zero Shot Multi MNIST (ZSMM) data set.}
\label{fig:zsmm}
\end{figure}

We use three standard data sets throughout our image experiments: SVHN \citep{netzer2011reading}, MNIST \cite{lecun1989backpropagation}, and $32\times 32$ CelebA \cite{netzer2011reading}.
The aforementioned standard data sets all contain only a single, well-centered object.
To evaluate the translation equivariance and generalization capabilities of our model we evaluate on a Zero Shot Multi-MNIST (ZSMM) task, which is similar to ZSMM described in Appendix D.2 of \cite{gordon2020convolutional}.
Namely, we generate a test set by randomly sampling with replacement 10000 pairs of digits from the MNIST test set, place them on a black $56 \times 56$ background, and translate the digits in such a way that the digits can be arbitrarily close but cannot overlap (\Cref{fig:zsmm_test}). 
The difference with the dataset from \citet{gordon2020convolutional}, is that the training set consists of the standard MNIST digits (instead of a single digit placed in the center of $56\times 56$ canvas), augmented by up to 4 pixel shifts (\Cref{fig:zsmm_train}).
The model thus has to generalize both to a larger canvas size as well as to seeing multiple digits.

For all data sets, pixel values are divided by 255 to rescale them to the $[0,1]$ range.
We evaluate on predefined test splits when available (MNIST, SVHN, ZSMM) and make our own test set for CelebA by randomly selecting $10\%$ of the data.
For each dataset we also set aside $10\%$ of the training set as validation.

\subsection{Training Details}

In all experiments, we sample the number of context pixels uniformly from $\mathcal{U}(0, \frac{n_{\text{total}}}{2})$, and the number of target points is set to $n_{\text{total}}$.
The weights are optimized using Adam \citep{kingma2014adam} with learning rate $5\times 10^{-4}$.
We use a maximum of $100$ epochs, with early stopping --- based on log likelihood on the validation set --- of 10 epochs patience.
Unless stated otherwise, we use $L=16$ samples from the latent function during training, and $L=128$ at test time.
We clip the $L2$ norm of all gradients to 1, which was particularly important for ConvNP.
We use a batch size of 32 for all models besides ANP trained on ZSMM which used a batch size of 8 due to memory constraints.

\subsection{Architecture Details}

\paragraph{General architecture details}
For all models, we follow \citet{le2018empirical} and process the predicted standard deviation of the latent function $\vsigma_z$ using a sigmoid and the standard deviation $\vsigma$ of the predictive distribution using lower-bounded softplus: 
\begin{align}
\vsigma_z &= 0.001 + (1-0.001) \frac{1}{1+ \exp(f_{\sigma,z} )}  \label{eq:sigmaz_process} \\
 \vsigma &= 0.001 + (1-0.001) \ln (1 + \exp(f_{\sigma}))    \label{eq:sigma_process} 
\end{align}
As the pixels are rescaled to $[0,1]$, we also process the mean of the posterior predictive (conditioned on a single sample) to be in $[0,1]$ using  a logistic function 
\begin{equation}
\vmu =  \frac{1}{1+ \exp(-f_{\mu})} \label{eq:mmu_process} 
\end{equation}
In the following, we describe the architecture of ANP and ConvNP. Unless stated otherwise, all vectors in the following paragraphs are in $\mathbb{R}^{128}$ and all MLPs have 128 hidden units.

\paragraph{ANP details}
We provide details for the ANP trained with $\gL_{\mathrm{ML}}$.
As the ANP cannot take advantage of the fact that images are on the grid, we preprocess each pixel so that $\mathbf{x} \in [-1,1]^2$.
The only exception being for the test set of ZSMM, where $\mathbf{x} \in [-\frac{56}{32},\frac{56}{32}]^2$ as the model is trained on $32 \times 32$ but evaluated on $56 \times 56$ images.
Each context feature is first encoded $\mathbf{x}^{(c)} \mapsto \mathbf{r}_{x}^{(c)}$ by a single hidden layer MLP, while a second single hidden layer MLP encodes values
$\mathbf{y}^{(c)} \mapsto \mathbf{r}_{y}^{(c)}$. 
We produce a representation $\mathbf{r}_{xy}^{(c)}$ by summing both representations $\mathbf{r}_{x}^{(c)} + \mathbf{r}_{y}^{(c)}$ and passing them through two self-attention layers \citep{vaswani2017attention}.
Following \citet{parmar2018image}, each self-attention layer is implemented as 8-headed attention, a skip connection, and two layer normalizations \citep{ba2016layer}.
To predict values at each target point $t$, we embed $\mathbf{x}^{(t)} \mapsto \mathbf{r}_x^{(t)}$ using the hidden layer MLP used for $\mathbf{r}_x^{(c)}$.
A deterministic target representation $\mathbf{r}_{xy}^{(t)}$ is then computed by applying cross-attention (using an 8-headed attention described above) with keys $\mathrm{K} \coloneqq \{\mathbf{r}_x^{(c)}\}_{c=1}^C$, values $\mathrm{V} \coloneqq \{\mathbf{r}_{xy}^{(c)}\}_{c=1}^C$, and query $\mathbf{q} \coloneqq \mathbf{r}_{x}^{(t)}$.
For the latent path, we average over context representations $\mathbf{r}_{xy}^{(c)}$, and pass the resulting representation through a single hidden layer MLP that outputs $(\vmu_z, \vsigma_{z}) \in \mathbb{R}^{256}$.
$\vsigma_{z}$ is made positive by post-processing it using \cref{eq:sigmaz_process}.
We then sample (with reparametrization \citep{kingma2013auto}) $L$ latent representation $\vz_l \sim \gN(\vz; \vmu_z, \vsigma_z^2)$.

We describe the remainder of the forward pass for a single $\vz_l$, though in practice multiple samples may be processed in parallel.
The deterministic and latent representations of the context set are concatenated, and the resulting representation is passed through a linear layer $[\mathbf{r}_{xy}^{(t)}; \vz_l] \to \mathbf{r}_{xyz}^{(t)} \in \mathbb{R}^{128}$.
Given the target and context-set representations, the predictive posterior is given by a Gaussian pdf with diagonal covariance parametrised by $(\vmu^{(t)}, \vsigma_{\text{pre}}^{(t)}) = \mathrm{decoder}([\mathbf{r}_x^{(t)}; \mathbf{r}_{xyz}^{(t)}])$ where $\vmu^{(t)}, \vsigma_{\text{pre}}^{(t)} \in \mathbb{R}^3$ and $\mathrm{decoder}$ is a 4 hidden layer MLP.
Finally, the $\vsigma^{(t)}$ is processed by \cref{eq:sigma_process}  using \cref{eq:mmu_process}.
In the case of MNIST and ZSMM, $\vsigma^{(t)}$ is also spatially mean pooled, which corresponds to using homoskedastic noise.
This improves the qualitative performance by forcing ANP and ConvNP to model the digit instead of focusing on predicting the black background with high confidence.
\citet{kim2018attentive} did not suffer from that issue as they used a much larger lower bound for \cref{eq:sigma_process}.

\paragraph{ConvNP details} 
The core algorithm of on-the-grid ConvNP is outlined in \cref{alg:convnp_on_the_grid} as well as \cref{alg:convcnp_on_the_grid}. 
Here we discuss the parametrizations used for each step of the algorithm.
All convolutional layers are depthwise separable \citep{chollet2017xception}.
$\textsc{conv}_\vtheta$ is a convolutional layer with kernel size of 11 (no bias).
Following \citet{gordon2020convolutional}, we enforce positivity on the weights in the first convolutional layer by only convolving their absolute value with the signal.

The $\textsc{CNN}$s are ResNets \cite{he2016deep} with 9 blocks, where each convolution has a kernel size of 3.
Each residual block consists of two convolutional layers, pre-activation batch normalization layers \cite{ioffe2015batch}, and ReLU activations.
The output of the pre-latent CNN (CNN in \cref{alg:convcnp_on_the_grid}) goes through a single hidden layer MLP that outputs $(\vmu_z, \vsigma_z) \in \mathbb{R}^{256}$.
As with ANP, $f_{\sigma,z}$ is processed by \cref{eq:sigmaz_process} and then used to sample (with reparametrization \citep{kingma2013auto}) $L$ latent functions $\mathbf{Z}_l$.
Importantly, we found that the coherence of samples improves if the model uses a \emph{global representation} in addition to the the pixel dependent representation.
We achieve this by mean-pooling half of the functional representation.
Namely, we replace $\vz_l$ by the channel-wise concatenation of $\vz_l^{(1:64)}$ and $\textsc{mean}( \vz_l^{(65:128)})$, where the mean is taken over the spatial dimensions.
This latent function then goes through the post-latent CNN (CNN in \cref{alg:convnp_on_the_grid}), as well as a linear layer to output $(f_{\mu}, f_{\sigma}) \in \mathbb{R}^{256}$.
As for ANP $f_{\mu}$ is processed by \cref{eq:mmu_process} and $f_{\sigma}$ is re-scaled with \cref{eq:sigma_process} and is spatially pooled in the case of MNIST and ZSMM to obtain homoskedastic noise.

\section{Additional results on image completion.} 
\label{app:additional_images}
We provide additional qualitative samples and quantitative analyses for the ConvNP and ANP.
\begin{figure}[t]
\centering
\begin{subfigure}{0.47\columnwidth}
    \includegraphics[width=\textwidth]{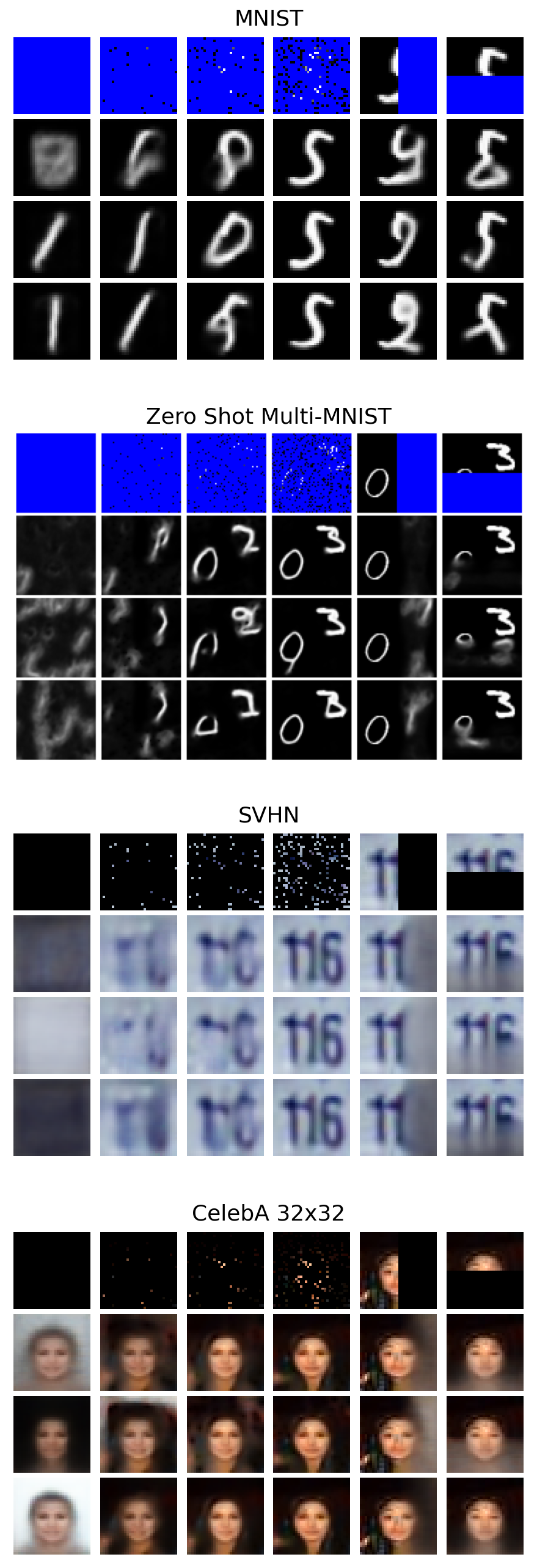}
\end{subfigure}
%
%
%
%
\begin{subfigure}{0.47\columnwidth}
    \includegraphics[width=\textwidth]{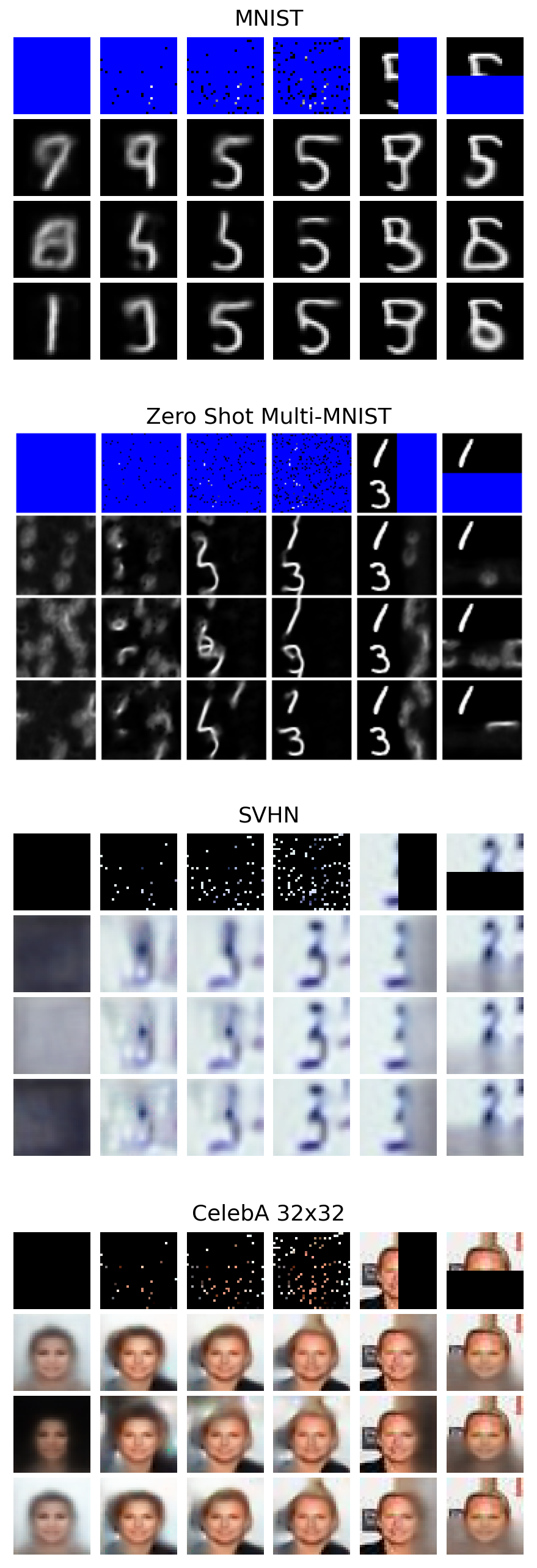}
\end{subfigure}
\caption{
Qualitative samples for one of the ConvNP trained with $\gL_{\mathrm{ML}}$ in \cref{table:images_main_text}.
From top to bottom the four major rows correspond to MNIST, ZSMM, SVHN, CelebA32 datasets.
For each dataset and each of the two major columns, a different image is randomly
sampled; the first sub-row shows the given context points (missing pixels are in blue for MNIST and ZSMM but in black for SVHN and CelebA), while the next three sub-rows show the mean of the posterior predictive corresponding to different samples of the latent function.
To show diverse samples we select three samples that maximize the average Euclidean distance between pixels of the samples. 
From left to right the first four sub-columns correspond to a context set with 0\%,
1\%, 3\%, 10\% randomly sampled context points. 
In the last two sub-columns, the
context sets respectively contain all the pixels in the left and top half of the image.
}
\label{fig:samples_convnp}
\end{figure}

\paragraph{Additional ConvNP samples}
\Cref{fig:samples_convnp} provides further samples from a ConvNP trained with $\gL_{\mathrm{ML}}$.
We observe that the ConvNP produces reasonably diverse yet coherent samples when evaluated in a regime that resembles the training regime (in the first four sub-columns of MNIST, SVHN, and CelebA).
However, \Cref{fig:samples_convnp} also demonstrates that the ConvNP struggles with context sets that are significantly different from those seen during training.

\paragraph{Further comparisons of ANP and ConvNP}
\begin{figure}[t]
\centering
\begin{subfigure}{0.32\columnwidth}
    \includegraphics[width=\textwidth]{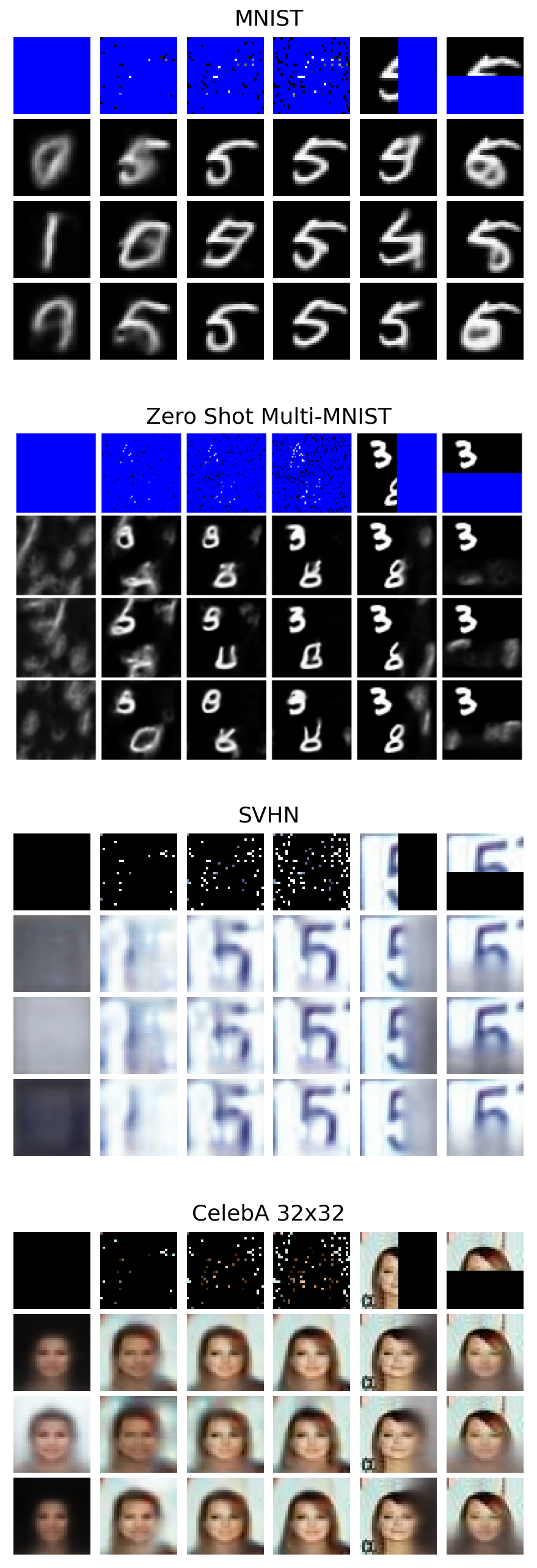}
    \caption{ConvNP  $\gL_{\mathrm{ML}}$}
\end{subfigure}
\begin{subfigure}{0.32\columnwidth}
    \includegraphics[width=\textwidth]{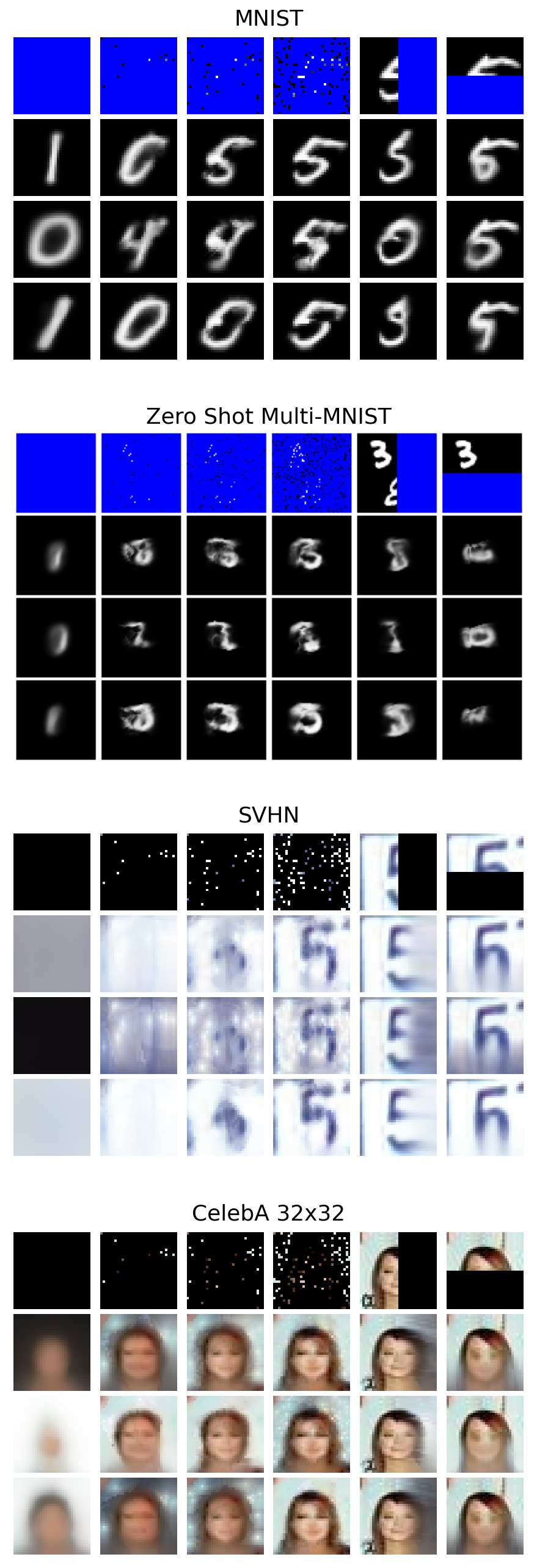}
    \caption{ANP  $\gL_{\mathrm{ML}}$}
\end{subfigure}
\begin{subfigure}{0.32\columnwidth}
    \includegraphics[width=\textwidth]{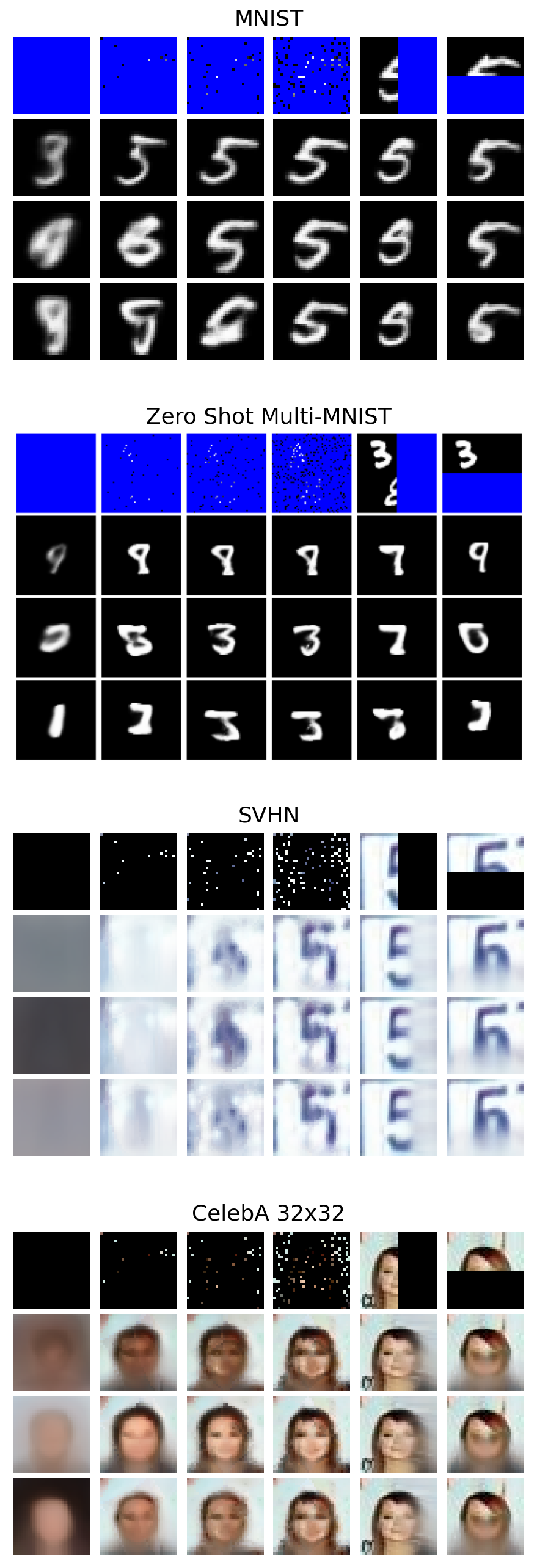}
    \caption{ANP  $\gL_{\mathrm{NP}}$}
\end{subfigure}
\caption{
Qualitative samples between (a) ConvNP trained with $\gL_{\mathrm{ML}}$; (b) ANP trained with $\gL_{\mathrm{ML}}$; (c) ANP trained with $\gL_{\mathrm{NP}}$.
For each model the figure shows the same as \cref{fig:samples_convnp}.
}
\label{fig:samples_anp_convnp}
\end{figure}
We provide further qualitative comparisons of ConvNPs, ANPs trained with $\gL_{\mathrm{ML}}$, and ANPs trained with $\gL_{\mathrm{NP}}$.
We omit ConvNPs trained with $\gL_{\mathrm{NP}}$ as these are significantly outperformed by ConvNPs trained with $\gL_{\mathrm{ML}}$ (see e.g.~\Cref{table:images_main_text}).

\Cref{fig:samples_anp_convnp} shows that all models perform relatively well when context sets are drawn from a similar distribution as employed during training (first four sub-columns of MNIST, SVHN, and CelebA).
Furthermore, we observe that samples from the ConvNP prior tend to be closer to samples from the underlying data distribution (e.g. for CelebA).

The qualitative advantage of ConvNP is most significant in settings that require translation equivariance for generalization.
\cref{fig:samples_anp_convnp} row 2 (ZSMM) clearly demonstrates that ConvNP generalizes to larger canvas sizes and multiple digits, while ANP attempts to reconstruct a single digit regardless of the context set.
Finally, \cref{fig:samples_anp_convnp_kde} provides the test log-likelihood distributions of ANP and ConvNP as well as some qualitative comparisons between the two.
\begin{figure}[t]
\centering
\begin{subfigure}{0.47\columnwidth}
    \includegraphics[width=\textwidth]{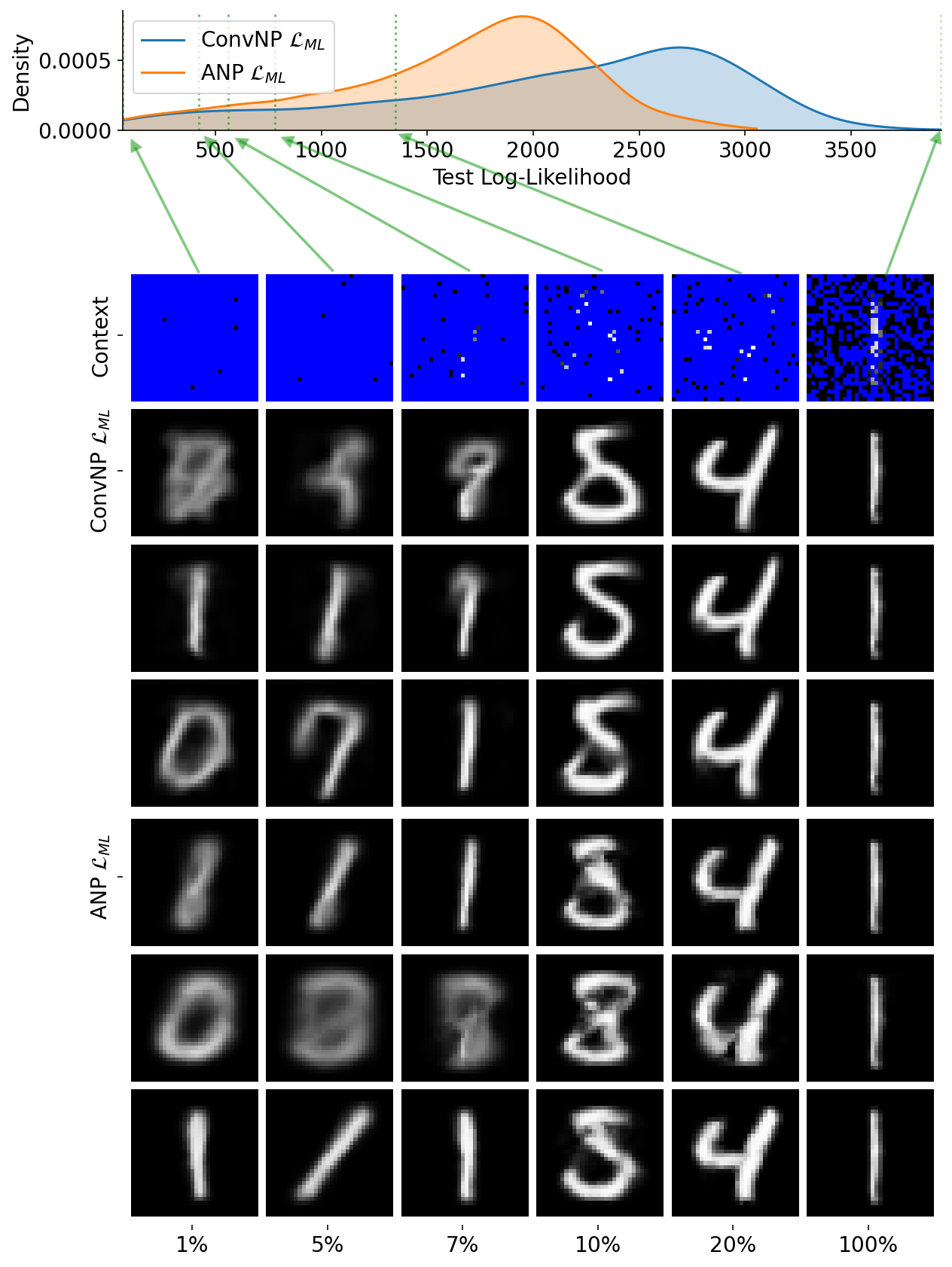}
    \caption{MNIST}
\end{subfigure}
\begin{subfigure}{0.47\columnwidth}
    \includegraphics[width=\textwidth]{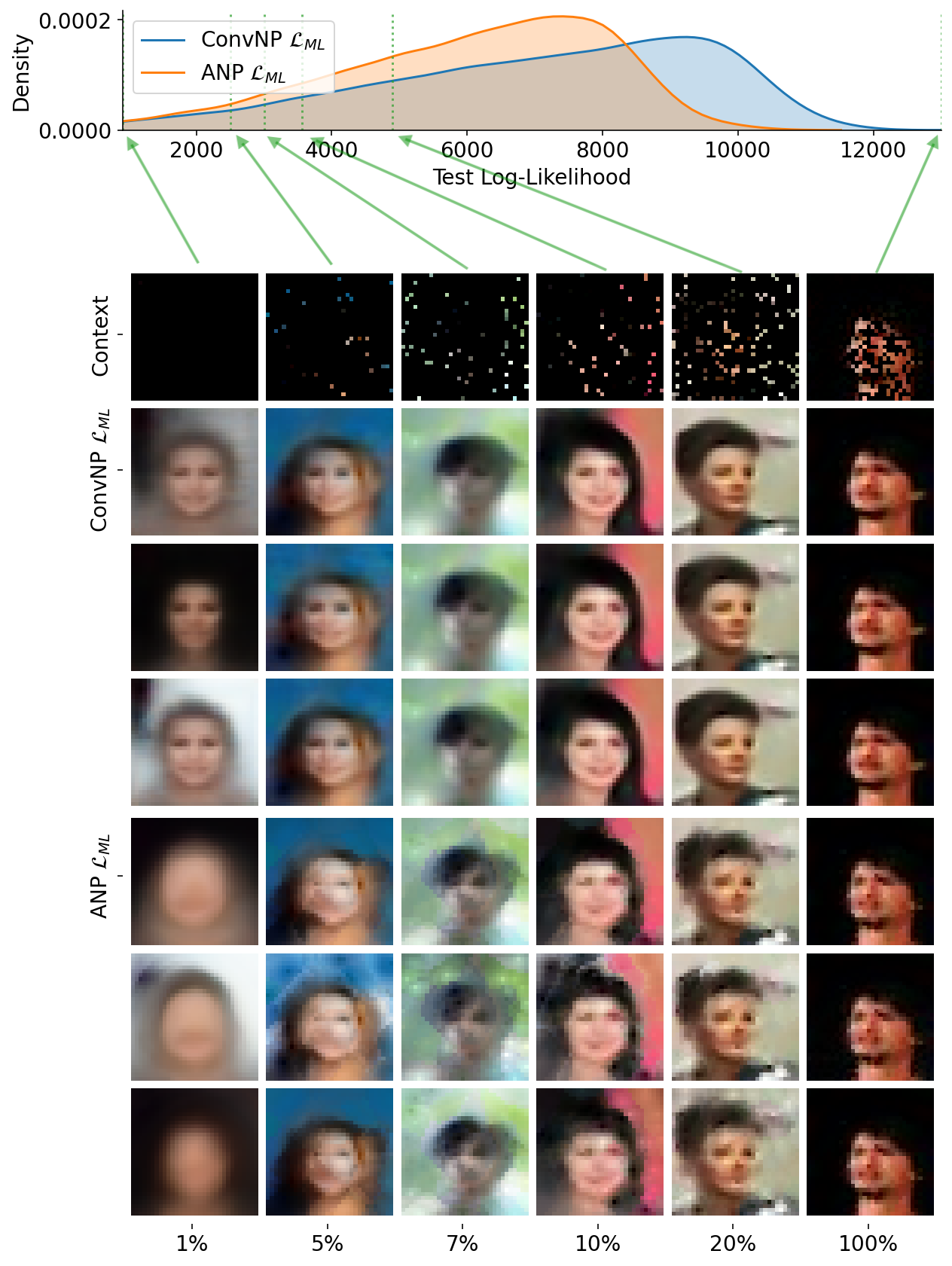}
    \caption{CelebA32}
\end{subfigure}
\hfill
\begin{subfigure}{0.47\columnwidth}
    \includegraphics[width=\textwidth]{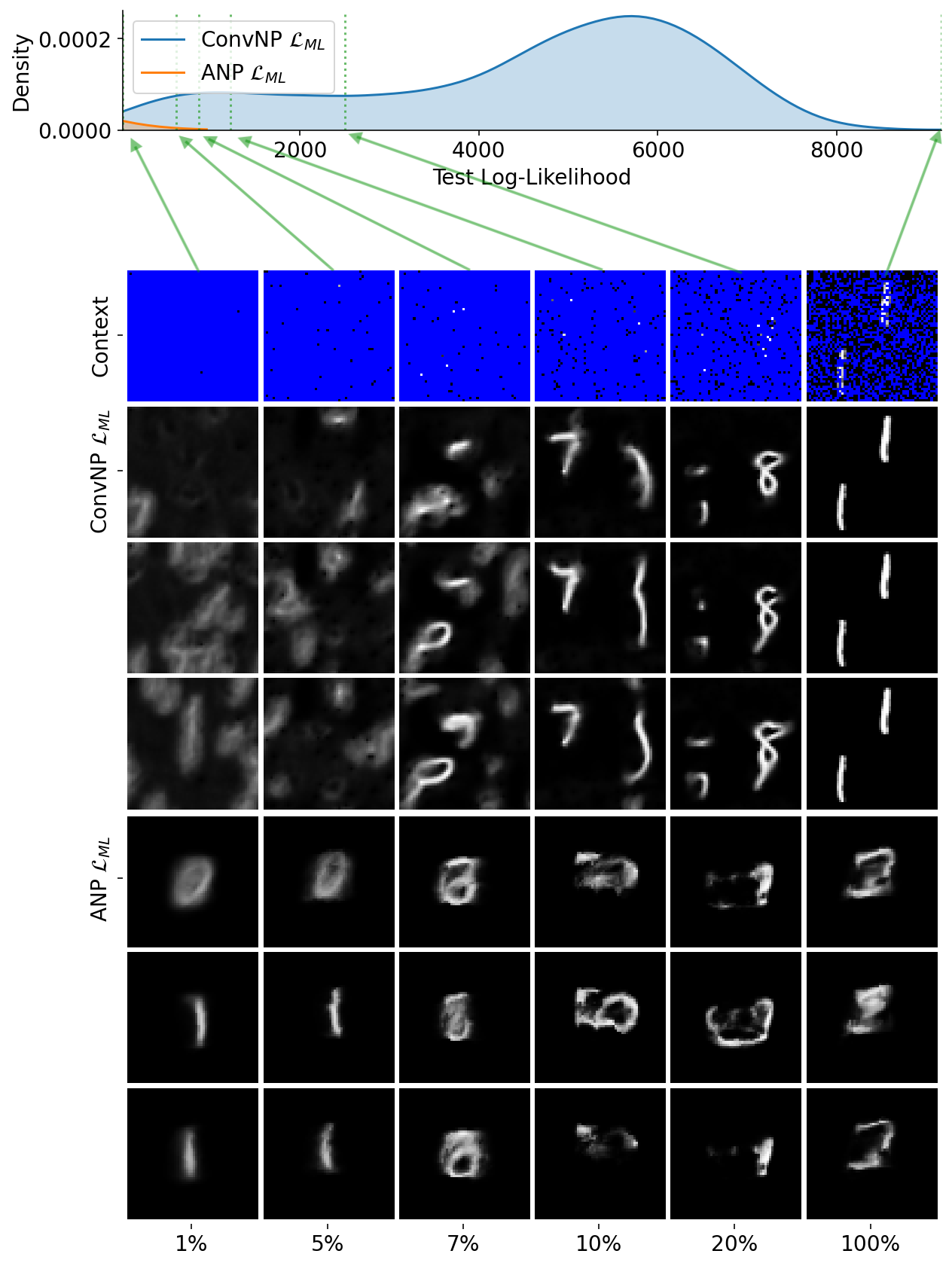}
    \caption{Zero Shot Multi-MNIST}
\end{subfigure}
\begin{subfigure}{0.47\columnwidth}
    \includegraphics[width=\textwidth]{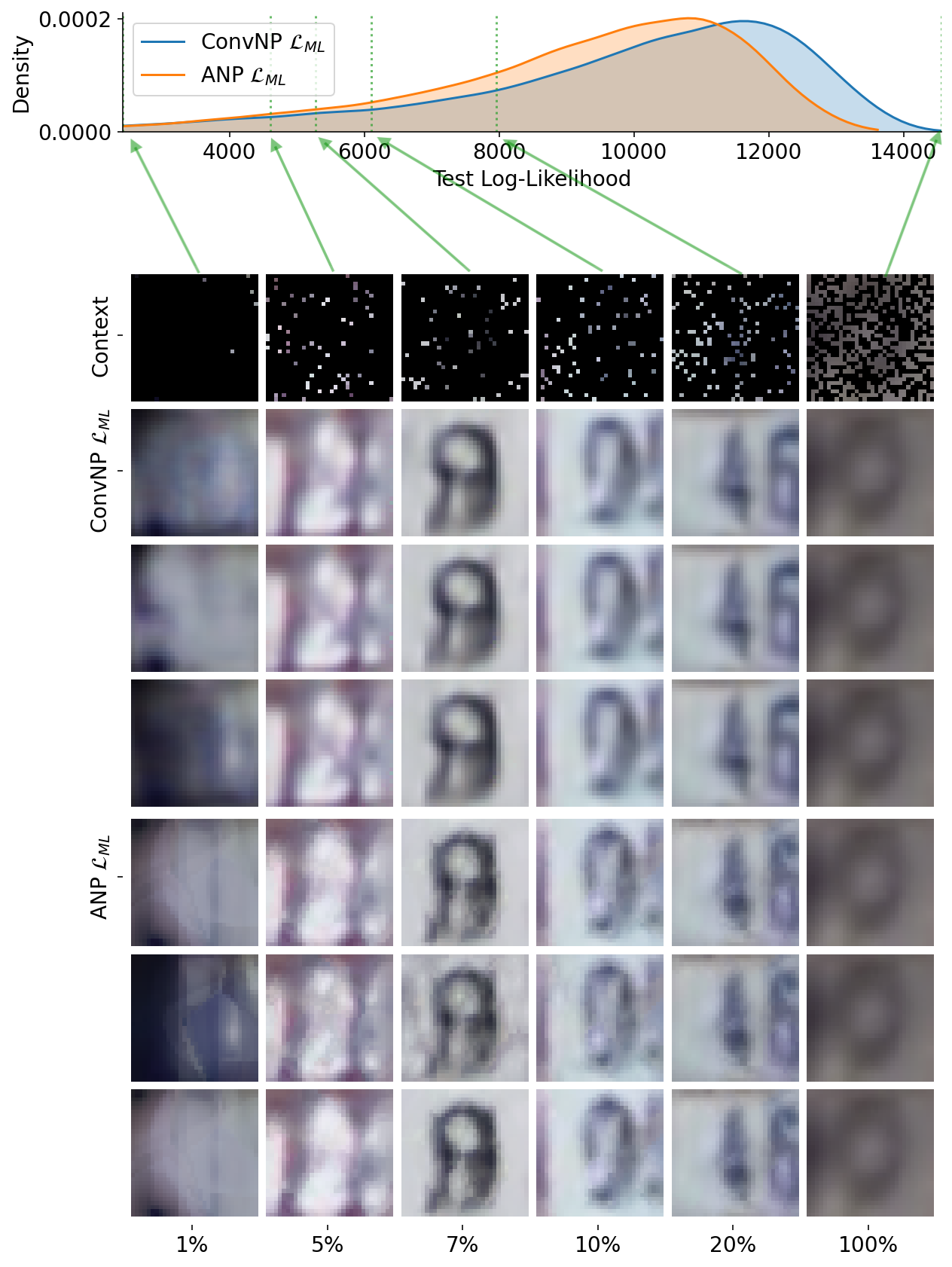}
    \caption{SVHN}
\end{subfigure}
\caption{
Log-likelihood and qualitative samples comparing ConvNP and ANP trained with $\gL_{\mathrm{ML}}$ on (a) MNIST; (b) CelebA; (c) ZSMM; (d) SVHN. For each sub-figure, the top row shows the log-likelihood distribution for both models.
The images below correspond to the context points (top), followed by three samples form ConvNP (mean of the posterior predictive corresponding to different samples from the latent function),  and three samples from ANP. 
Each column corresponds to a given percentile of the ConvNP test log likelihood (as shown by green arrows).
}
\label{fig:samples_anp_convnp_kde}
\end{figure}

\section{Experimental Details on Environmental Data} 
\label{app:details_environmental}

\subsection{Data Details}


\begin{table}[htb] 
\caption{Coordinates for boxes defining the train and test regions. Latitidues are given as (north, south), and longitudes as (west, east).}
\label{table:era5_region_coordinates}
\begin{center}
\begin{tabular}{l@{\hspace{4pt}}
c@{\hspace{4pt}}
c@{\hspace{4pt}}
c@{\hspace{4pt}}
c@{\hspace{4pt}}}
\toprule 
 & 
 \multicolumn{1}{c}{Central (train)} & 
 \multicolumn{1}{c}{Western (test)} & 
 \multicolumn{1}{c}{Eastern (test)} & 
 \multicolumn{1}{c}{Southern (test)}\\
 \midrule 
 Latitudes  & 
 $(52, 46)$ & 
 $(50, 46)$ & 
 $(52, 49)$ & 
 $(46, 42)$ \\
 Longitudes & 
 $(08, 28)$ & 
 $(01, 08)$ & 
 $(28, 35)$ & 
 $(19, 26)$ \\
 \bottomrule
\end{tabular}
\end{center}
\end{table}
\begin{figure}[t]
    \centering
    \includegraphics[width=.8\textwidth]{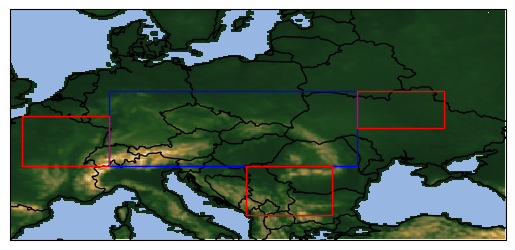}
    \caption{Training (blue) and test (red) regions in Europe, along with orography data from ERA5Land.}
    \label{fig:eu_data_regions}
\end{figure}

ERA5-Land \citep{era5land} contains high resolution information on environmental variables at a 9 km spacing across the globe.\footnote{URL: \href{https://www.ecmwf.int/en/era5-land}{https://www.ecmwf.int/en/era5-land}. Neither the European Commission nor ECMWF is responsible for any use that may be made of the Copernicus Information or data it contains.} 
The data we use contains daily measurements of accumulated precipitation at 11pm and temperature at 11pm at every location, between 1981 and 2020, yielding a total of 14,304 temporal measurements across the spatial grid.
In addition, we provide orography (elevation) values for each location.
We normalize the data such that the precipitation values in the train set have zero mean and unit standard deviation.

We consider the task of predicting daily precipitation $y$, with latitude and longitude as $\vx$. 
In addition, at each context and target location, we provide the model with access to side information in the form of orography (elevation) and temperature values.
We also normalize the orography and temperature values to have zero mean and unit standard deviation.
We choose a large region of central Europe as our train set, and use regions East, West and South of the train set as held out test sets (see \cref{fig:eu_data_regions,table:era5_region_coordinates}). 
At train time, to sample a task, we first sample a random date between 1981 and 2020. 
We then sample a square subregion of grid of values from within the train region (which has size $61 \times 201$). 
We consider two models, one trained on $28 \times 28$ subregions, and another trained on $40 \times 40$ subregions.
During training, each subregion is then split into context and target sets. Context points are randomly chosen with a keep rate $p_{\mathrm{keep}}$ with $p_{\mathrm{keep}} \sim \gU[0, 0.3]$. In this section, we train only on the $\gL_{\mathrm{ML}}$ objective.

\subsection{Gaussian Process Baseline} 
\label{app:environmental_gp_baselines}
We mean-centre the data for each task for the GP before training, and add the mean offset back for evaluation and sampling.
We use an Automatic Relevance Determination (ARD) kernel, with separate factors for latitude/longitude, temperature and orography. 
In detail, let $\vx = (x_{\mathrm{lat}}, x_{\mathrm{lon}})$ denote position, and let $\omega, t$ denote orography and precipitation respectively, and let $\vr \coloneqq (\vx, \omega, t)$. 
Then the kernel is given by
\begin{align*}
    k(\vr, \vr') = \sigma_v^2 k_{l}(\vx, \vx') k_{\omega}(\omega, \omega')  k_{t}(t, t') + \sigma_n^2 \delta(\vr, \vr').
\end{align*}
Here each of $k_l, k_\omega$ and $k_t$ are Mat\'ern--$\frac52$ kernels with separate learnable lengthscales; $\delta(\vr, \vr') = 1$ if $\vr = \vr'$ and $0$ otherwise; and $\sigma_v^2, \sigma_n^2$ are learnable signal and noise variances respectively.
We learn all hyperparameters by maximising the log-marginal likelihood using Scipy's implementation of L-BFGS. 

\paragraph{Transforming the data} As the data is non-negative, we considered applying the transform $y \mapsto \log(\epsilon + y)$ for the GP to model. If $\epsilon = 0$, this would guarantee that the GP would only yield positive samples, which would be physically sensible as precipitation is non-negative. However, this cannot be done as precipitation often takes the value $y=0$, which would lead to the transform being undefined. On the other hand, if $\epsilon > 0$, the GP samples after performing the inverse transform could still predict a precipitation value as low as $- \epsilon$, which is still unphysical. Further, a small value of $\epsilon$ leads to large distortion of the $y$ values in transformed space. In the end, we run all experiments for the GP and NP without log-transforming the data; hence the models have to learn non-negativity.

\subsection{ConvNP Architecture and Training Details}
\label{app:environmental_convnp_details}
As the ERA5-Land dataset is regularly spaced, we use the on-the-grid version of the architecture, without the need for an RBF smoothing layer at the input (see \cref{app:pseudocode}). 
All experiments used a convolutional architecture with 3 residual blocks \citep{he2016deep} for the encoder and 3 residual blocks for the decoder.
Each residual block is defined with two layers of ReLU activations followed by convolutions, each with kernel size 5. 
The first convolution in each block is a standard convolution layer, whereas the second is depthwise separable \citep{chollet2017xception}.
All intermediate convolutional layers have 128 channels, and the latent function $\vz$ has 16 channels.
The networks were trained using ADAM with a learning rate of $10^{-4}$. 
We used 16 channels for the latent function $\vz$, and estimated $\gL_{\mathrm{ML}}$ using 16-32 samples at train time, with batches of 8-16 images. 

We train the models for between 400 and 500 epochs, where each epoch is defined as a single pass through each day in the training set, where at each day, a random subregion of the full $61 \times 201$ central Europe region is cropped. We estimated the predictive density using 2500 samples of $\vz$ during test time. 

\subsection{Prediction and Sampling}
To create \cref{table:era5_loglik_rmse}, at test time we sample $28 \times 28$ subregions from each of the train and test regions. This is done 1000 times.
For the GP, we randomly restart optimisation 5 times per task and use the best hyper-parameters found.
In order to remove outliers where the GP has very poor likelihood, we set a log-likelihood threshold for the GP. 
If the GP has a log-likelihood of less than 0 nats on a particular task, then that task is removed from the evaluation.

We find that to produce high quality samples, we need to train the model on subregions that are roughly as large as the lengthscale of the precipitation process.
Hence we sample from the model trained on $40 \times 40$ subregions in \cref{fig:precipitation_example_task} in the main body.
We show samples from the model trained on both $28 \times 28$ subregions and $40 \times 40$ subregions in \cref{app:environmental_figures}.
We also compare to samples from GPs trained on each context set (no random restarts were used for sampling).

\subsection{Bayesian Optimization}
\label{app:bayesian_optimization}
We use the models described in \cref{app:environmental_convnp_details}, trained on random $28 \times 28$ subregions of the train region, and compare to the GP baselines described in \cref{app:environmental_gp_baselines}.
For the Bayesian optimization experiments in \cref{fig:bayesopt} in the main body, we do not perform random restarts as this was too time-consuming.
We carry out the Bayesian optimization (BayesOpt) experiments in each of the four regions: Central (train), West (test), East (test), and South (test).
Each Bayesian optimization ``episode'' is defined by randomly sub-sampling a day (uniformly at random between 1981 and 2020), then sampling a sub-region from the tested region.
To test the models' spatial generalization capacity (where possible), we sub-sample episodes from each of the four regions with the following sizes: 
\begin{inlinelist}
    \item Central: 42x42,
    \item West: 40x40,
    \item East: 28x28, and
    \item South: 36x36.
\end{inlinelist}

Episodes begin from empty sets $D_c^{(0)} = \empty$, and models sequentially query locations for $t=1, \hdots, 50$. 
Denoting $(\vx^{(t)}, y^{(t)})$ the query location and queried value at iteration $t$, the context set is then updated as $D_c^{(t)} = D_c^{(t-1)} \cup \{(\vx^{(t)}, y^{(t)})\}$. 
Denoting $\vy$ as the complete set of rainfall values in the sub-region, and $\vy^{(t)}$ as the set of queried values at iteration $t$, we can define the \textit{instantaneous regret} as 
$
    r_t = \text{max} (\vy) - \text{max} (\vy_c^{(t)}),
$
and compute the average regret (plotted in \cref{fig:bayesopt} in the main text) at the $t^{\text{th}}$ iteration as $\bar{r}_t = \frac{1}{t} \sum_{i=1}^{t} r_i$.

\section{Additional Figures for Environmental Data}
\label{app:environmental_figures}

\subsection{Predictive density}

\cref{fig:pred_density} displays the predictive densities for precipitation at different locations, conditioned on a context set used for testing.
The density of the ConvNP is estimated using 2500 samples of $\vz$. 
To examine why the ConvNP outperforms the GP in terms of log-likelihood, we plot cases where the ConvNP likelihood is significantly better than the GP likelihood.
We see that this is due to the GP occasionally making very overconfident predictions compared to the ConvNP.
We also see that the ConvNP in a small proportion of cases exhibits very non-Gaussian, asymmetric predictive distribtuions.

\begin{figure}[t]
    \centering
    \begin{subfigure}{0.49\columnwidth}
        \includegraphics[width=\textwidth]{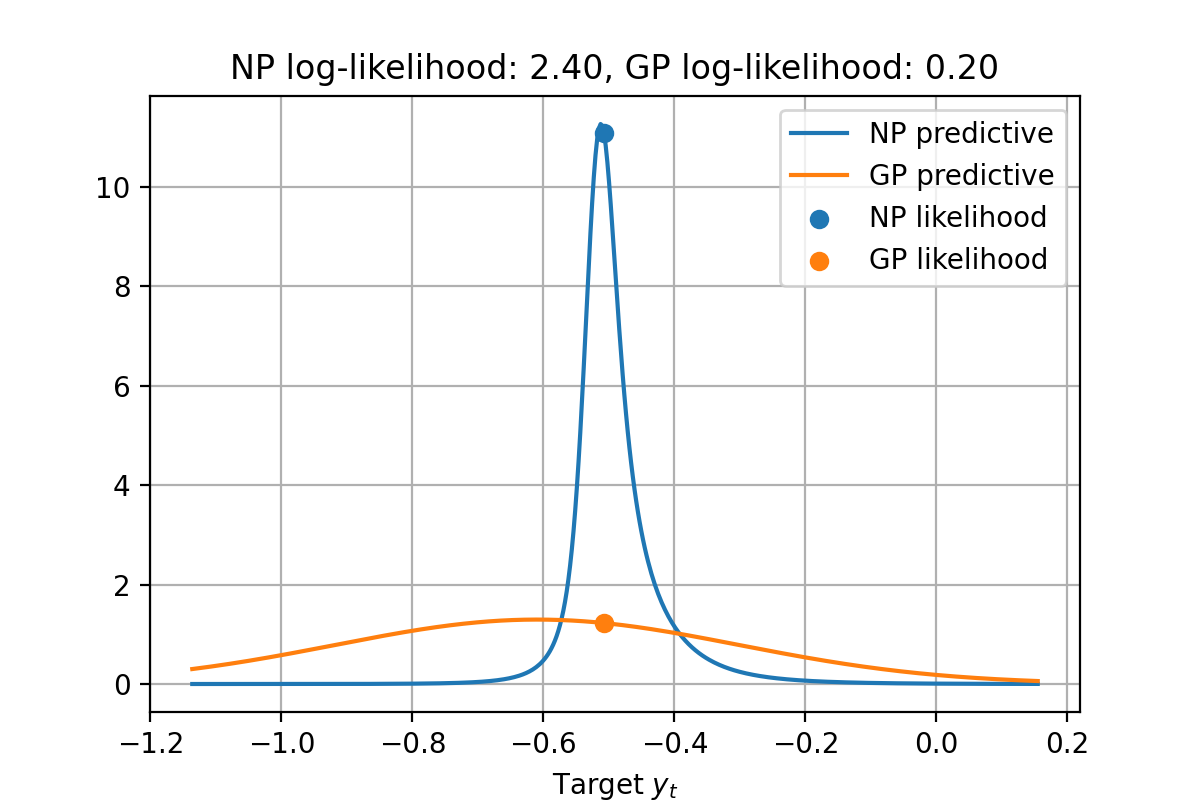}
        \subcaption{}
        \label{fig:pred_density_sparse}
    \end{subfigure}
    \begin{subfigure}{0.49\columnwidth}
        \includegraphics[width=\textwidth]{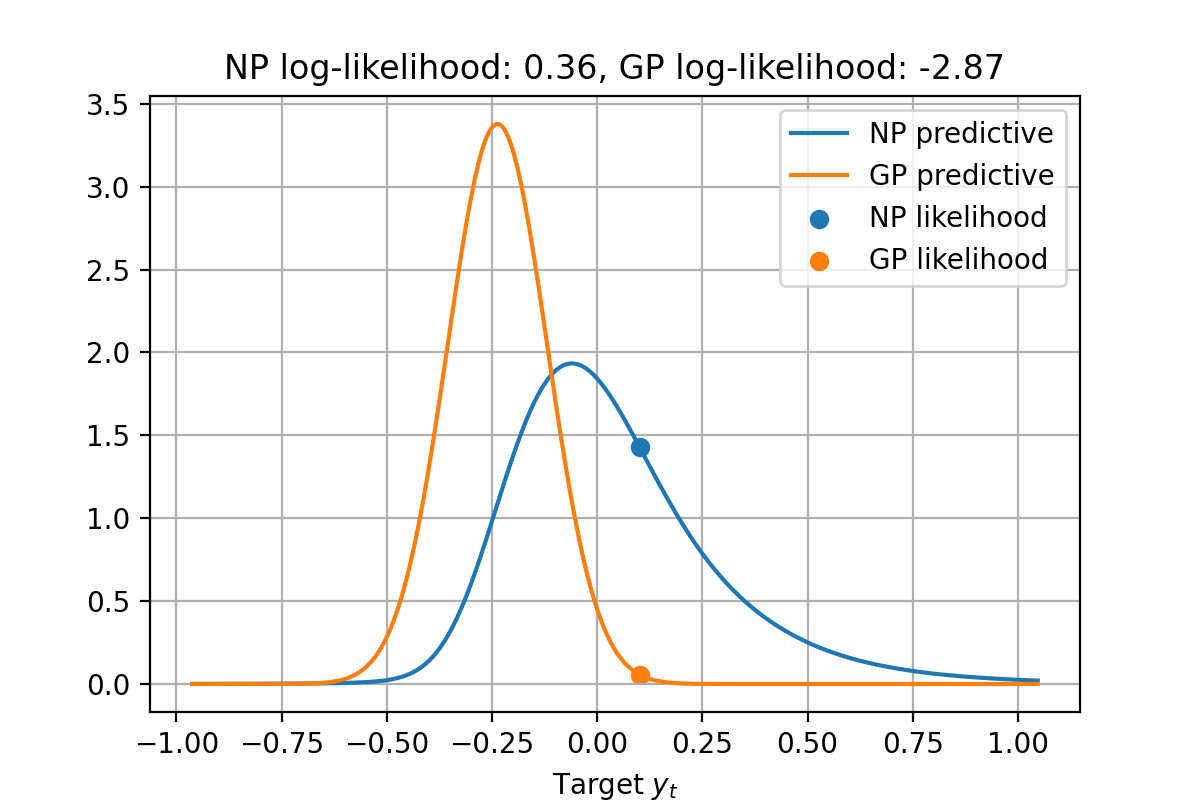}
        \subcaption{}
        \label{fig:pred_density_asymm}
    \end{subfigure}
    \caption{Predictive density at two target points, where the ConvNP significantly outperforms the GP. The orange and blue circles show the likelihood of the ground truth target value under the GP and ConvNP. Note that as the precipitation values are normalized to zero mean and unit standard deviation, $y_t = -0.53$ corresponds to no rain. In \cref{fig:pred_density_sparse}, we see the ConvNP sometimes produces predictions heavily centered on this value, showing it has learned the sparsity of precipitation values. In \cref{fig:pred_density_asymm} we see the ConvNP predictive distribution is sometimes asymmetric with a heavier positive tail, reflecting the non-negativity of precipitation.}
    \label{fig:pred_density}
\end{figure}

\subsection{Additional Samples}

In this section we show additional samples from the model trained on $28 \times 28$ images (\cref{fig:precipitation_sample_28x28_2,fig:precipitation_sample_28x28}) and also on $40 \times 40$ images (\cref{fig:precipitation_sample_40x40_2,fig:precipitation_sample_40x40}). Training on larger images reduces the occurence of blocky artefacts. \Cref{fig:precipitation_example_task} in the main body was trained on $40 \times 40$ images. Note that samples shown here are $61 \times 201$, i.e.~the size of the entire central Europe train region.

\begin{figure}[t]
\vspace*{-0.8em}
\begin{subfigure}{0.24\textwidth}
  \includegraphics[width=\linewidth]{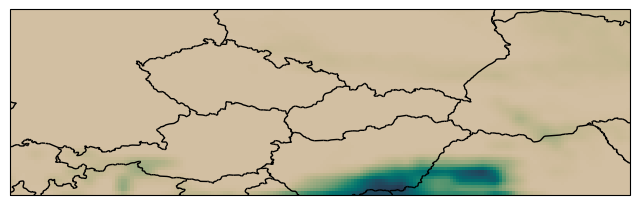}
  \vspace{-1.5em}
  \caption{Ground truth data}
\end{subfigure}\hfil
\begin{subfigure}{0.24\textwidth}
  \includegraphics[width=\linewidth]{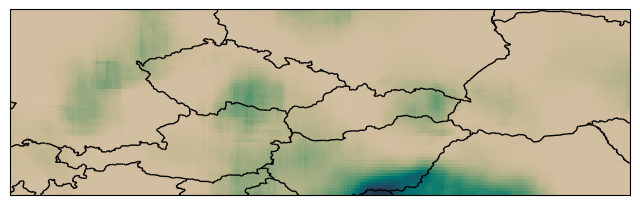}
  \vspace{-1.5em}
  \caption{ConvNP sample 1}
\end{subfigure}\hfil 
\begin{subfigure}{0.24\textwidth}
  \includegraphics[width=\linewidth]{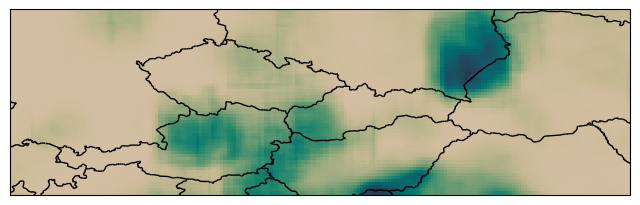}
  \vspace{-1.5em}
  \caption{ConvNP sample 2}
\end{subfigure}\hfil 
\begin{subfigure}{0.24\textwidth}
  \includegraphics[width=\linewidth]{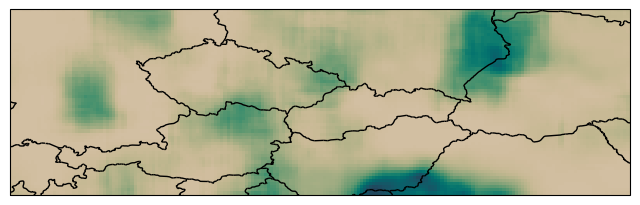}
  \vspace{-1.5em}
  \caption{ConvNP sample 3}
\end{subfigure}\\
\begin{subfigure}{0.24\textwidth}
  \includegraphics[width=\linewidth]{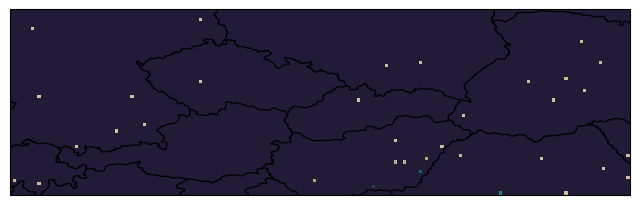}
  \vspace{-1.5em}
  \caption{Context set}
\end{subfigure}\hfil 
\begin{subfigure}{0.24\textwidth}
  \includegraphics[width=\linewidth]{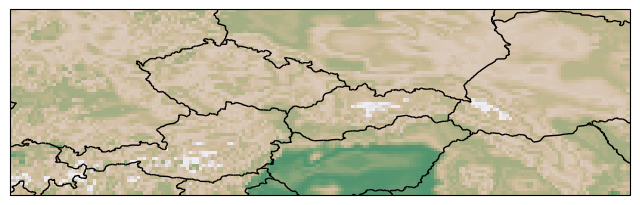}
  \vspace{-1.5em}
  \caption{GP sample 1}
\end{subfigure}\hfil
\begin{subfigure}{0.24\textwidth}
  \includegraphics[width=\linewidth]{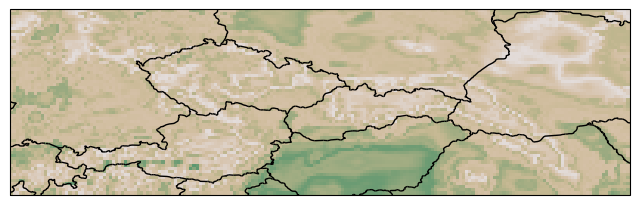}
  \vspace{-1.5em}
  \caption{GP sample 2}
\end{subfigure}\hfil
\begin{subfigure}{0.24\textwidth}
  \includegraphics[width=\linewidth]{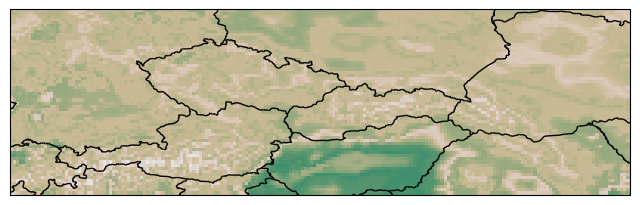}
  \vspace{-1.5em}
  \caption{GP sample 3}
\end{subfigure}
\caption{Samples from the predictive processes overlaid on central Europe, for  a model trained on random $28 \times 28$ subregions of the full $61 \times 201$ central Europe region. Note some blocky artefacts in the ConvNP samples due to training on small subregions. Here the GP has overfit to the orography data, with samples that resemble the orography rather than precipitation.}
\label{fig:precipitation_sample_28x28}
\end{figure}

\begin{figure}[th!]
\vspace*{-0.8em}
\begin{subfigure}{0.24\textwidth}
  \includegraphics[width=\linewidth]{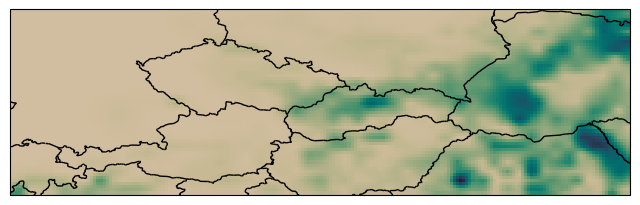}
  \vspace{-1.5em}
  \caption{Ground truth data}
\end{subfigure}\hfil
\begin{subfigure}{0.24\textwidth}
  \includegraphics[width=\linewidth]{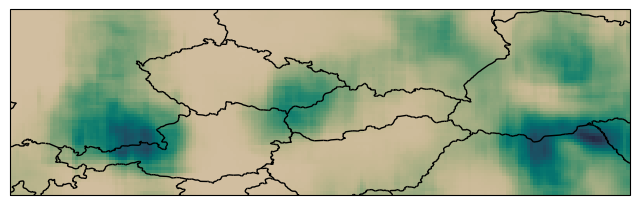}
  \vspace{-1.5em}
  \caption{ConvNP sample 1}
\end{subfigure}\hfil 
\begin{subfigure}{0.24\textwidth}
  \includegraphics[width=\linewidth]{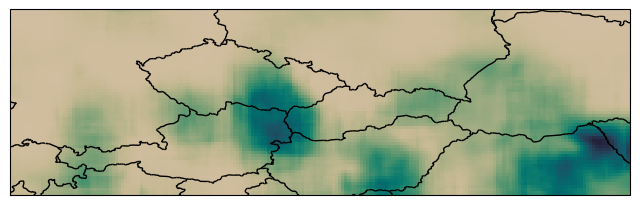}
  \vspace{-1.5em}
  \caption{ConvNP sample 2}
\end{subfigure}\hfil 
\begin{subfigure}{0.24\textwidth}
  \includegraphics[width=\linewidth]{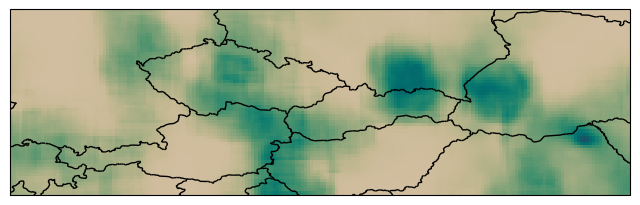}
  \vspace{-1.5em}
  \caption{ConvNP sample 3}
\end{subfigure}\\
\begin{subfigure}{0.24\textwidth}
  \includegraphics[width=\linewidth]{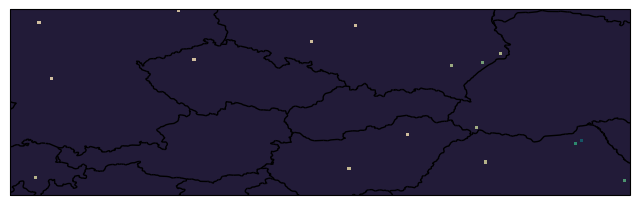}
  \vspace{-1.5em}
  \caption{Context set}
\end{subfigure}\hfil 
\begin{subfigure}{0.24\textwidth}
  \includegraphics[width=\linewidth]{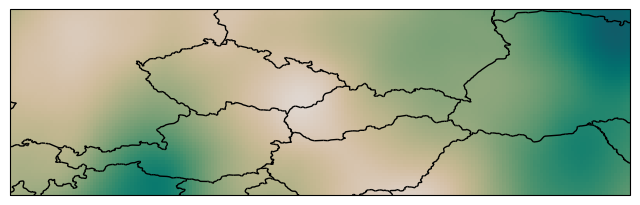}
  \vspace{-1.5em}
  \caption{GP sample 1}
\end{subfigure}\hfil
\begin{subfigure}{0.24\textwidth}
  \includegraphics[width=\linewidth]{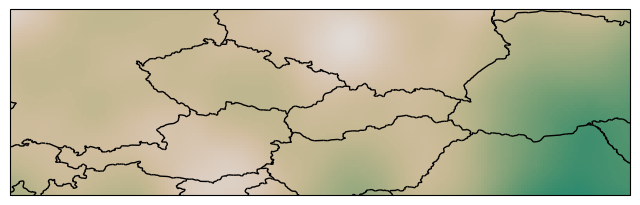}
  \vspace{-1.5em}
  \caption{GP sample 2}
\end{subfigure}\hfil
\begin{subfigure}{0.24\textwidth}
  \includegraphics[width=\linewidth]{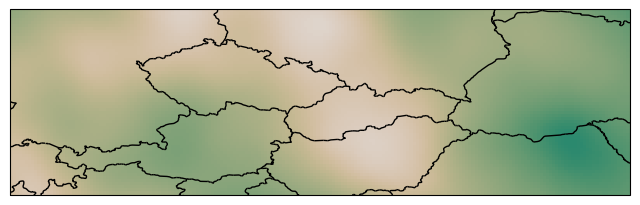}
  \vspace{-1.5em}
  \caption{GP sample 3}
\end{subfigure}
\caption{Samples from the predictive processes overlaid on central Europe, for  a model trained on random $28 \times 28$ subregions of the full $61 \times 201$ central Europe region. Here the GP has learned a lengthscale that is too large.}
\label{fig:precipitation_sample_28x28_2}
\end{figure}

\begin{figure}[t]
\vspace*{-0.8em}
\begin{subfigure}{0.24\textwidth}
  \includegraphics[width=\linewidth]{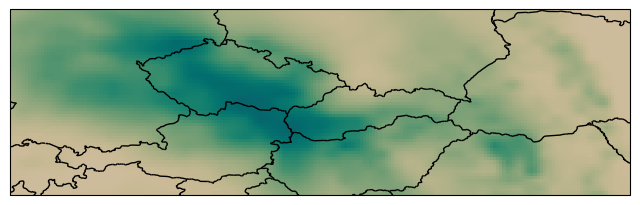}
  \vspace{-1.5em}
  \caption{Ground truth data}
\end{subfigure}\hfil
\begin{subfigure}{0.24\textwidth}
  \includegraphics[width=\linewidth]{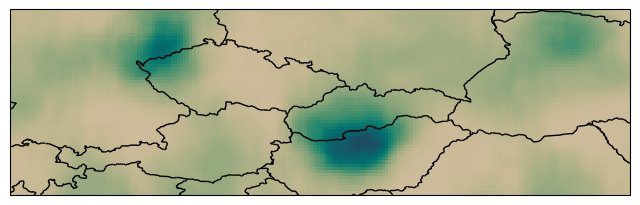}
  \vspace{-1.5em}
  \caption{ConvNP sample 1}
\end{subfigure}\hfil 
\begin{subfigure}{0.24\textwidth}
  \includegraphics[width=\linewidth]{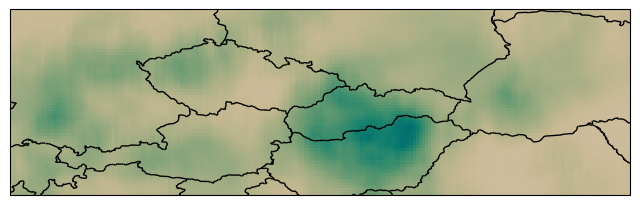}
  \vspace{-1.5em}
  \caption{ConvNP sample 2}
\end{subfigure}\hfil 
\begin{subfigure}{0.24\textwidth}
  \includegraphics[width=\linewidth]{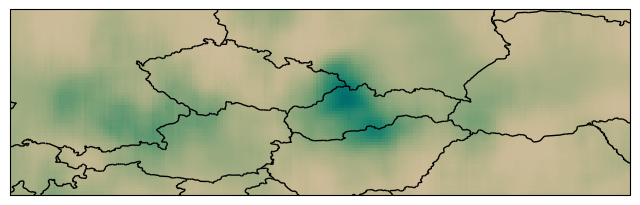}
  \vspace{-1.5em}
  \caption{ConvNP sample 3}
\end{subfigure}\\
\begin{subfigure}{0.24\textwidth}
  \includegraphics[width=\linewidth]{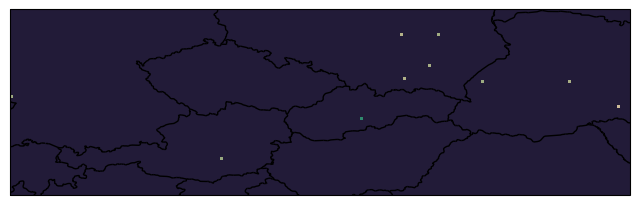}
  \vspace{-1.5em}
  \caption{Context set}
\end{subfigure}\hfil 
\begin{subfigure}{0.24\textwidth}
  \includegraphics[width=\linewidth]{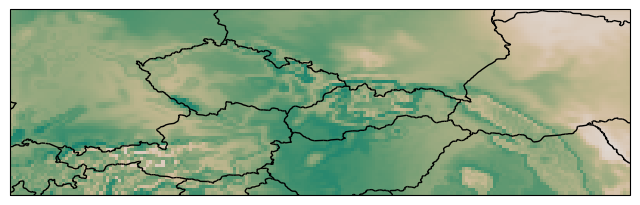}
  \vspace{-1.5em}
  \caption{GP sample 1}
\end{subfigure}\hfil
\begin{subfigure}{0.24\textwidth}
  \includegraphics[width=\linewidth]{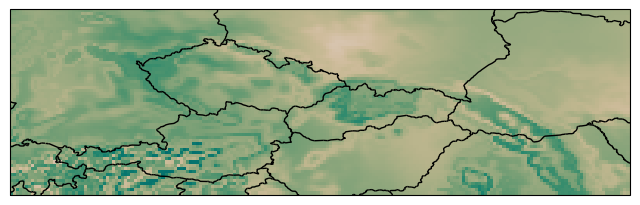}
  \vspace{-1.5em}
  \caption{GP sample 2}
\end{subfigure}\hfil
\begin{subfigure}{0.24\textwidth}
  \includegraphics[width=\linewidth]{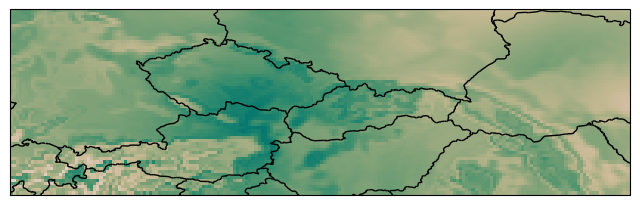}
  \vspace{-1.5em}
  \caption{GP sample 3}
\end{subfigure}
\caption{Samples from the predictive processes overlaid on central Europe, for a model trained on random $40 \times 40$ subregions of the full $61 \times 201$ central Europe region. Here the GP has overfit to the orography data, with samples that resemble the orography rather than precipitation.}
\label{fig:precipitation_sample_40x40}
\end{figure}

\begin{figure}[t]
\vspace*{-0.8em}
\begin{subfigure}{0.24\textwidth}
  \includegraphics[width=\linewidth]{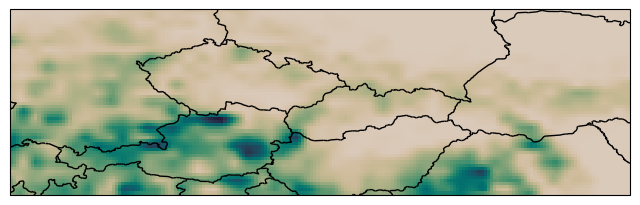}
  \vspace{-1.5em}
  \caption{Ground truth data}
\end{subfigure}\hfil
\begin{subfigure}{0.24\textwidth}
  \includegraphics[width=\linewidth]{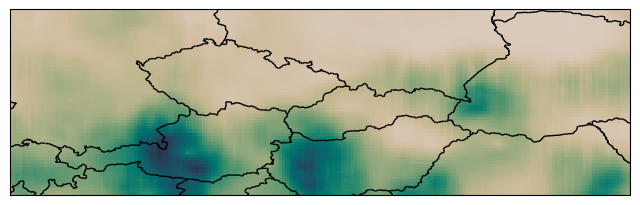}
  \vspace{-1.5em}
  \caption{ConvNP sample 1}
\end{subfigure}\hfil 
\begin{subfigure}{0.24\textwidth}
  \includegraphics[width=\linewidth]{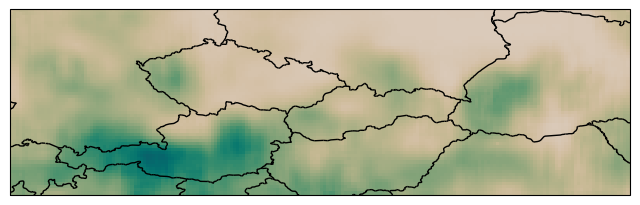}
  \vspace{-1.5em}
  \caption{ConvNP sample 2}
\end{subfigure}\hfil 
\begin{subfigure}{0.24\textwidth}
  \includegraphics[width=\linewidth]{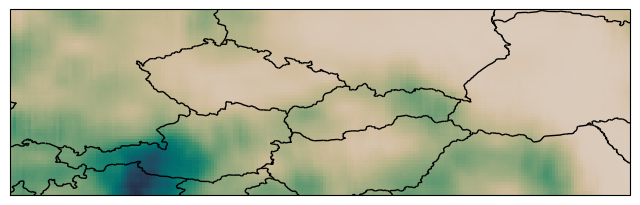}
  \vspace{-1.5em}
  \caption{ConvNP sample 3}
\end{subfigure}\\
\begin{subfigure}{0.24\textwidth}
  \includegraphics[width=\linewidth]{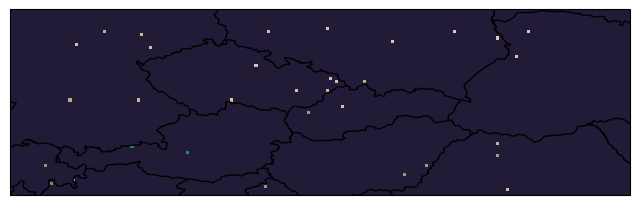}
  \vspace{-1.5em}
  \caption{Context set}
\end{subfigure}\hfil 
\begin{subfigure}{0.24\textwidth}
  \includegraphics[width=\linewidth]{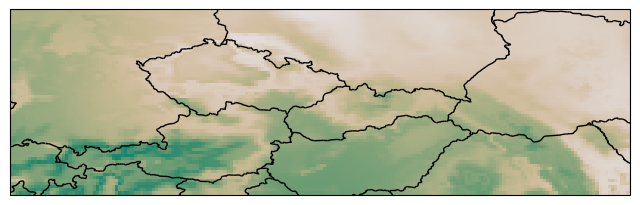}
  \vspace{-1.5em}
  \caption{GP sample 1}
\end{subfigure}\hfil
\begin{subfigure}{0.24\textwidth}
  \includegraphics[width=\linewidth]{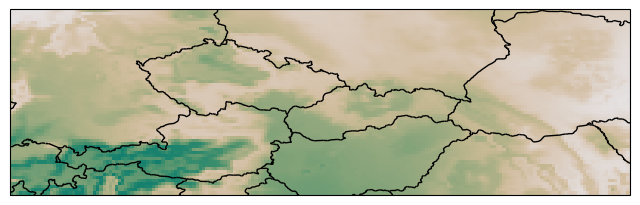}
  \vspace{-1.5em}
  \caption{GP sample 2}
\end{subfigure}\hfil
\begin{subfigure}{0.24\textwidth}
  \includegraphics[width=\linewidth]{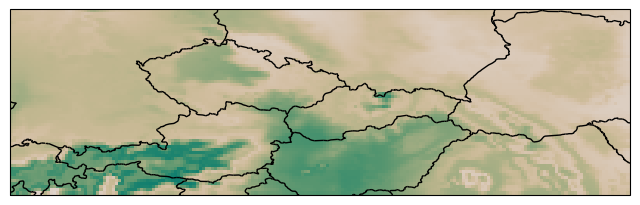}
  \vspace{-1.5em}
  \caption{GP sample 3}
\end{subfigure}
\caption{Samples from the predictive processes overlaid on central Europe, for a model trained on random $40 \times 40$ subregions of the full $61 \times 201$ central Europe region. The GP has again overfit to the orography data.}
\label{fig:precipitation_sample_40x40_2}
\end{figure}

\begin{figure}[t]
\vspace*{-0.8em}
\begin{subfigure}{0.24\textwidth}
  \includegraphics[width=\linewidth]{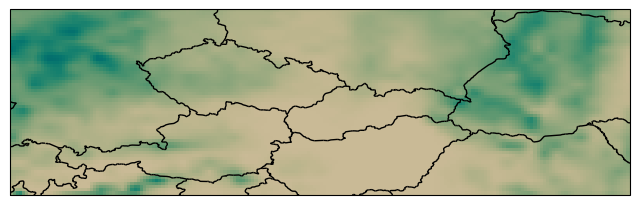}
  \vspace{-1.5em}
  \caption{Ground truth data}
\end{subfigure}\hfil
\begin{subfigure}{0.24\textwidth}
  \includegraphics[width=\linewidth]{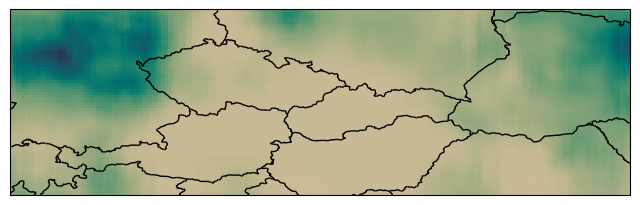}
  \vspace{-1.5em}
  \caption{ConvNP sample 1}
\end{subfigure}\hfil 
\begin{subfigure}{0.24\textwidth}
  \includegraphics[width=\linewidth]{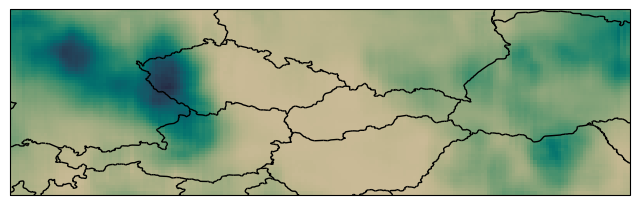}
  \vspace{-1.5em}
  \caption{ConvNP sample 2}
\end{subfigure}\hfil 
\begin{subfigure}{0.24\textwidth}
  \includegraphics[width=\linewidth]{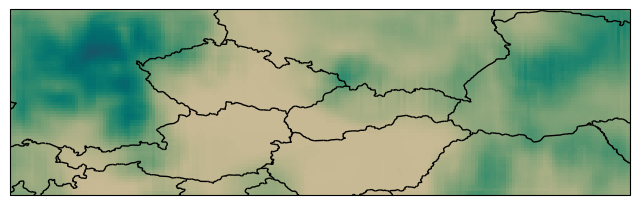}
  \vspace{-1.5em}
  \caption{ConvNP sample 3}
\end{subfigure}\\
\begin{subfigure}{0.24\textwidth}
  \includegraphics[width=\linewidth]{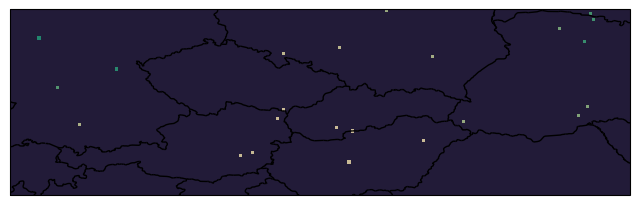}
  \vspace{-1.5em}
  \caption{Context set}
\end{subfigure}\hfil 
\begin{subfigure}{0.24\textwidth}
  \includegraphics[width=\linewidth]{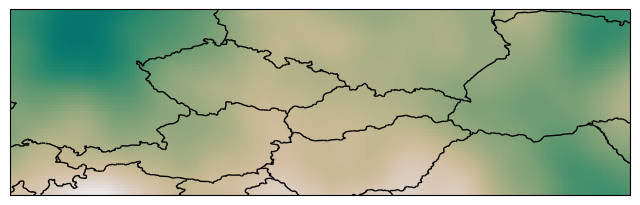}
  \vspace{-1.5em}
  \caption{GP sample 1}
\end{subfigure}\hfil
\begin{subfigure}{0.24\textwidth}
  \includegraphics[width=\linewidth]{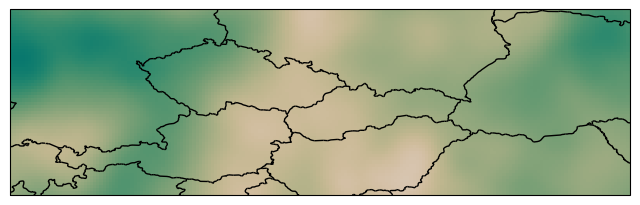}
  \vspace{-1.5em}
  \caption{GP sample 2}
\end{subfigure}\hfil
\begin{subfigure}{0.24\textwidth}
  \includegraphics[width=\linewidth]{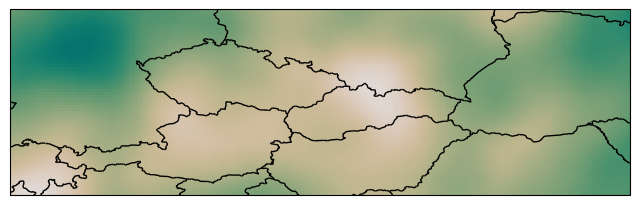}
  \vspace{-1.5em}
  \caption{GP sample 3}
\end{subfigure}
\caption{Samples from the predictive processes overlaid on central Europe, for a model trained on random $40 \times 40$ subregions of the full $61 \times 201$ central Europe region.}
\label{fig:precipitation_sample_40x40_3}
\end{figure}

\end{document}